\documentclass[lettersize,journal]{IEEEtran}
\usepackage{amsmath,amsfonts}
\usepackage{algorithm}
\usepackage{algpseudocode}
\usepackage{array}
\usepackage[caption=false,font=normalsize,labelfont=sf,textfont=sf]{subfig}
\usepackage{textcomp}
\usepackage{stfloats}
\usepackage{url}
\usepackage{verbatim}
\usepackage{graphicx}
\usepackage[capitalise]{cleveref}
\usepackage{cite}
\usepackage{tikz}
\usetikzlibrary{positioning, fit, backgrounds, shapes.arrows}
\usepackage{amsthm}
\usepackage{contour}
\usepackage{pifont}
\usepackage{booktabs}
\usepackage{overpic}
\usepackage{multirow}
\usepackage{array,makecell}
\usepackage{xcolor}
\usepackage[table]{xcolor}
\usepackage{sansmath}
\usepackage{wrapfig}

\usepackage{subcaption}
\usepackage{caption}

\theoremstyle{plain}
\newtheorem{theorem}{Theorem}

\newtheorem{corollary}[theorem]{Corollary}

\theoremstyle{remark}
\newtheorem*{interpretation}{Interpretation}
\newtheorem*{takeaway}{Takeaway}

\usepackage{pgfplots}
\pgfplotsset{
    compat=1.17,
    every axis/.append style={
        title={\small\sffamily #1}, 
        xlabel={\small\sffamily #1}, 
        ylabel={\small\sffamily #1}, 
        legend style={font=\small\sffamily}, 
        tick label style={font=\small\sffamily}, 
        title style={font=\small\sffamily}, 
        xticklabel style={font=\footnotesize\sansmath\sffamily}, 
        yticklabel style={font=\footnotesize\sansmath\sffamily} 
    },
    every axis label/.append style={font=\small\sffamily}
}

\definecolor{tabfirst}{rgb}{1, 0.7, 0.7} 
\definecolor{tabsecond}{rgb}{1, 0.85, 0.7} 
\definecolor{tabthird}{rgb}{1, 1, 0.7} 
\definecolor{customdarkgreen}{rgb}{0.0, 0.35, 0.0}

\begin{document}

\title{Unlearning the Unpromptable: Prompt-free \\Instance Unlearning in Diffusion Models}

\author{%
Kyungryeol Lee$^*$, 
Kyeonghyun Lee$^*$, 
Seongmin Hong$^*$, 
Byung Hyun Lee,
and Se Young Chun
\thanks{This work was supported in part by Institute of Information \& communications Technology Planning \& Evaluation (IITP) grants funded by the Korea government(MSIT) [NO.RS-2021-II211343, Artificial Intelligence Graduate School Program (Seoul National University)], (No.RS-2025-02314125, Effective Human-Machine Teaming With Multimodal Hazy Oracle Models). Also, the authors acknowledged the financial support from the BK21 FOUR program of the Education and Research Program for Future ICT Pioneers, Seoul National University. (Corresponding author: Se Young Chun)}
\thanks{KR Lee, KH Lee and BH Lee are with the Department of Electrical and Computer Engineering (ECE), Seoul National University (SNU), Seoul, 08826, Republic of Korea. (e-mail: \{kr.lee, litiphysics, ldlqudgus756\}@snu.ac.kr)}
\thanks{SM Hong is with Institute of New Media and Communications (INMC), SNU, Seoul, 08826, Republic of Korea. (e-mail: smhongok@snu.ac.kr)}
\thanks{SY Chun is with ECE, INMC, IPAI, SNU, Seoul, 08826, Republic of Korea. (e-mail: sychun@snu.ac.kr)}
\thanks{$^*$ KR Lee, KH Lee and SM Hong are equally contributed.}
}



\maketitle

\begin{abstract}
Machine unlearning aims to remove specific outputs from trained models, often at the concept level, such as forgetting all occurrences of a particular celebrity or filtering content via text prompts. However, many undesired outputs, such as an individual’s face or generations culturally or factually misinterpreted, cannot often be specified by text prompts. We address this underexplored setting of instance unlearning for outputs that are undesired but unpromptable, where the goal is to forget target outputs selectively while preserving the rest. To this end, we introduce an effective surrogate-based unlearning method that leverages image editing, timestep-aware weighting, and gradient surgery to guide trained diffusion models toward forgetting specific outputs. Experiments on conditional (Stable Diffusion 3) and unconditional (DDPM-CelebA) diffusion models demonstrate that our prompt-free method uniquely unlearns unpromptable outputs, such as faces and culturally inaccurate depictions, with preserved integrity, unlike prompt-based and prompt-free baselines. Our proposed method would serve as a practical hotfix for diffusion model providers to ensure privacy protection and ethical compliance.
\end{abstract}

\begin{IEEEkeywords}
Machine unlearning, Diffusion Models, Prompt-free.
\end{IEEEkeywords}

\section{Introduction}
\IEEEPARstart{G}{enerative} models, particularly diffusion models (DMs), have become widely adopted in both image~\cite{nichol2022glide,chang2023muse,zhang2023adding,peebles2023scalable,esser2024scaling} and video \cite{hongcogvideo,blattmann2023stable,guoanimatediff,chen2024videocrafter2, wan2025wan} applications due to their high-quality and diverse outputs. Despite their success, it is becoming more important to control what these models should or should not generate, especially in situations involving ethics or privacy~\cite{dalle2preview2022,rando2022red,hunter2023ai,schramowski2023safe,somepalli2023diffusion,liu2025privacy}. Accordingly, prior efforts such as dataset curation \cite{rombach2022stable2.0, esser2024scaling} and post-generation filtering \cite{rando2022red, laborde2020nsfw} have been explored, but have remained costly ~\cite{rombach2022stable2.0} and easily circumvented~\cite{rando2022red}. These challenges have led to the concept of \emph{machine unlearning}, removing specific information by fine-tuning trained models~\cite{gandikota2023erasing, kumari2023ablating, fan2024salun, chundawat2023zero}.

To address the challenges, a series of prior arts have significantly enriched prompt-based unlearning approaches \cite{zhang2024forget, huang2023receler, gong2024reliable, buierasing, zhang2024defensive} to remove prompt-induced target concepts while preserving model integrity \cite{fan2024salun, lu2024mace, lyu2024one, park2024direct, lee2025concept,yoonsafree,thakral2025fine,fantastic}.
Specifically, these methods typically involve constructing prompts that reliably induce the generation of the target concepts, and then guiding the model to generate nonsensitive content from those prompts. Through this process, they shift the distribution of generated images associated with the target concepts toward distributions with benign or neutral concepts.

However, their reliance on prompts or conceptual descriptors to guide removal could inherently restrict their applicability~\cite{heng2023selective, lu2024mace, park2024direct,tsai2024ring,phamcircumventing,zhang2023generate}, especially in fine-grained instance-level erasing. As shown in Fig.~\ref{fig:fig1}, instructing an unconditional DM that lacks the notion of prompts to forget an unpromptable specific instance remains fundamentally challenging. In conditional DMs, removing an entire prompt because some of its outputs are undesirable may unnecessarily eliminate valid generations. In practice, a large number of targets we wish to remove are often unpromptable—that is, they cannot be distinguished using any prompt—making prompt-based unlearning ineffective for such fine-grained cases. Although prompt-free approaches exist, they often struggle to maintain model integrity, resulting in unintended degradation or excessive distortion of the model~\cite{golatkar2020eternal, wu2024erasediff, silas2024data}.

Beyond technical considerations, prompt-free unlearning is also crucial from both business and legal perspectives. In practice, many image generative model services screen user prompts and block problematic requests before the model produces an output, reducing the need for prompt-based unlearning. Therefore, prompt-free instance unlearning becomes important to the service providers in order to remove sensitive contents directly at the instance level without reliance on prompts. Moreover, legal frameworks such as the General Data Protection Regulation (GDPR)~\cite{EU2016GDPR} enshrine the “right to be forgotten,” requiring the removal of personal information such as identifiable human faces. This places strong obligations on service providers to support unlearning of unpromptable data points, making fine-grained, prompt-free unlearning not only a technical challenge but also an operational and regulatory imperative.

In this work, we propose a simple yet effective approach for \textbf{prompt-free instance unlearning}. We construct surrogate examples by applying off-the-shelf image editing methods to alter the identity of the forgetting target or remove undesired attributes while preserving overall structure. To balance forgetting and remembering~\cite{zhang2024forgetting, wang2023machine}, we leverage gradient surgery~\cite{yu2020gradient, huang2024learning} to resolve their conflicting gradients, combined with a timestep-aware loss weighting scheme tailored to the nature of diffusion models. This process is explicitly designed to avoid any reliance on prompts or concept-level supervision. Our method applies to both unconditional (trained with CelebA~\cite{karras2018progressive}) and conditional DMs (Stable Diffusion 3~\cite{esser2024scaling}), enabling precise unlearning of unpromptable targets while preserving model integrity.
Our contributions can be summarized as follows:
\begin{itemize}
    \item \textbf{Problem.} We investigate a novel unlearning target: undesired yet unpromptable outputs that arise in not only unconditional but also conditional diffusion models.
    \item \textbf{Challenge.} We demonstrate that such cases fall outside the scope of existing prompt-based unlearning methods.
    \item \textbf{Our Solution.} We introduce a prompt-free instance unlearning method that successfully forgets target outputs while preserving model integrity in both unconditional and conditional DMs.
\end{itemize}

\begin{figure*}[htbp]
  \centering
\begin{tikzpicture}[
    img/.style={inner sep=0pt, anchor=north},
    label/.style={anchor=north},
    textblock/.style={font=\small, align=center, text width=4.2cm},
    yA/.store in=\yA, yB/.store in=\yB, yC/.store in=\yC, yD/.store in=\yD, yE/.store in=\yE
]

\def\yA{-0.2}  
\def\yB{-2.5}  
\def\yC{-2.7}  
\def\yD{-5.2}  
\def\yE{0.0}   

\begin{scope}[shift={(0,0)}]
  \node[textblock, align=left] at (2.1,\yA-1.1) {\color{black}Uncond. DM \\(DDPM~\cite{ho2020denoising}-\\CelebA)};
  \node[textblock, align=left] at (2.1,\yC-1.1) {\color{black}Cond. DM \\ (SD3~\cite{esser2024scaling})\\Prompt:\\\textit{``Ireland flag''}};
\end{scope}

\begin{scope}[shift={(2.1,0)}]
  \node[img] (p1a) at (1.1,\yA) {\includegraphics[width=2.3cm]{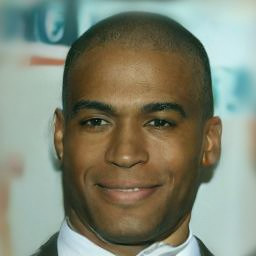}};
  \node[img] (p1b) at (3.4,\yA) {\includegraphics[width=2.3cm]{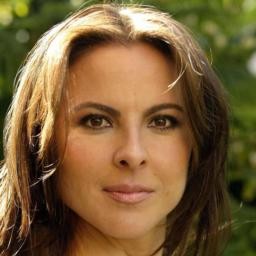}};
  \node[fill=gray!30, fill opacity=0.8, text opacity=1, inner sep=1pt] at ([xshift=0cm,yshift=-2.1cm]p1a.north) {\small To preserve};
  \node[fill=gray!30, fill opacity=0.8, text opacity=1, inner sep=1pt] at ([xshift=0cm,yshift=-2.1cm]p1b.north) {\small To forget};

  \node[img] (p2a) at (1.1,\yC-0.1) {\includegraphics[width=2.3cm]{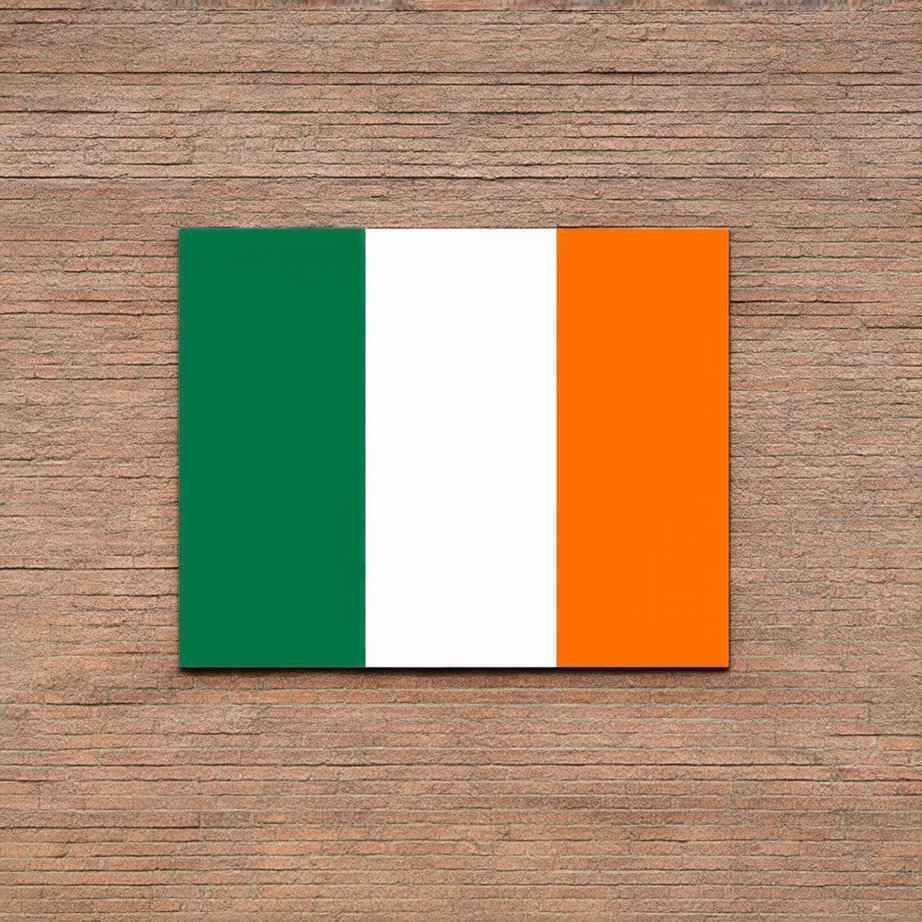}};
  \node[img] (p2b) at (3.4,\yC-0.1) {\includegraphics[width=2.3cm]{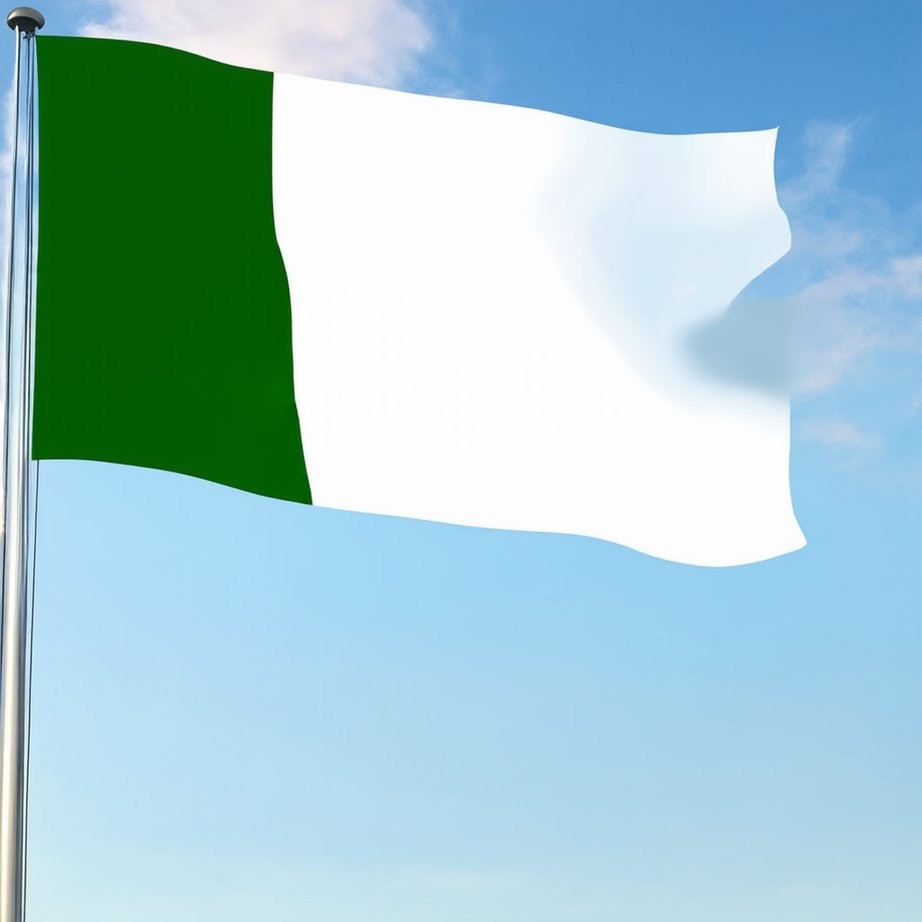}};
  \node[fill=gray!30, fill opacity=0.8, text opacity=1, inner sep=1pt] at ([xshift=0cm,yshift=-2.1cm]p2a.north) {\small To preserve};
  \node[fill=gray!30, fill opacity=0.8, text opacity=1, inner sep=1pt] at ([xshift=0cm,yshift=-2.1cm]p2b.north) {\small To forget};
\end{scope}

\begin{scope}[shift={(7.7,0.2)}]
  \node[textblock, draw=gray, thick, rounded corners, inner sep=5pt, text width=4.24cm, minimum height=2.1cm, fill=gray!10] at (2.25, \yA-1.3) 
    {\normalsize\color{red!60!black} Prompt-based unlearning\\is NOT available};
\end{scope}
\begin{scope}[shift={(7.7,0)}]

  \node[img] (c2a) at (1.1,\yC-0.1) {\includegraphics[width=2.3cm]{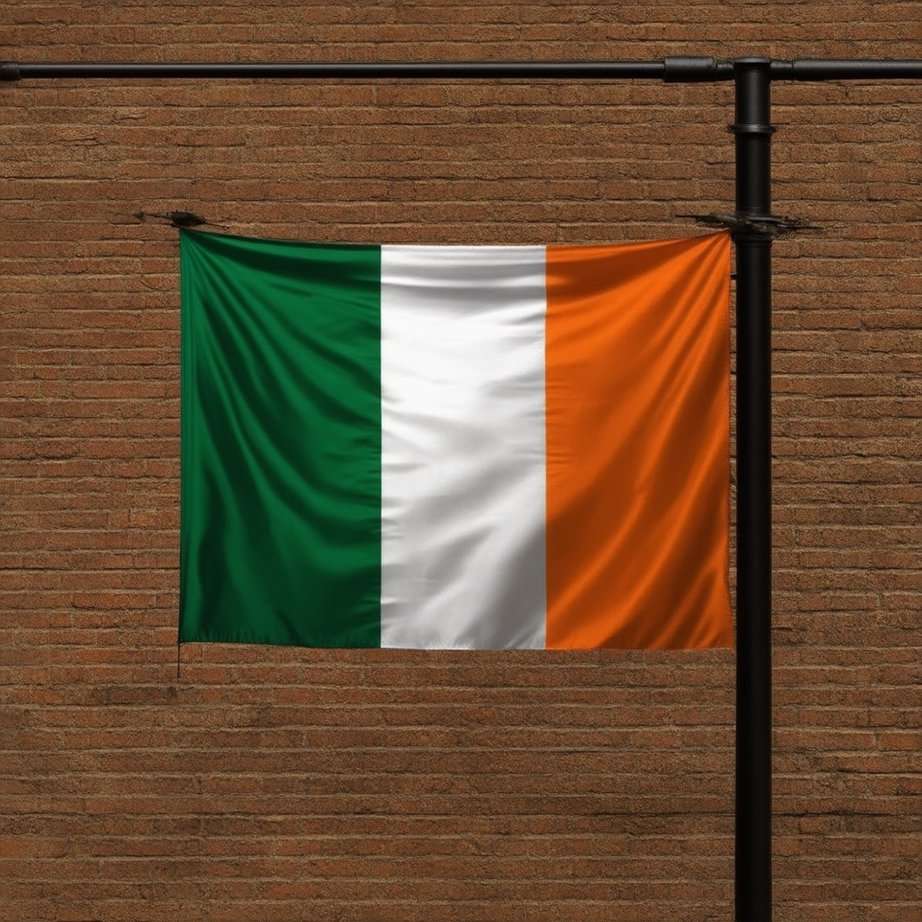}};
  \node[img] (c2b) at (3.4,\yC-0.1) {\includegraphics[width=2.3cm]{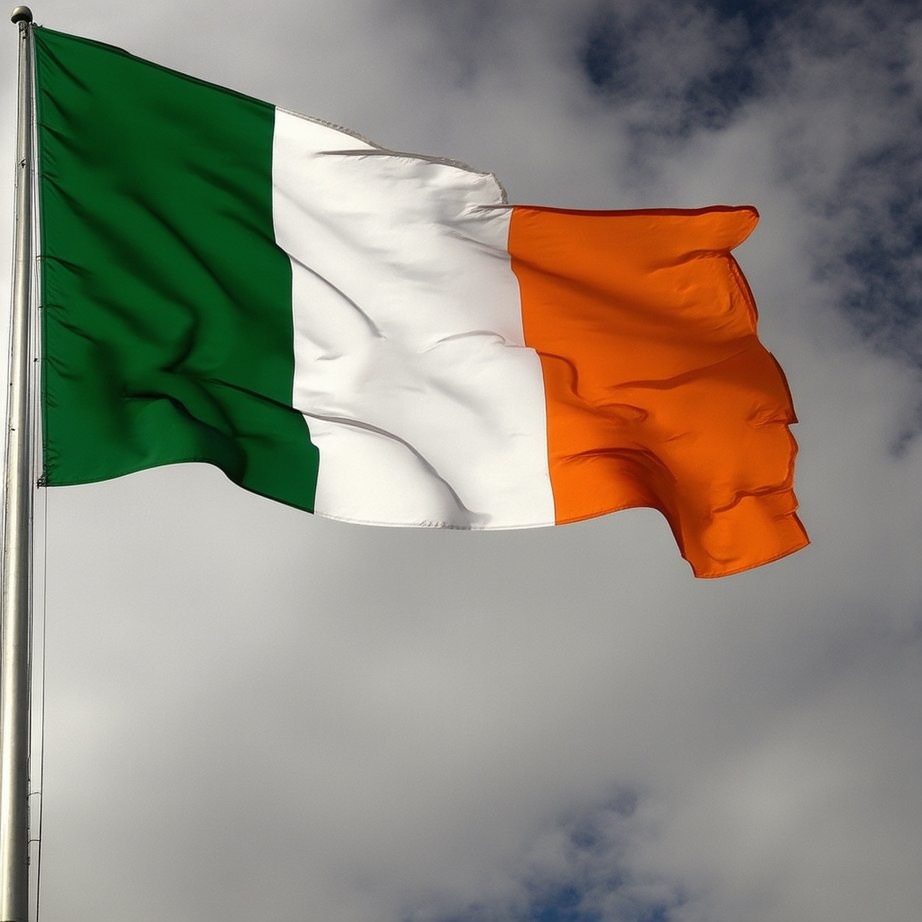}};
  \node[fill=gray!30, fill opacity=0.8, text opacity=1, inner sep=1pt] at ([xshift=0cm,yshift=-2.1cm]c2a.north) {\small\color{red} Altered!};
  \node[fill=gray!30, fill opacity=0.8, text opacity=1, inner sep=1pt] at ([xshift=0cm,yshift=-2.1cm]c2b.north) {\small Forgotten};
  \node[textblock] at (2.25,\yB-0.3) {\large\color{white}\setlength{\fboxsep}{2pt}\colorbox{red!60!black}{or FAILS in preserving}};
  
\end{scope}

\begin{scope}[shift={(13.1,0)}]

  \node[img] (r1a) at (1.1,\yA) {\includegraphics[width=2.3cm]{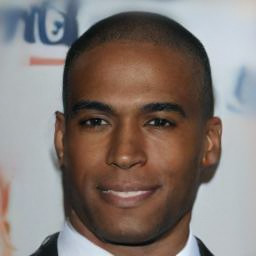}};
  \node[img] (r1b) at (3.4,\yA) {\includegraphics[width=2.3cm]{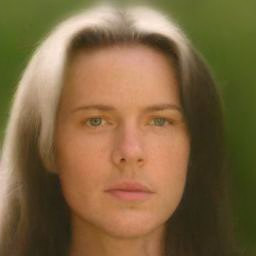}};
  \node[fill=gray!30, fill opacity=0.8, text opacity=1, inner sep=1pt] at ([xshift=0cm,yshift=-2.1cm]r1a.north) {\small Preserved};
  \node[fill=gray!30, fill opacity=0.8, text opacity=1, inner sep=1pt] at ([xshift=0cm,yshift=-2.1cm]r1b.north) {\small Forgotten};

  \node[img] (r2a) at (1.1,\yC-0.1) {\includegraphics[width=2.3cm]{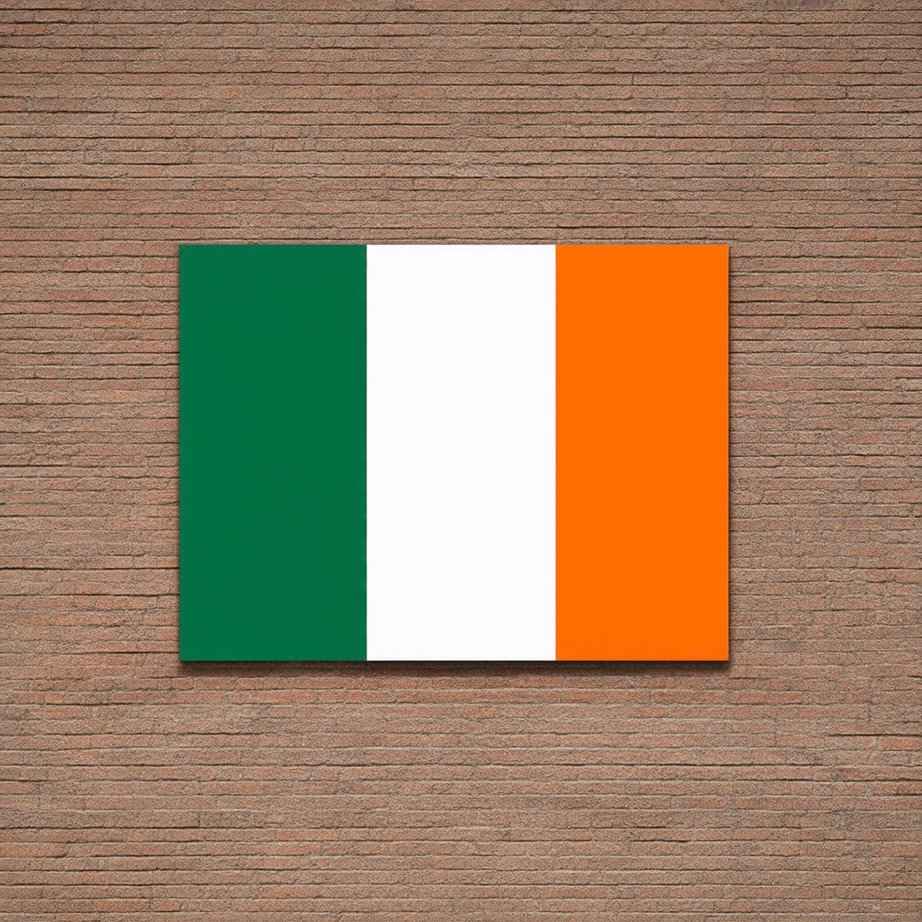}};
  \node[img] (r2b) at (3.4,\yC-0.1) {\includegraphics[width=2.3cm]{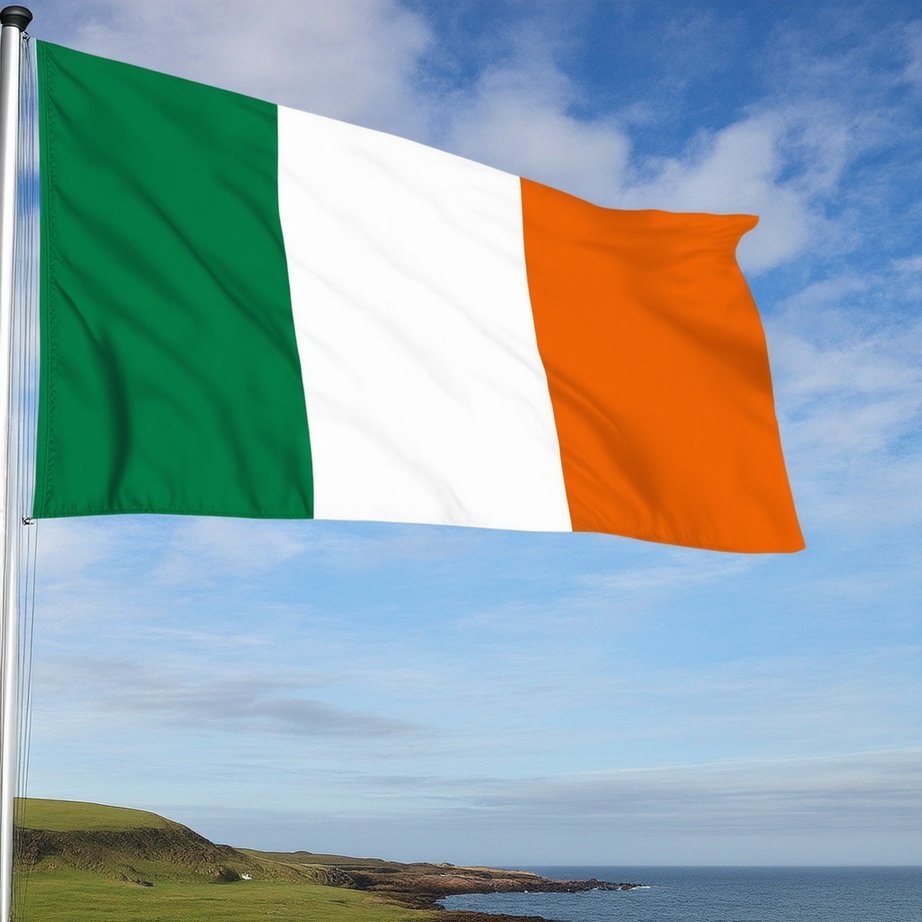}};
  \node[fill=gray!30, fill opacity=0.8, text opacity=1, inner sep=1pt] at ([xshift=0cm,yshift=-2.1cm]r2a.north) {\footnotesize Preserved};
  \node[fill=gray!30, fill opacity=0.8, text opacity=1, inner sep=1pt] at ([xshift=0cm,yshift=-2.1cm]r2b.north) {\footnotesize Forgotten};

    \node[textblock] at (2.25,\yE-0.2) {\large\color{white}\setlength{\fboxsep}{2pt}\colorbox{green!60!black}{SUCCEED}};

  \node[textblock] at (2.25,\yB-0.3) {\large\color{white}\setlength{\fboxsep}{2pt}\colorbox{green!60!black}{SUCCEED}};
\end{scope}

\node at (3.9,0.5) {
  \begin{tikzpicture}[baseline]
    \node[anchor=west, inner sep=0pt] at (0,-0.1) {\includegraphics[width=1.0cm]{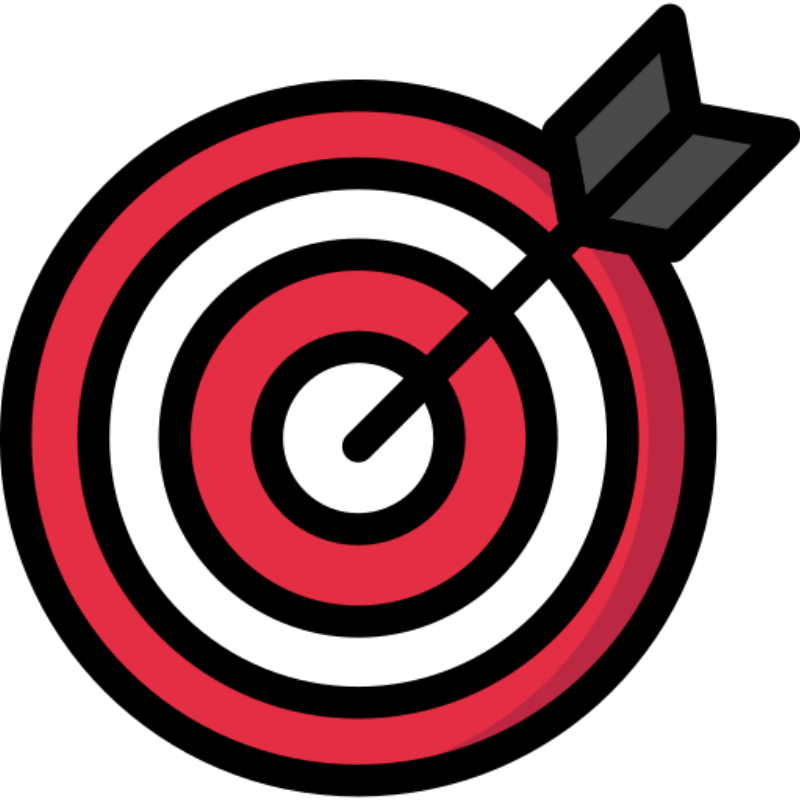}};
    \node[anchor=west, align=left] at (1.0,0) {\itshape\shortstack[l]{Let’s selectively \textbf{forget} specific\\ instance and \textbf{preserve} the others}};
  \end{tikzpicture}
};

\node at (9.8,0.5) {
  \begin{tikzpicture}[baseline]
    \node[anchor=west, inner sep=0pt] at (0,0) {\includegraphics[width=1.0cm]{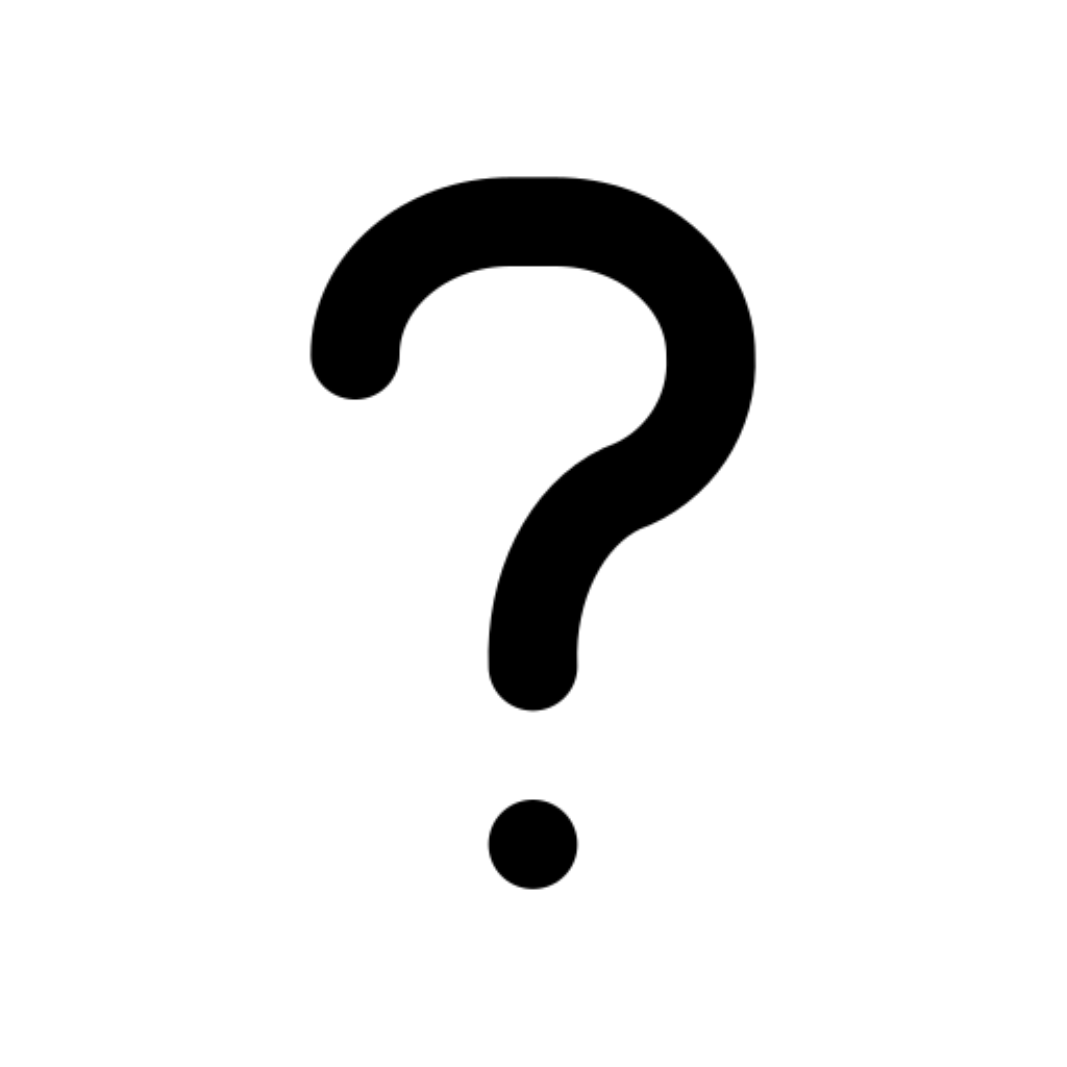}};
    \node[anchor=west, align=left] at (0.8,0) {\itshape\shortstack[l]{What if the targeting outputs \\ are \textbf{unpromptable}?}};
  \end{tikzpicture}
};

\node at (14.8,0.5) {
  \begin{tikzpicture}[baseline]
    \node[anchor=west, inner sep=0pt] at (0,0) {\includegraphics[width=1.0cm]{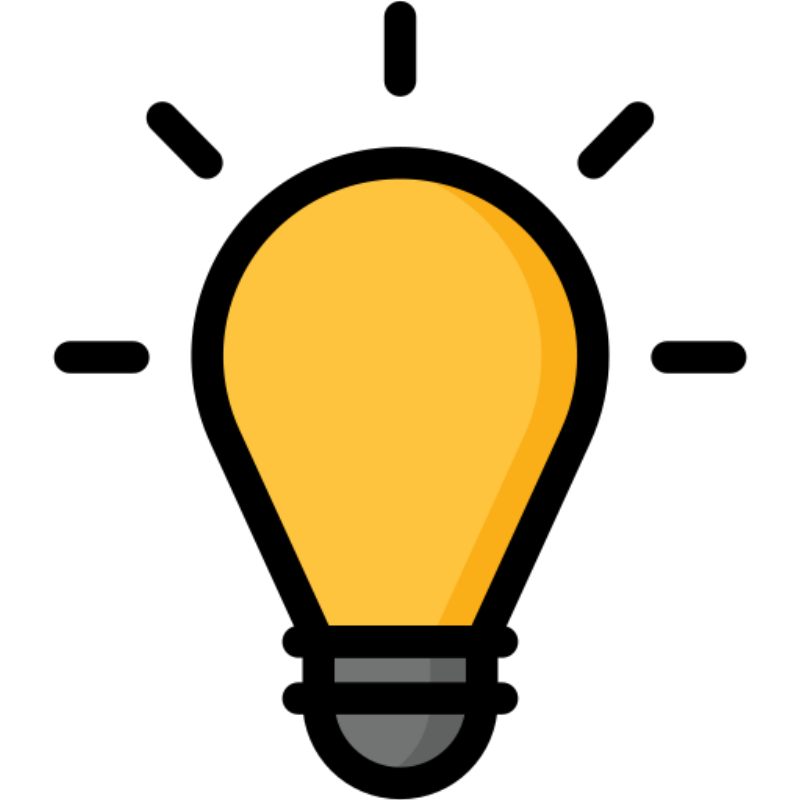}};
    \node[anchor=west, align=left] at (1.0,0) {\itshape\shortstack[l]{\textbf{Prompt-free} \\ \textbf{\& Surrogate-based}}};
  \end{tikzpicture}
};

\end{tikzpicture}
    \caption{Challenge and our solution for instance unlearning in diffusion models.}
    \label{fig:fig1}
\end{figure*}

\section{Problem Formulation}
\label{sec:background}
\subsection{Diffusion Models}
\label{subsec:back_gen_model}

Diffusion models (DMs) generate samples by adding noise in a forward process and learning to reverse it~\cite{sohl2015deep,ho2020denoising,song2020denoising}. Let $D\subset\mathbb{R}^d$ be the dataset that the diffusion model is trained on. For a real sample \(x_0\in \mathbb{R}^{d}\), the forward process in DDIM~\cite{song2020denoising} maps $x_0$ to $x_t\in\mathbb{R}^d$ at diffusion timestep $t\in\{1,\cdots,T\}$:
\begin{equation}\label{eq:diffusion_forward}
    x_t = \sqrt{\bar{\alpha}_t} x_0 + \sqrt{1 - \bar{\alpha}_t}\epsilon,
\end{equation}
where \(\epsilon \sim \mathcal{N}(0,I)\) and \(\bar{\alpha}_t=\prod_{s=1}^{t}{\alpha_s}\) is the cumulative product of the noise scheduler \(\{\alpha_s\}_{s=1}^{T}\). 
The denoising network \(\epsilon_{\theta}:\mathbb{R}^{d}\times\{1,\cdots,T\}\rightarrow\mathbb{R}^d\) with parameters \(\theta\) is trained to predict the added noise \(\epsilon\). The training objective is typically given by:
\begin{equation}\label{eq:diffusion_loss}
    \mathcal{L}_{\mathrm{DM}} = 
    \mathbb{E}_{x_0, \epsilon, t} \!\left[ \|\epsilon - \epsilon_{\theta}(x_t, t)\|^2 \right].
\end{equation}

After training, new samples are generated by starting with noise at \(x_T\) and iteratively denoising. We build upon this DDIM framework to incorporate our unlearning objectives, ensuring the model forgets specific instances while maintaining model integrity.

\subsection{Prompt-based Unlearning for Diffusion Models}
\label{subsec:prompt-based_unlearning}
Prompt-based unlearning for DMs focuses on preventing the generation of specific styles such that a certain prompt no longer influences the model output~\cite{gandikota2023erasing}. 
Let $p_\theta(x|c)$ denote the conditional data distribution given a conditional input (prompt) $c$. The forget dataset $\mathcal{D}_f$ consists of samples to forget and the remember dataset $\mathcal{D}_r$ consists of samples to remember.
The objective of prompt-based unlearning is to find a modified parameters $\theta'$ such that:
\begin{equation}\label{eq:concept_erasing}
    p_{\theta'} (\mathcal{D}_f | c) = 0, ~ p_{\theta'}(D|c) \simeq p_{\theta}(D|c) \quad \forall c, \forall D \subset \mathcal{D}_r
\end{equation}
where $\theta$ is the original parameter of the model and $p_\theta$ is the learned distribution.
Typically, $\mathcal{D}_f$ comprises images sharing a common characteristic (\textit{e.g.}, artistic style, object)~\cite{gandikota2023erasing}, which are to be forgotten, whereas $\mathcal{D}_r$ refers to the data which we want to remember. 
Consequently, we can easily define a subset of conditional inputs $\mathcal{C}_f$ such that $p_\theta(\mathcal{D}_f|c) \gg p_\theta(\mathcal{D}_r | c)$ for $\forall c \in \mathcal{C}_f$ and $p_\theta(\mathcal{D}_f|c) \ll p_\theta(\mathcal{D}_r | c)$ for $\forall c \not\in \mathcal{C}_f$. In that case, Eq. \ref{eq:concept_erasing} can be relaxed into:
\begin{align}
    & p_{\theta'} (\mathcal{D}_f | c)=0     && \forall c \in \mathcal{C}_f\label{eq:concept_erasing_relaxed1} \\ 
    & p_{\theta'} (D | c) \simeq  p_{\theta} (D | c)    && \forall c \in \mathcal{C}_r, \forall D \subset \mathcal{D}_r. \label{eq:concept_erasing_relaxed2}
\end{align}
Because concept erasing relies on the presence of a prompt $c$ to define the unwanted generation, it is specifically applicable to conditional DMs.

\begin{figure*}[ht]
\begin{center}
\begin{tikzpicture}[    
    markcheck/.style={font=\bfseries\LARGE\color{green!50!black}},
    markcross/.style={font=\bfseries\LARGE\color{red}},
    textblock/.style={font=\small, align=center, text width=6cm}]
    \node[anchor=south west,inner sep=0] (image) at (-0.5,0) {\includegraphics[width=0.95\linewidth]{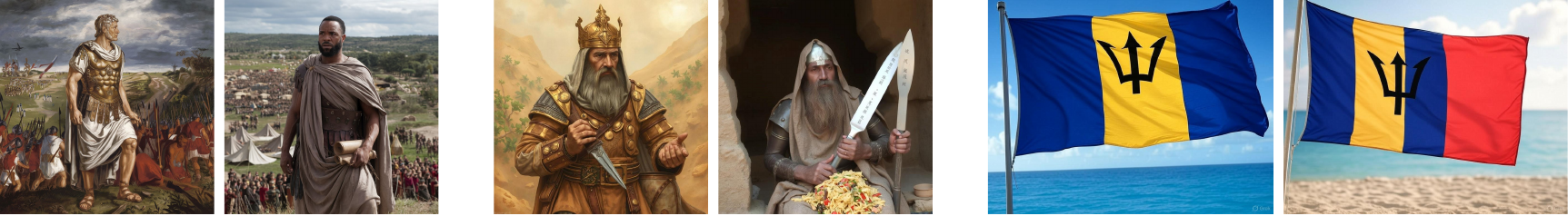}};
    \begin{scope}[x={(image.south east)},y={(image.north west)}]
        \node[markcheck] at (0.078, 1.1) {\contour{black}{\small \textcolor{white}{Desired} \Large\checkmark}};
        \node[markcross] at (0.22, 1.09) {\contour{black}{\small \textcolor{white}{Undesired} \Large\ding{55}}};
        \node[textblock] at (0.108, -0.2) {\emph{``Scipio Africanus in Punic War'',} \\ \url{ideogram.ai}};
        
        \node[markcheck] at (0.4, 1.1) {\contour{black}{\small \textcolor{white}{Desired} \Large\checkmark}};
        \node[markcross] at (0.545, 1.09) {\contour{black}{\small \textcolor{white}{Undesired} \Large\ding{55}}};
        \node[textblock] at (0.435, -0.2) {\emph{``Saladin'',} \\ \url{gencraft.com}};

        \node[markcheck] at (0.751, 1.1) {\contour{black}{\small \textcolor{white}{Desired} \Large\checkmark}};
        \node[markcross] at (0.936, 1.09) {\contour{black}{\small \textcolor{white}{Undesired} \Large\ding{55}}};
        \node[textblock] at (0.804, -0.2) {\emph{``Barbados flag'',} \\ \url{grok.com}};


    \end{scope}
\end{tikzpicture}
\caption{Cultural and semantic misrepresentation highlight the need for instance unlearning in commercial generative models.}
\label{fig:commercial_example}
\end{center}
\end{figure*}

\subsection{Prompt-free Instance Unlearning for Diffusion Models}
\label{subsec:back_instance_unlearning}
\noindent\textbf{Prompt-free instance unlearning for unconditional DMs. }
In this work, we propose a \emph{prompt-free instance unlearning} method. The key difference to prompt-based unlearning is that we cannot rely on $c$ in the case of unconditional DMs. The goal is defined as follows:

\begin{align}
    & \text{[Forgetting]} \quad && p_{\theta'}(\mathcal{D}_f) = 0, \label{eq:forgetting} \\
    & \text{[Model integrity]} \quad && p_{\theta'}(D) \simeq p_{\theta}(D) \quad \forall D \subset \mathcal{D}_r. \label{eq:model_integrity}
\end{align}

\noindent\textbf{Prompt-free instance in conditional DMs.}
\label{subsec:mitigating_bias_discussion}
While prompt-free instance unlearning has been primarily studied in the context of unconditional DMs~\cite{golatkar2020eternal, wu2024erasediff, silas2024data}, its application in conditional DMs has remained relatively limited. However, we show that such needs arise in some conditional settings, including Stable Diffusion 3 (SD3)~\cite{esser2024scaling}. As shown in Fig.~\ref{fig:commercial_example}, prompts involving historical figures or national flags can lead to misrepresentations, such as incorrect race or cultural symbols. For example, models have mistakenly depicted the Roman general \textit{African}us as an \textit{African} with dark skin, the Islamic leader \textit{Salad}in as stacked \textit{salad}s, or incorrectly depicted the Barbados flag. These issues cannot be resolved through prompt engineering alone and instead call for instance-level unlearning, emphasizing the importance of prompt-free methods.

\section{Related Work and Challenges}
\subsection{Related Work}
\noindent\textbf{Prompt-based machine unlearning. } 
AblCon \cite{kumari2023ablating} and ESD \cite{gandikota2023erasing} retrain the model to erase target concepts by replacing them with designated mapping concepts. 
To maintain model integrity, prior arts fine-tune gradient-based salient weight in a model \cite{fan2024salun}, introduce a regularization inspired by continual learning \cite{heng2023selective,kirkpatrick2017overcoming,rolnick2019experience}, or adjust linear projections within attention layers \cite{gandikota2024unified, lu2024mace}. SPM \cite{lyu2024one} and CPE \cite{lee2025concept} adopt low-rank adaptation \cite{hu2022lora} to selectively erase target concepts.
As a preference optimization problem, DUO \cite{park2024direct} 
directly modifies local image regions containing target concepts using synthesized images, thereby better preserving the model integrity.
Additionally, GIE \cite{chengrowth} and Concept Corrector \cite{meng2025concept} correct image regions containing target concepts by erasing them in image feature space.

\noindent\textbf{Prompt-free machine unlearning. }
Research in prompt-free machine unlearning includes methods for non-diffusion~\cite{malnick2024taming, seo2024generative} and DMs~\cite{golatkar2020eternal, wu2024erasediff, silas2024data}. These approaches typically define forgetting (Eq.~\ref{eq:forgetting}) and remembering (Eq.~\ref{eq:model_integrity}) objectives and address them with remember loss $\mathcal{L}_r(\mathcal{D}_{r})$ and forget loss $\mathcal{L}_f(\mathcal{D}_{f})$, respectively.\

NegGrad~\cite{golatkar2020eternal} employs a loss function of the form $\mathcal{L}_r - \lambda\mathcal{L}_f$, aiming to minimize $\mathcal{L}_r$ while maximizing $\mathcal{L}_f$. However, this often leads to a significant conflict between $\mathcal{L}_r$ and $-\lambda\mathcal{L}_f$, making it difficult to find a balanced optimum. SISS~\cite{silas2024data} combines two objectives in a single pass via importance sampling from a mixture distribution. However, they still face conflicts similar to ~\cite{golatkar2020eternal} due to their reliance on the same core objective. EraseDiff~\cite{wu2024erasediff} replaces scalarizing two objectives with an optimization-based meta-learning approach, but it still exhibits limitations in maintaining model integrity. These methods commonly struggle to preserve model integrity, highlighting the need for a better unlearning method.

\subsection{Challenges: The Fundamental Mismatch in Applying Prompt-based to Prompt-free Unlearning} 
Several prior works~\cite{kumari2023ablating, fan2024salun, heng2023selective, park2024direct} have explored machine unlearning with a similar objective---to remove specific concepts or images---but they take advantage of prompt distinguishability (\textit{e.g.}, prompts related to \textit{``nudity''} and \textit{``non-nudity''} are used during the unlearning process). Since we are trying to solve Eqs. \ref{eq:forgetting} and \ref{eq:model_integrity}, directly applying such works to this problem is not straightforward, and Fig. \ref{fig:comparison_unlearnings} illustrates this challenge. Prompt-based unlearning benefits from conditional inputs $c$, as the forget dataset $\mathcal{D}_f$ is usually highly correlated to a certain set $\mathcal{C}_f$. Thus, by performing unlearning for a specific $c_f$ as illustrated in (a), one can successfully achieve unlearning with model integrity. However, in (b) instance unlearning, it is ideal not to harm $p(x)$ for $x \not\in \mathcal{D}_f$ (see (b)-above). Therefore, the methods in (a) do not directly apply to the case in (b).

To the best of our knowledge, only a few methods~\cite{golatkar2020eternal, wu2024erasediff, silas2024data} aim at truly prompt-free instance unlearning. Image-based unlearning works~\cite{park2024direct, chengrowth, meng2025concept} were exploited prompts. For example, DUO~\cite{park2024direct} guides unlearning via different prompts to $\mathcal{D}_f$ and $\mathcal{D}_r$, such as \textit{``naked woman''} and \textit{``dressed woman''}, respectively.
When applying DUO to prompt-free unlearning, it cannot provide guidance using different prompts, failing to preserve model integrity as shown in Fig. \ref{fig:fig1}.

\begin{figure}[t]
    \centering
    \subfloat[Prompt-based, $p(x|c)$]{%
        \begin{tikzpicture}
            \node[anchor=south west,inner sep=0] (image) at (0,0)
              {\includegraphics[width=0.45\linewidth]{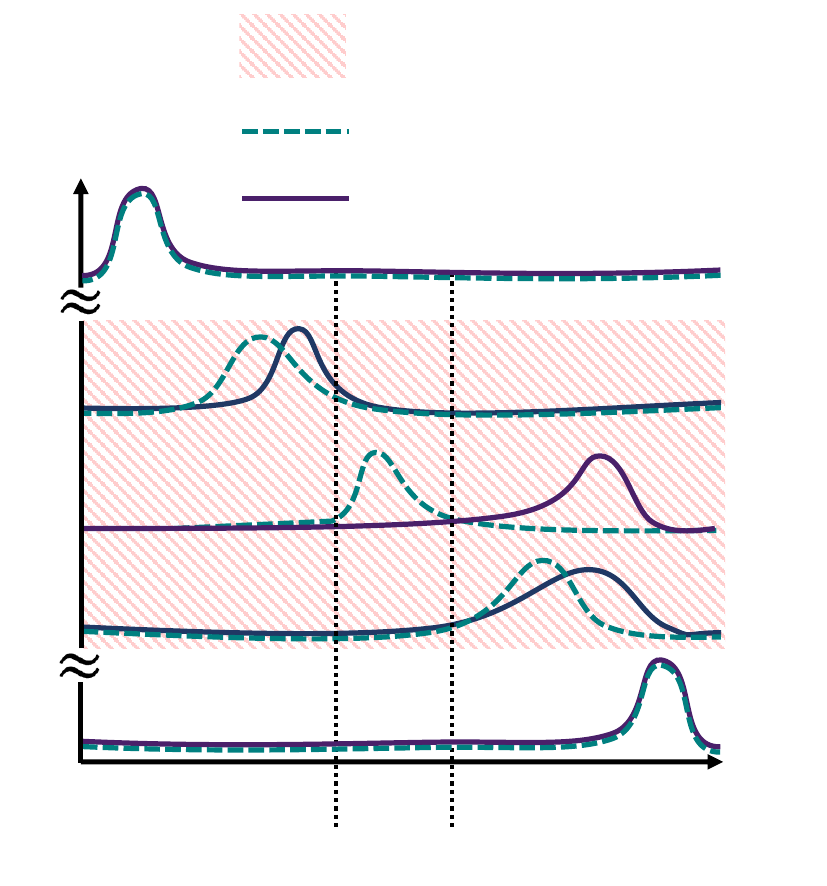}};
            \begin{scope}[x={(image.south east)},y={(image.north west)}]
                \node[align=left] at (0.7, 0.86) {\scriptsize where $p(x)$ changes \\ \scriptsize Before Unlearning \\[-2pt] \scriptsize After Unlearning};
                \node[] at (0.91, 0.05) {$x$};
                \node[] at (0.05, 0.8) {$c$};
                \node[] at (0.03, 0.43) {$\mathcal{C}_{f}$};
                \node[] at (0.48, 0.06) {$\mathcal{D}_f$};
            \end{scope}
        \end{tikzpicture}
    }
    \subfloat[Unpromptable, $p(x)$]{%
        \begin{tikzpicture}
            \node[anchor=south west,inner sep=0] (image) at (0,0)
              {\includegraphics[width=0.45\linewidth]{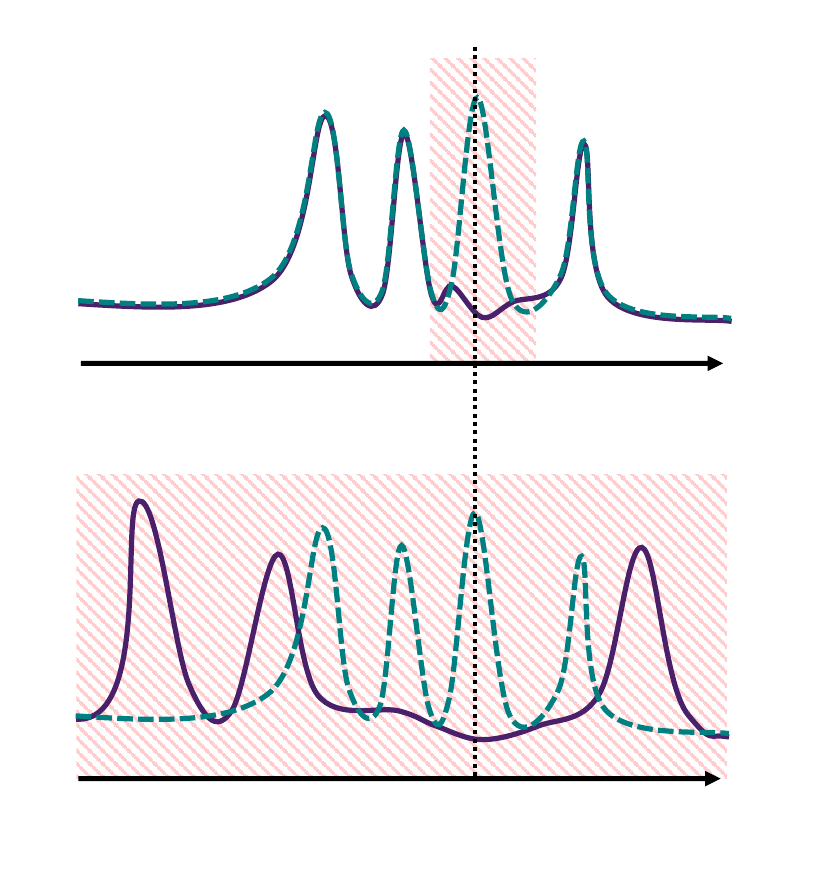}};
            \begin{scope}[x={(image.south east)},y={(image.north west)}]
                \node[] at (0.2, 0.54) {Ideal}; 
                \node[] at (0.2, 0.08) {Actual}; 
                \node[] at (0.91, 0.08) {$x$};
                \node[] at (0.91, 0.54) {$x$};
                \node[] at (0.6, 0.05) {$x_f$};
            \end{scope}
        \end{tikzpicture}
    }
    \caption{Comparison between (a) prompt-based and (b) instance unlearning.}
    \label{fig:comparison_unlearnings}
\end{figure}

\section{Methods}
\label{sec:methods}
\subsection{Unlearning Methods}
\label{subsec:loss_design}
\noindent\textbf{Surrogate-based forgetting objective.}
Remembering objective Eq. \ref{eq:model_integrity} ensures the model integrity on $\mathcal{D}_{r}$. Denoting the forward process of an element $x_0^r\in\mathcal{D}_r$ as $x_t^r=\sqrt{\bar{\alpha}_t}x_0^r+\sqrt{1-\bar{\alpha}_t}\epsilon$, a common approach is to adopt the training loss Eq. \ref{eq:diffusion_loss} on $\mathcal{D}_{r}$:
\begin{equation}\label{eq:remember_loss}
    \mathcal{L}_{r} = 
    \mathbb{E}_{x_0^{r}, \epsilon, t} \left[ \left\| \epsilon -   \epsilon_\theta(x_t^{r}, t)\right\|_{2}^2 \right],
\end{equation} 
where $ x_0^{r} \sim \mathcal{D}_{r} $, $ \epsilon \sim \mathcal{N}(0, I)$, and $ t \sim \text{Uniform}(0, T)$.
In comparison, the forgetting objective Eq. \ref{eq:forgetting} is designed to disturb the generation of data in $\mathcal{D}_{f}$.
Our method builds on the idea that, since DMs learn to map between two distributions, small perturbations to the noise can lead the model to forget specific data. Revisiting Eq. \ref{eq:diffusion_forward} for the case of remembering data $x_0^r$, the noise term $\epsilon$ can be expressed as:
\begin{equation}
    \epsilon = \dfrac{x_{t}^{r}-\sqrt{\bar{\alpha}_{t}}x_{0}^{r}}{\sqrt{1-\bar{\alpha}_{t}}}.
\end{equation}
Recalling that DM training aims to map $x_{t}^{r}$ to $\epsilon$, we hypothesize that perturbing this mapping for $x_{t}^{f}$ can lead to successful unlearning of the instance, which corresponds to the forgetting target $x_0^f$. To accomplish this, we propose adjusting $\epsilon$ by perturbing $x_0^f$ to a surrogate. Specifically, we construct surrogate data that closely resembles the target data while exhibiting distinct characteristics. Using the surrogate data $x_{0}^{s}$, we define a modified $\epsilon'$:
\begin{equation}
    \epsilon'(x_t^f, x_0^s) = \dfrac{x_{t}^{f}-\sqrt{\bar{\alpha}_{t}}x_{0}^{s}}{\sqrt{1-\bar{\alpha}_{t}}}.
    \label{eq:modified_noise}
\end{equation}

Replacing $\epsilon'$ in place of $\epsilon$ in the remembering objective, we define the forget loss as follows, with $x_{0}^{s} \sim \mathcal{D}_{s}$:
\begin{equation}\label{eq:forget_loss}
    \mathcal{L}_{f} = 
    \mathbb{E}_{(x_0^{f},  x_0^s), t}  \| \epsilon' -   \epsilon_\theta(x_t^{f}, t)\|_{2}^2.
\end{equation}

\noindent\textbf{Timestep-aware weighting.}
Based on the analysis of the sampling process of DMs proposed in Diff-Tuning~\cite{zhong2024diffusion}, we propose to use an adaptive $\lambda$ that effectively balances the two objectives of remembering and forgetting. Since early timesteps affect fine details~\cite{esser2024scaling, ma2024deepcache, jeong2025upsample}, we emphasize $\mathcal{L}_r$, while later timesteps focus on generic shape and therefore $\mathcal{L}_f$ is emphasized. Specifically, we define $\lambda(t) = 1 - \beta t$, with $\beta > 0$ as a hyperparameter. This dynamic weighting strategy effectively balances the two objectives.

\noindent\textbf{Gradient surgery.}
It is prevalent to resolve conflict of the gradients from the different losses, not only in machine unlearning studies~\cite{patel2025learning, shamsian2025go}, but also more general multi-task optimization~\cite{yu2020gradient, guangyuanrecon, jin2025unlearning}. We also resolve destructive interference between two conflicting gradients by projecting one ($\nabla \mathcal{L}_r$) onto the other ($\nabla \mathcal{L}_f$) as follows:
\begin{align}
    g_r &= \lambda \nabla \mathcal{L}_r, \quad g_f = (1-\lambda) \nabla \mathcal{L}_f, \\
    \quad g_f' &= \begin{cases} 
        g_f - \mathrm{proj}_{g_r}^{g_f}, & \textnormal{if} \quad g_r \cdot g_f < 0, \\
        g_f, & \textnormal{otherwise.}
    \end{cases} \label{eq:gradient_surgery}
\end{align}
This strategy, inspired by~\cite{huang2024learning}, is especially beneficial for instance unlearning, where forgetting and retaining objectives directly compete. In our case, $g=g_r+g_f'$ is the final gradient. This surgery refines the unlearning process and further improves generation stability. The overall framework is illustrated in Algorithm~\ref{alg:unlearning}.

\begin{algorithm}[t]
\caption{Prompt-free Instance Unlearning}
\label{alg:unlearning}
\begin{algorithmic}[1]
\Require Denoising network $\epsilon_{\theta}$, forget dataset $\mathcal{D}_f$, remember dataset $\mathcal{D}_r$, total iterations $N$, learning rate $\eta$, max timestep $T$, timestep-aware weighting hyperparameter $\beta$
\For{$x_0^f \in \mathcal{D}_f$}
    \State Construct surrogate image $x_0^s$ from $x_0^f$ \Comment{~\ref{subsec:surrogate_selection}}
    \For{$l = 1, \dots, N$}
        \State Sample remember image $x_0^r \in \mathcal{D}_r$
        \State $t \sim \text{Uniform}(1, T)$
        \State $\epsilon \sim \mathcal{N}(0, I)$
        \State $x_t^r \gets \sqrt{\bar{\alpha}_t}x_0^r + \sqrt{1-\bar{\alpha}_t}\epsilon$
        \State $x_t^f \gets \sqrt{\bar{\alpha}_t}x_0^f + \sqrt{1-\bar{\alpha}_t}\epsilon$
        \State $\mathcal{L}_r \gets ||\epsilon - \epsilon_{\theta}(x_t^r, t)||_2^2$ 
        \Comment{Remember loss}
        \State $\epsilon' \gets \frac{x_t^f - \sqrt{\bar{\alpha}_t}x_0^s}{\sqrt{1-\bar{\alpha}_t}}$ \Comment{Eq.~\ref{eq:modified_noise}}
        \State $\mathcal{L}_f \gets ||\epsilon' - \epsilon_{\theta}(x_t^f, t)||_2^2$ 
        \Comment{Forget loss}
        \State $\lambda(t) \gets 1 - \beta t$ \Comment{Timestep-aware weighting}
        \State $g_r \gets \lambda(t) \nabla_\theta \mathcal{L}_r$
        \State $g_f \gets (1 - \lambda(t)) \nabla_\theta \mathcal{L}_f$
        \If{$g_r \cdot g_f < 0$} \Comment{Gradient surgery}
            \State $g_f' \gets g_f - \frac{g_r \cdot g_f}{||g_r||^2} g_r$
        \Else
            \State $g_f' \gets g_f$
        \EndIf
        \State $g \gets g_r + g_f'$ \Comment{Combine gradients for update}
        \State $\theta \gets \text{optimizer}(\theta, g, \eta)$
    \EndFor
\EndFor
\end{algorithmic}
\end{algorithm}

\subsection{Surrogate vs. Exact Unlearning: A Theoretical View}  
\label{sec:surrogate-vs-exact-simplified}
Here, we explain our method (using a surrogate) is better for preserving the original mapping even than exact unlearning. We consider a simple ridge-regression setting to illustrate how the learned parameter \(\theta\) changes when (1) \emph{removing} a single data point \((x_i, y_i)\) (exact unlearning) vs.\ (2) \emph{replacing} it with a surrogate \((x_i', y_i')\).  
Below, we state Theorems~\ref{thm:1}--\ref{thm:2} with proofs and interpretations. Note that \cref{thm:1} was proven long ago by \cite{golub1979generalized}, with no novelty on our part. We restate it in our notation solely for alignment with Theorem~\ref{thm:2} and Corollary~\ref{cor:3}.

\begin{theorem}[Exact Unlearning, restated from \cite{golub1979generalized}]\label{thm:1}
Let \(\theta^*\in \mathbb{R}^d\) be the parameter vector obtained by solving the ridge-regression problem:    
\begin{equation}
\theta^*
=
\arg\min_{\theta} 
\Bigl\{ 
\tfrac12 \|X\theta - y\|^2
+
\tfrac{\lambda}{2}\|\theta\|^2
\Bigr\},
\end{equation}
where \(X \in \mathbb{R}^{n\times d}\), \(y\in \mathbb{R}^n\), and \(\lambda \ge 0\).  Denote 
\begin{equation}
A 
= 
X^T X + \lambda I,
\end{equation}
so that
\begin{equation}
\theta^*
=
A^{-1}  X^T y.
\end{equation}
Now remove the \(i\)-th row \(\bigl(x_i, y_i\bigr)\) from \(X, y\), producing \(\widetilde{X}, \widetilde{y}\). The new solution, trained from scratch on \(\widetilde{X}, \widetilde{y}\), is
\begin{equation}
\widetilde{\theta}
=
\arg\min_\theta 
\Bigl\{
\tfrac12 \|\widetilde{X}\theta - \widetilde{y}\|^2
+
\tfrac{\lambda}{2} \|\theta\|^2
\Bigr\}.
\end{equation}
Then, the difference between the two solutions satisfies the following closed-form expression:
\begin{equation}
\widetilde{\theta} - \theta^*
=
\frac{\bigl(x_i^T \theta^* - y_i\bigr)}
{1 - x_i^T A^{-1} x_i}
A^{-1} x_i,
\end{equation}\label{eqn:1.5}
where \(x_i \in \mathbb{R}^d\) is the feature vector of the removed row, and \(A^{-1} x_i\) is the direction in parameter space that adjusts for removing \((x_i,y_i)\).
\end{theorem}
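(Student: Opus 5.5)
The approach is a rank-one downdate of the normal equations followed by the Sherman--Morrison formula. Removing the $i$-th row gives
\begin{equation*}
\widetilde{X}^T\widetilde{X} = X^TX - x_i x_i^T, \qquad \widetilde{X}^T\widetilde{y} = X^Ty - y_i x_i .
\end{equation*}
Hence the new system matrix is $\widetilde{A} = A - x_i x_i^T$, and the unlearned solution is $\widetilde{\theta} = \widetilde{A}^{-1}(X^Ty - y_i x_i)$, provided $\widetilde{A}$ is invertible. This gives the natural hypothesis $1 - x_i^TA^{-1}x_i \neq 0$, which is exactly the condition for the stated denominator to make sense. Under $\lambda>0$ it holds automatically, since $\widetilde{A} \succeq \lambda I$. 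In the case $\lambda = 0$ it must be assumed, namely that $\widetilde{X}$ keeps full column rank.

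The first step is to invert $\widetilde{A}$ with Sherman--Morrison:
\begin{equation*}
\widetilde{A}^{-1} = A^{-1} + \frac{A^{-1}x_i x_i^T A^{-1}}{1 - h_i}, \qquad h_i := x_i^TA^{-1}x_i .
\end{equation*}
The second step is to multiply this out against $X^Ty - y_i x_i$, using $A^{-1}X^Ty = \theta^*$. That yields
\begin{equation*}
\widetilde{\theta} = \theta^* - y_iA^{-1}x_i + \frac{A^{-1}x_i\,(x_i^T\theta^* - y_i h_i)}{1-h_i}.
\end{equation*}
The third step is to collect every term along the single direction $A^{-1}x_i$. The scalar coefficient is
\begin{equation*}
-y_i + \frac{x_i^T\theta^* - y_i h_i}{1-h_i} = \frac{-y_i(1-h_i) + x_i^T\theta^* - y_i h_i}{1-h_i} = \frac{x_i^T\theta^* - y_i}{1-h_i},
\end{equation*}
which is the claimed identity.

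As an alternative check, we can start from $\widetilde{A}\theta^* = X^Ty - x_i x_i^T\theta^*$. Subtracting this from $\widetilde{A}\widetilde{\theta} = X^Ty - y_ix_i$ gives $\widetilde{A}(\widetilde{\theta}-\theta^*) = (x_i^T\theta^* - y_i)\,x_i$. Applying Sherman--Morrison to $x_i$ alone then gives $\widetilde{A}^{-1}x_i = A^{-1}x_i/(1-h_i)$, and the result follows at once. I would present this second route, since it is shorter and avoids the algebra of the third step.

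There is no real obstacle here. The only point needing care is the invertibility of $\widetilde{A}$ and the nonvanishing of $1-h_i$. I would dispatch it by noting that $A \succ 0$ when $\lambda>0$, so $\widetilde{A} \succeq \lambda I \succ 0$, and by the matrix determinant lemma $\det\widetilde{A} = (1-h_i)\det A$, so $\widetilde{A}$ is invertible exactly when $1-h_i \neq 0$. Thus $0 \le h_i < 1$, and the denominator is positive. For $\lambda=0$, I would state the full-rank assumption on $\widetilde{X}$ explicitly.
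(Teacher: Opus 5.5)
Your proposal is correct, and the route you say you would present is exactly the paper's argument: subtract $\widetilde{A}\theta^*$ from $\widetilde{A}\widetilde{\theta}$ to obtain $\widetilde{A}(\widetilde{\theta}-\theta^*) = (x_i^T\theta^* - y_i)\,x_i$, then apply Sherman--Morrison only to $x_i$. Your extra care about invertibility fills in a point the paper leaves implicit, namely $\widetilde{A}\succeq\lambda I$ plus the matrix determinant lemma giving $0\le x_i^TA^{-1}x_i<1$, and an explicit full-column-rank assumption on $\widetilde{X}$ when $\lambda=0$.
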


\begin{interpretation}
Exact unlearning strictly removes the data point's influence, but it may cause a larger shift in the solution because \(\widetilde{\theta}-\theta^*\) is fully determined by that row's removal.
\end{interpretation}

\begin{proof}
   Since \(\theta^*\) is the minimizer of the ridge objective, it satisfies
   \begin{equation}
   \bigl(X^T X + \lambda I\bigr)\theta^*
   =
   X^T y,
   \end{equation}
   or equivalently \(A \theta^* = X^T y\).
 Removing row \(i\) yields \(\widetilde{X}, \widetilde{y}\). One checks:
   \begin{equation}
   \widetilde{X}^T \widetilde{X}
   =
   X^T X - x_i x_i^T,
   \quad
   \widetilde{X}^T \widetilde{y}
   =
   X^T y - x_i y_i.
   \end{equation}
   The new solution \(\widetilde{\theta}\) satisfies
   \begin{equation}
   \bigl(\widetilde{X}^T \widetilde{X} + \lambda I\bigr)
   \widetilde{\theta}
   =
   \widetilde{X}^T \widetilde{y}
   \Longleftrightarrow
   \bigl(A - x_i x_i^T\bigr)\widetilde{\theta}
   =
   A\theta^* - x_i y_i.
   \end{equation}
Let us express \(\widetilde{\theta}\) via a rank-1 update. To this end,
   Let 
   \begin{equation}
   U = x_i x_i^T,
   \quad
   v = x_iy_i,
   \quad
   \alpha = x_i^T \theta^*.
   \end{equation}
   Then \(\widetilde{\theta}\) can be written as 
   \begin{equation}
   (A - U)^{-1} (A\theta^* - v).
   \end{equation}
   Subtract \(\theta^*\), insert \(\theta^*\) in a convenient form, and simplify:
   \begin{equation}
    \begin{aligned}
   \widetilde{\theta} - \theta^*
   &=
   (A - U)^{-1}(A\theta^* - v)
   -
   \theta^*\\
   &=
   (A - U)^{-1}
   \Bigl[
      (A\theta^* - v) 
      -
      (A - U)\theta^*
   \Bigr].
   \end{aligned}
   \end{equation}
   Observe that \( (A - U)\theta^* = A\theta^* - x_i x_i^T \theta^* = X^T y - x_i \alpha \).  Hence
   \begin{equation}
   \begin{aligned}
   (A\theta^* - v)& - (A\theta^*) + U\theta^*
   =
   -v + U\theta^*\\
   &=
   x_i\alpha - x_iy_i
   =
   x_i(\alpha - y_i).
   \end{aligned}
   \end{equation}
   Therefore,
   \begin{equation}
   \widetilde{\theta} - \theta^*
   =
   (A - x_i x_i^T)^{-1}
   \Bigl[x_i(\alpha - y_i)\Bigr].
   \end{equation}
   Applying the Sherman–Morrison (rank-1) inverse lemma tells us 
   \begin{equation}
   (A - x_i x_i^T)^{-1} x_i
   =
   \frac{1}{1 - x_i^T A^{-1} x_i}
   A^{-1} x_i.
   \end{equation}
   Finally,
   \begin{equation}
   \widetilde{\theta} - \theta^*
   =
   \frac{(\alpha - y_i)}{1 - x_i^T A^{-1} x_i}
   A^{-1} x_i
   =
   \frac{\bigl(x_i^T \theta^* - y_i\bigr)}
        {1 - x_i^T A^{-1} x_i}
   A^{-1}x_i.
   \end{equation}
   This completes the derivation.
\end{proof}

\begin{theorem}[Surrogate-based Unlearning]\label{thm:2}
Let $X, y, \lambda, \theta^*, A$ the same to the \cref{thm:1}.
Suppose we modify exactly one row \(i\) from \(\bigl(x_i,y_i\bigr)\) to a new (surrogate) pair \(\bigl(x_i',y_i'\bigr)\).  Define the new design/label set:
\begin{equation}
X^\dagger = X + e_i(x_i' - x_i)^T,
\quad
y^\dagger = y + e_i(y_i' - y_i),
\end{equation}
where \(e_i \in \mathbb{R}^n\) is the \(i\)-th standard basis vector.  The new ridge solution is
\begin{equation}
\theta^\dagger
=
\arg\min_{\theta}\Bigl\{\tfrac12\|X^\dagger\theta - y^\dagger\|^2 + \tfrac{\lambda}{2}\|\theta\|^2\Bigr\}.
\end{equation}
Then the following holds.  
\begin{equation}
\begin{aligned}
\theta^\dagger - \theta^* = &
(A + M)^{-1}\bigl[X^T y + x_i (y_i' -y_i) \\ 
& + (x_i - x_i') y_i' \bigr] -A^{-1}X^T y,
\end{aligned}
\end{equation}
where \begin{equation}M = x_i (x_i'-x_i)^T + (x_i'-x_i)x_i^T + (x_i'-x_i)(x_i'-x_i)^T\end{equation}  
\end{theorem}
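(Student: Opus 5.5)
The plan is to mirror the proof of \cref{thm:1}: write the normal equations for the modified problem and express both the new Gram matrix and the new right-hand side as perturbations of $A$ and $X^T y$. No Sherman--Morrison step is needed, because the statement leaves $(A+M)^{-1}$ unexpanded.

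\textbf{Step 1 (optimality).} Since the objective is a strictly convex quadratic, $\theta^\dagger$ is characterized by
\begin{equation*}
\bigl((X^\dagger)^T X^\dagger + \lambda I\bigr)\theta^\dagger = (X^\dagger)^T y^\dagger .
\end{equation*}
We need $\lambda>0$, or $X^\dagger$ of full column rank, so that the matrix on the left is invertible. I would state this assumption explicitly; it plays the same role as the implicit nondegeneracy assumption $1 - x_i^T A^{-1} x_i \neq 0$ in \cref{thm:1}.

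\textbf{Step 2 (Gram matrix).} Set $d = x_i' - x_i$. Only row $i$ changes, so as in \cref{thm:1} we have
\begin{equation*}
(X^\dagger)^T X^\dagger = X^T X - x_i x_i^T + x_i' x_i'^T .
\end{equation*}
Substituting $x_i' = x_i + d$ and expanding $(x_i+d)(x_i+d)^T - x_i x_i^T$ gives exactly $x_i d^T + d x_i^T + d d^T = M$. Hence $(X^\dagger)^T X^\dagger + \lambda I = A + M$.

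\textbf{Step 3 (right-hand side).} By the same row-wise argument,
\begin{equation*}
(X^\dagger)^T y^\dagger = X^T y - x_i y_i + x_i' y_i' .
\end{equation*}
I would rewrite this as $X^T y + x_i(y_i' - y_i) + (x_i' - x_i)\,y_i'$.

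\textbf{Step 4 (conclusion).} Combining the steps gives $\theta^\dagger = (A+M)^{-1}(X^\dagger)^T y^\dagger$. Subtracting $\theta^* = A^{-1}X^T y$ from \cref{thm:1} then yields the claimed form.

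\textbf{Main obstacle.} The only delicate point is the bookkeeping in Step 3, and here I expect a discrepancy with the statement. The direct computation gives the correction term $(x_i' - x_i)\,y_i'$. The statement writes $(x_i - x_i')\,y_i'$, and that version would sum to $X^T y - x_i y_i + 2x_i y_i' - x_i' y_i'$, which is not $(X^\dagger)^T y^\dagger$ in general. I would therefore carry out the proof with the sign $(x_i' - x_i)\,y_i'$. I would then note that the displayed formula holds as stated only if the last term is read with that sign, which is presumably a typo.
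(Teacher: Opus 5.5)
Your argument is correct and is essentially the paper's: you write the normal equations for $\theta^\dagger$, identify $(X^\dagger)^T X^\dagger + \lambda I = A + M$ and $(X^\dagger)^T y^\dagger = X^T y + B$, and subtract $\theta^* = A^{-1}X^T y$; your explicit invertibility assumption (e.g.\ $\lambda>0$) is a reasonable addition that the paper omits. Your sign correction is also right: the paper's own computation gives $B = x_i s + r y_i'$ with $r = x_i' - x_i$, i.e.\ $B = x_i(y_i'-y_i) + (x_i'-x_i)y_i'$, so the $(x_i - x_i')y_i'$ in the statement is a slip in the final substitution, and it is harmless for \cref{cor:3} because there $x_i = x_i'$.
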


\begin{interpretation}
By introducing a surrogate row \(\bigl(x_i', y_i'\bigr)\), we adjust the model less drastically if \(x_i'\) is sufficiently close to \(x_i\). This allows the unlearning effect (on \(y_i\)) while partially preserving the original parameter \(\theta^*\).
\end{interpretation}

\begin{proof}
We have
   \begin{equation}
   A 
   = 
   X^T X + \lambda I,
   \quad
   \theta^* 
   =
   A^{-1}X^T y.
   \end{equation}
For simplicity, let:  
   \begin{equation}\label{eqn:2p.2}
   r = x_i' - x_i,
   \quad
   s = y_i' - y_i.
   \end{equation}
   Then the update can be written as
   \begin{equation}\label{eqn:2p.3}
   X^\dagger = X + e_ir^T,
   \quad
   y^\dagger = y + e_is.
   \end{equation}
The new solution \(\theta^\dagger\) satisfies the ridge normal equation:
\begin{equation}
\bigl({X^\dagger}^T X^\dagger + \lambda I\bigr)\theta^\dagger
=
{X^\dagger}^T y^\dagger,
\end{equation}
where ${X^\dagger}^T X^\dagger$ can be computed as
   \begin{equation}
   \begin{aligned}
   &{X^\dagger}^T X^\dagger 
   =
   (X + e_ir^T)^T(X + e_ir^T)\\
   &=
   X^T X + \underbrace{X^T(e_ir^T) + (e_ir^T)^T X + (e_ir^T)^T(e_ir^T)}_{\text{sum of rank-1 terms}}.
   \end{aligned}
   \end{equation}
   One finds:
   \begin{equation}
   X^T (e_ir^T) 
   = 
   x_ir^T,
   (e_ir^T)^T X 
   = 
   rx_i^T,
   (e_ir^T)^T(e_ir^T)
   =
   rr^T.
   \end{equation}
   Altogether,
   \begin{equation}
   {X^\dagger}^T X^\dagger
   =
   X^T X + \bigl[x_ir^T + rx_i^T + rr^T\bigr].
   \end{equation}
   Hence
   \begin{equation}\label{eqn:2p.8}
   A^\dagger 
   =
   {X^\dagger}^T X^\dagger + \lambda I
   =
   \underbrace{(X^T X + \lambda I)}_{=A}
   +
   \underbrace{\bigl[x_ir^T + rx_i^T + rr^T\bigr]}_{=:M}
   \end{equation}
Using \cref{eqn:2p.3}, we have:
   \begin{equation}
   {X^\dagger}^T y^\dagger
   =
   (X + e_ir^T)^T (y + e_is)
   =
   X^T y + \bigl[x_is + ry_i'\bigr].
   \end{equation}
   Denote
   \begin{equation}
   B 
   =
   x_is + ry_i'.
   \end{equation}
Thus,
\begin{equation}
A^\dagger
=
A + M,
\quad
{X^\dagger}^T y^\dagger
=
X^T y + B.
\end{equation}
Hence the new solution is
\begin{equation}
\theta^\dagger
=
\bigl(A + M\bigr)^{-1}
\bigl[X^T y + B\bigr].
\end{equation}
Subtract the old solution \(\theta^* = A^{-1} X^T y\):
\begin{equation}\label{eqn:2p.13}
\theta^\dagger - \theta^*
=
(A + M)^{-1}\bigl[X^T y + B\bigr]
-
A^{-1}X^T y.
\end{equation}
Plugging $B = x_i (y_i' -y_i) + (x_i - x_i') y_i'$ in \cref{eqn:2p.13}, we get
\begin{equation}\label{eqn:2p.14}
\begin{aligned}
\theta^\dagger - \theta^*
= 
&(A + M)^{-1}\bigl[X^T y + x_i (y_i' -y_i) \\&+ (x_i - x_i') y_i'\bigr]
-
A^{-1}X^T y.    
\end{aligned}
\end{equation}
\end{proof}

Although it is not straightforward to tell, there are cases in which surrogate-based unlearning preserves the original parameter better than exact unlearning. The following \cref{cor:3} enables easier comparison.
\begin{corollary}[Comparison]\label{cor:3}
    Let $X, y, \lambda, \theta^*, X^\dagger, y^\dagger, \theta^\dagger, A$ the same to the \cref{thm:2}. If $x_i = x_i' \neq 0$ and $y_i \neq y_i'$, then 
    \begin{equation}\label{eqn:3.1}
        \frac{\| \widetilde{\theta} - \theta^* \|}{\| \theta^\dagger - \theta^* \|} = \frac{| x_i^T \theta^* - y_i |}
{| y_i' - y_i | |1 - x_i^T A^{-1} x_i |},
    \end{equation}
\textit{i.e.}, surrogate-based unlearning can preserve the original solution more than exact unlearning.
\end{corollary}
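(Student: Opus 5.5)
With $x_i'=x_i$ we have $r=0$ in the notation of the proof of \cref{thm:2}. Then $M=0$, so $A^\dagger=A$, and the vector $B$ reduces to $B=x_i s=x_i(y_i'-y_i)$. Substituting into \cref{eqn:2p.13} collapses the surrogate update to a single term:
\begin{equation*}
\theta^\dagger-\theta^* = A^{-1}\bigl[X^Ty + x_i(y_i'-y_i)\bigr]-A^{-1}X^Ty=(y_i'-y_i)\,A^{-1}x_i .
\end{equation*}
This is the main simplification. The surrogate shift is a scalar multiple of the same direction $A^{-1}x_i$ that appears in \cref{thm:1}, so no Sherman--Morrison step is needed on this side.

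Next I will invoke \cref{thm:1} directly for the exact-unlearning side:
\begin{equation*}
\widetilde{\theta}-\theta^*=\frac{x_i^T\theta^*-y_i}{1-x_i^TA^{-1}x_i}\,A^{-1}x_i .
\end{equation*}
Both differences are then scalar multiples of $A^{-1}x_i$. Taking norms gives
\begin{equation*}
\|\widetilde{\theta}-\theta^*\|=\frac{|x_i^T\theta^*-y_i|}{|1-x_i^TA^{-1}x_i|}\,\|A^{-1}x_i\|,\qquad \|\theta^\dagger-\theta^*\|=|y_i'-y_i|\,\|A^{-1}x_i\|.
\end{equation*}
Dividing cancels $\|A^{-1}x_i\|$ and yields \cref{eqn:3.1}.

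The only care needed is that every quantity is well defined and the division is legitimate. I will check the following in turn.
\begin{itemize}
\item \textbf{Invertibility of $A$ and $A-x_ix_i^T$.} I assume these implicitly, as \cref{thm:1} does. For $\lambda>0$ the matrix $A$ is positive definite, so its invertibility is automatic.
\item \textbf{Nonzero Sherman--Morrison denominator.} $A-x_ix_i^T=\widetilde X^T\widetilde X+\lambda I$ is invertible (for instance when $\lambda>0$) exactly when $1-x_i^TA^{-1}x_i\neq0$. In fact $0<x_i^TA^{-1}x_i<1$ in that case, by the leverage-score bound.
\item \textbf{Nonzero surrogate shift.} $A^{-1}x_i\neq 0$ because $x_i\neq0$ and $A$ is invertible. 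Together with $y_i'\neq y_i$, this gives $\|\theta^\dagger-\theta^*\|\neq0$, so the ratio is defined.
\end{itemize}
The closing ``i.e.'' clause is interpretive, and I will argue it by example rather than as a universal claim. The ratio can exceed $1$, for example when $|y_i'-y_i|$ is chosen small relative to the residual $|x_i^T\theta^*-y_i|$, or when the leverage $x_i^TA^{-1}x_i$ is close to $1$. In that regime the surrogate-based solution stays closer to $\theta^*$ than exact unlearning does. I do not expect a genuine obstacle; the only delicate point is stating the nondegeneracy assumptions explicitly.
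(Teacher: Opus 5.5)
Your proposal is correct and follows the paper's proof essentially step for step. You set $r=0$ so that $M=0$ and $B=x_i(y_i'-y_i)$, reduce the surrogate shift to $(y_i'-y_i)A^{-1}x_i$, take norms of this and of the \cref{thm:1} expression, and cancel the common factor $\|A^{-1}x_i\|$. Your explicit nondegeneracy conditions make precise what the paper leaves implicit: invertibility of $A$, and $1-x_i^TA^{-1}x_i\neq 0$, which is equivalent to invertibility of $\widetilde{X}^T\widetilde{X}+\lambda I$.
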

\definecolor{customdarkgreen}{rgb}{0.0, 0.35, 0.0}
\begin{figure}
    \centering
        \includegraphics[width=\linewidth]{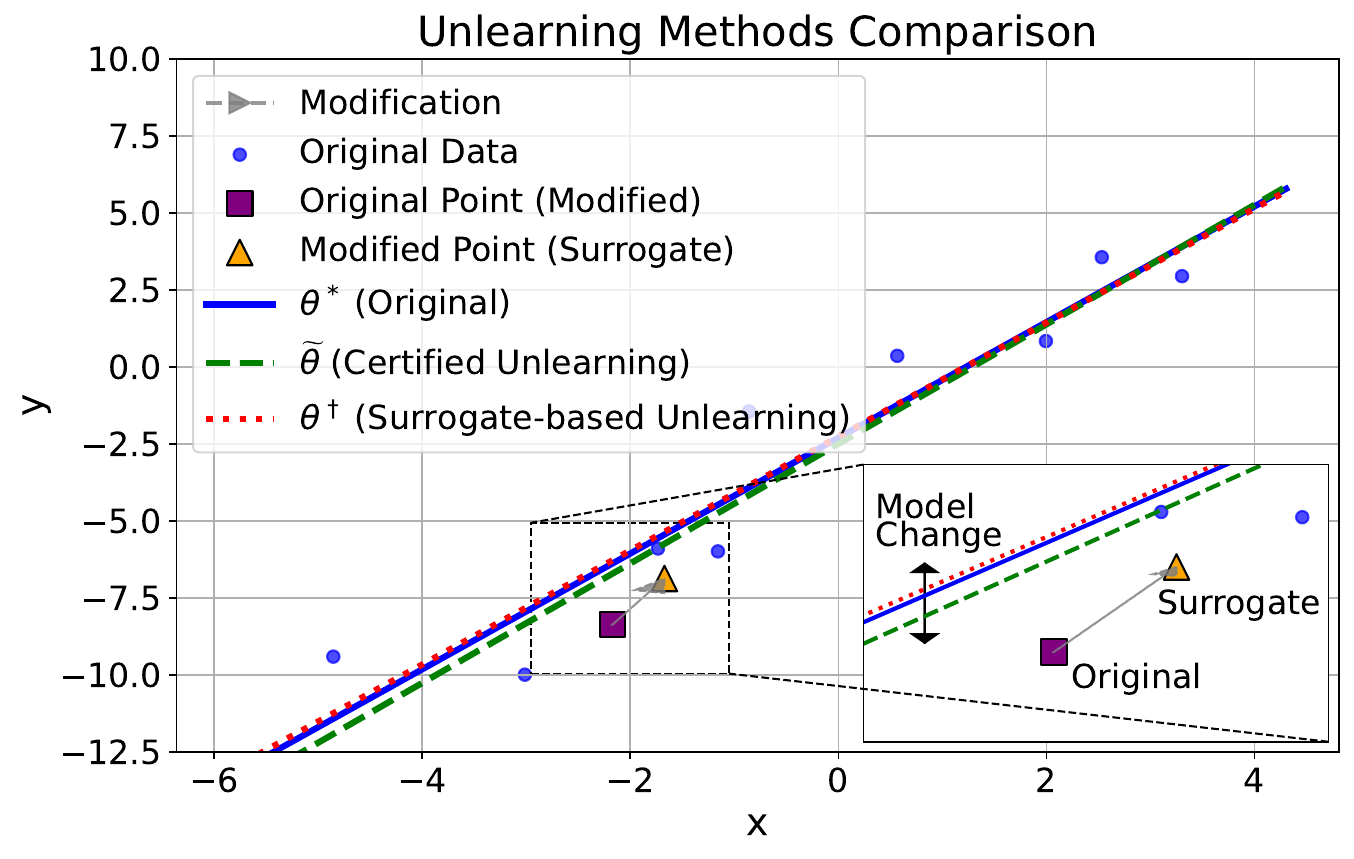}
    \vskip -0.1in
    \caption{Surrogate-based unlearning (\textcolor{red}{$\theta^\dagger$}) can be better than exact unlearing (\textcolor{customdarkgreen}{$\Tilde{\theta}$}) in mapping preservation, \textit{i.e.}, the line is closer to the original (\textcolor{blue}{$\theta^*$}).}
    \label{fig:ridge-regression}
    \vspace{-0.3cm}
\end{figure}
\begin{takeaway}
\Cref{fig:ridge-regression} illustrates the results of \cref{cor:3} implemented on a simple 1D ridge-regression problem. In this simple example, exact unlearning removes any influence of \((x_i, y_i)\) at the cost of a potentially larger parameter change. In contrast, if we carefully choose a surrogate \(\bigl(x_i', y_i'\bigr)\), the resulting parameter \(\theta^\dagger\) can stay closer to the original \(\theta^*\). \cref{subsec:surrogate_selection} discusses how to generalize this idea and select practical surrogates.  
\end{takeaway} 

\begin{proof}
By \cref{eqn:2p.3} and the assumption that $x_i = x_i'$, we have $r=0$. 
By the definition of $M$ in \Cref{eqn:2p.8}, we have $M=0$.
Plugging $M=0$ and $x_i = x_i'$ into \cref{eqn:2p.14}, we get
\begin{align}
\theta^\dagger - \theta^*
& =
A^{-1}\bigl[X^T y + x_i (y_i' -y_i) \bigr]
-
A^{-1}X^T y \label{eqn:3p.1} \\
& =  A^{-1} x_i (y_i' -y_i).  \label{eqn:3p.2}   
\end{align}
Taking norm to \cref{eqn:3p.2} makes
\begin{equation}\label{eqn:3p.3}
  \| \theta^\dagger - \theta^* \| =  \| A^{-1} x_i \| |y_i' -y_i|
\end{equation}
and taking norm to \cref{eqn:1.5} makes
\begin{equation}\label{eqn:3p.4}
\| \widetilde{\theta} - \theta^* \|
=
\frac{| x_i^T \theta^* - y_i |}
{|1 - x_i^T A^{-1} x_i |}
\| A^{-1} x_i \|.    
\end{equation}
By the assumption, \cref{eqn:3p.3} is nonzero. Dividing  \cref{eqn:3p.4} by  \cref{eqn:3p.3}, we have \cref{eqn:3.1}.
\end{proof}


\subsection{Surrogate Dataset Construction}
\label{subsec:surrogate_selection}

\noindent\textbf{Editing tools. }
\label{subsubsec:surrogate_editing}
In practice, any mild editing method that modifies only the undesired attributes while preserving the overall structure can serve as a valid surrogate. We employ three different approaches in Fig. \ref{fig:edit_example}: TediGAN~\cite{xia2021tedigan} for CelebA, SDEdit~\cite{meng2022sdedit} for the \emph{``Xerxes''} case in SD3~\cite{esser2024scaling} , and manual painting for the national flags like \emph{``Japan flag''}. Further discussion on diverse surrogate construction strategies, including noising and flipping, is provided in Table \ref{tab:ablation_surrogate} and Fig. \ref{fig:surrogate_comparison}.

\begin{figure}[t]
    \centering
    \begin{tikzpicture}
        \node[inner sep=0pt] (p1a) at (0, 0) {\includegraphics[width=0.25\linewidth]{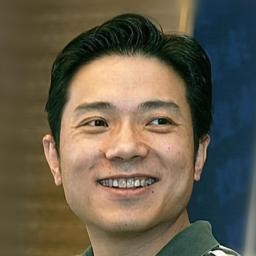}};
        \node[fill=gray!30, fill opacity=0.8, text=black, text opacity=1, inner sep=2pt]
              at ([yshift=-0.25cm]p1a.north) {\footnotesize\sffamily To forget};
    \end{tikzpicture}
    \begin{tikzpicture}
        \node[inner sep=0pt] (p2a) at (0, 0) {\includegraphics[width=0.25\linewidth]{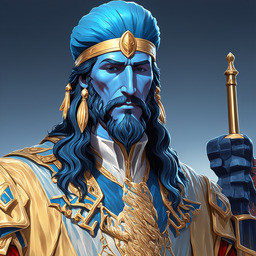}};
        \node[fill=gray!30, fill opacity=0.8, text=black, text opacity=1, inner sep=2pt]
              at ([yshift=-0.25cm]p2a.north) {\footnotesize\sffamily To forget};
    \end{tikzpicture}
    \begin{tikzpicture}
        \node[inner sep=0pt] (p3a) at (0, 0) {\includegraphics[width=0.25\linewidth]{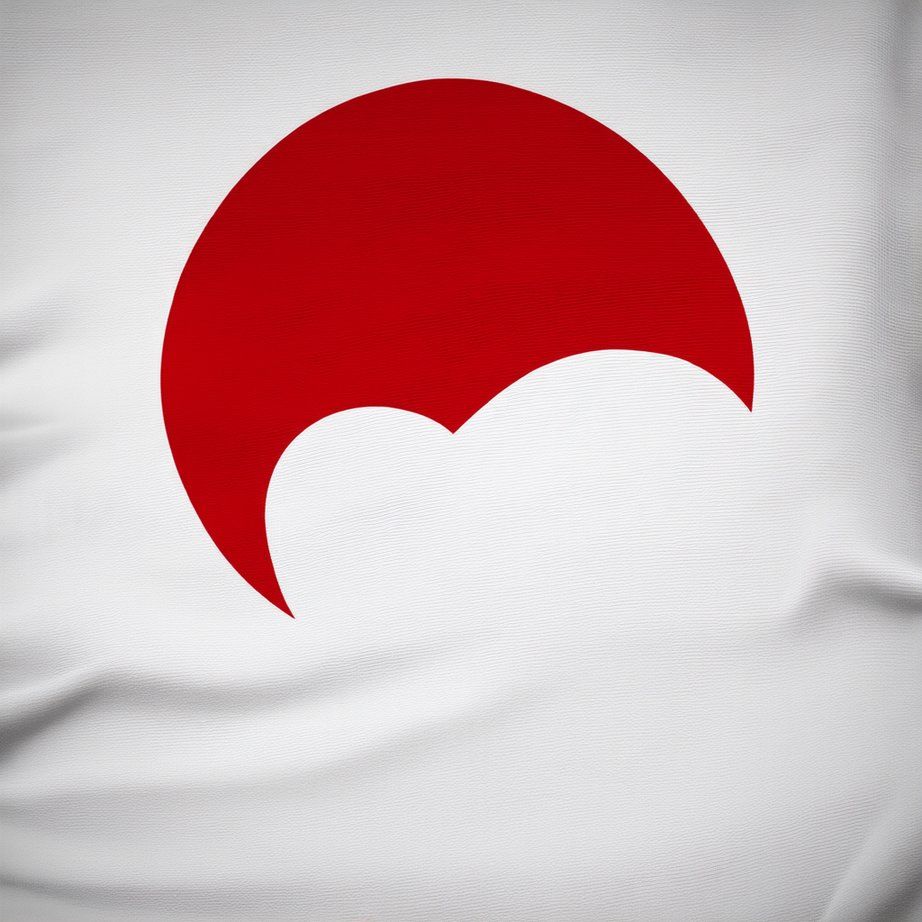}};
        \node[fill=gray!30, fill opacity=0.8, text=black, text opacity=1, inner sep=2pt]
              at ([yshift=-0.25cm]p3a.north) {\footnotesize\sffamily To forget};
    \end{tikzpicture}


    \begin{tikzpicture}
        \node[inner sep=0pt] (p1b) at (0, 0) {\includegraphics[width=0.25\linewidth]{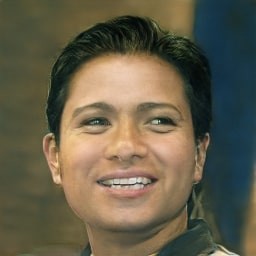}};
        \node[fill=gray!30, fill opacity=0.8, text=black, text opacity=1, inner sep=2pt]
              at ([yshift=-0.25cm]p1b.north) {\footnotesize\sffamily Surrogate};
    \end{tikzpicture}
    \begin{tikzpicture}
        \node[inner sep=0pt] (p2b) at (0, 0) {\includegraphics[width=0.25\linewidth]{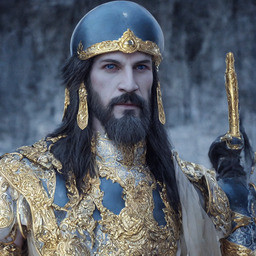}};
        \node[fill=gray!30, fill opacity=0.8, text=black, text opacity=1, inner sep=2pt]
              at ([yshift=-0.25cm]p2b.north) {\footnotesize\sffamily Surrogate};
    \end{tikzpicture}
    \begin{tikzpicture}
        \node[inner sep=0pt] (p3b) at (0, 0) {\includegraphics[width=0.25\linewidth]{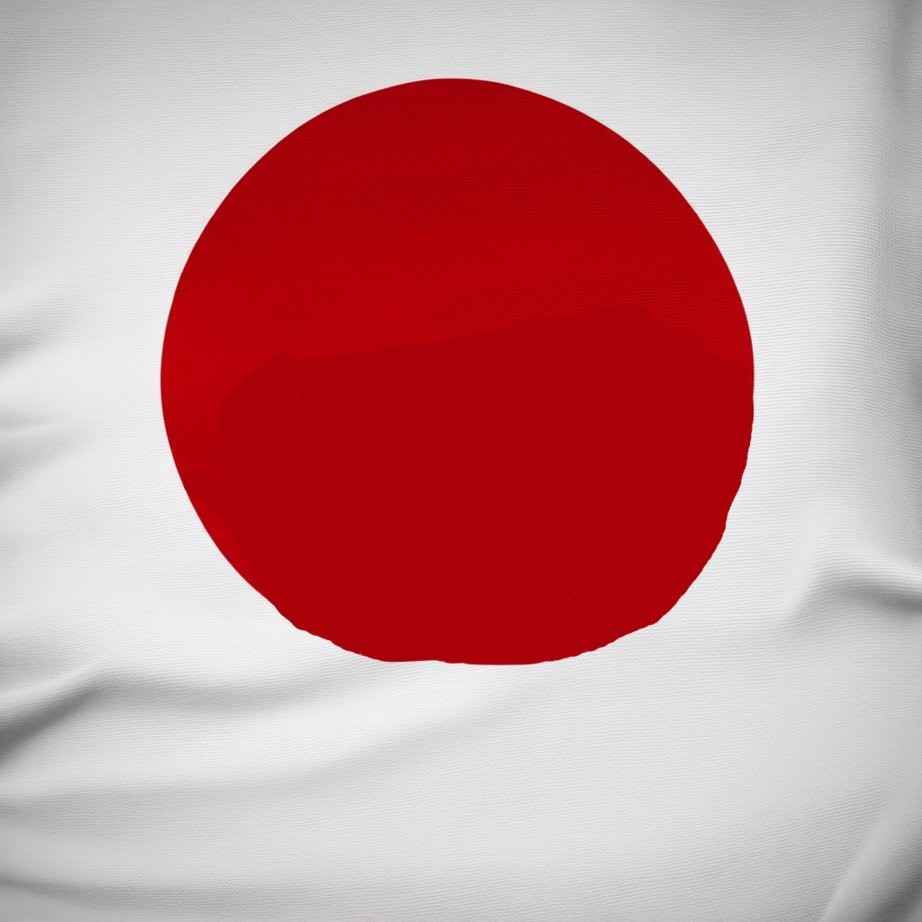}};
        \node[fill=gray!30, fill opacity=0.8, text=black, text opacity=1, inner sep=2pt]
              at ([yshift=-0.25cm]p3b.north) {\footnotesize\sffamily Surrogate};
    \end{tikzpicture}
    \caption{Surrogate data construction of (left) CelebA with TediGAN, (middle) SD3 with SDEdit, and (right) manual editing.}
    \label{fig:edit_example}
\end{figure}

\noindent\textbf{Scalability, automation, and flexibility. }
These surrogate construction methods are straightforward, efficient, and automatable. Using TediGAN~\cite{xia2021tedigan} or SDEdit~\cite{meng2022sdedit}, we constructed the surrogate images within five seconds per image with fixed prompts and consistent hyperparameters. This deterministic pipeline ensures scalability to larger datasets. Additionally, manual editing provides a crucial advantage: if a user encounters an undesirable output on their device, they can manually edit the image in just a few seconds, creating a suitable surrogate that prevents future occurrences of similar undesired generations. Our approach accommodates diverse editing techniques, combining automation and flexibility to streamline surrogate data creation efficiently.

\begin{figure}[t!]
    \centering
    \begin{tikzpicture}
        \node[anchor=south west,inner sep=0] (image) at (0,0) {\includegraphics[width=0.97\linewidth]{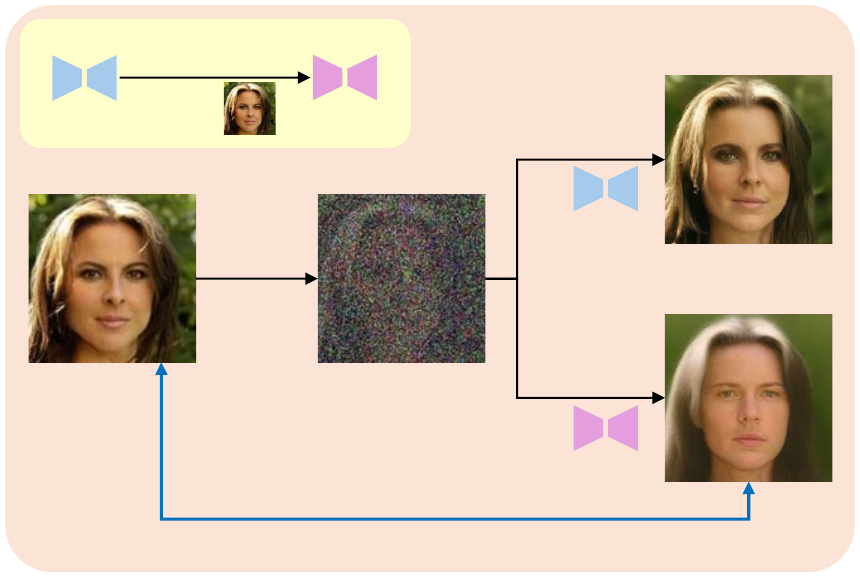}};
        \begin{scope}[x={(image.south east)},y={(image.north west)}]
            \node[scale=0.7] at (0.1, 0.925) {\sffamily pretrained};
            \node[scale=0.7] at (0.4, 0.93) {\sffamily unlearned};
            \node[scale=0.7] at (0.2, 0.81) {\sffamily unlearn};
    
            \node[scale=0.95] at (0.13, 0.695) {$t = 0$};
            \node[scale=0.95] at (0.13, 0.32) {$x_0^f$};
    
            \node[scale=0.95] at (0.47, 0.695) {$t = 500$};
            \node[scale=0.95] at (0.47, 0.32) {$x_t^f$};
            \node[scale=0.85] at (0.3, 0.56) {\sffamily noising};
    
            \node[scale=0.95] at (0.87, 0.9) {$t = 0$};
    
            \node[scale=0.85] at (0.685, 0.77) {\sffamily denoising};
            \node[scale=0.85] at (0.685, 0.36) {\sffamily denoising};
            \node[scale=0.85] at (0.64, 0.675) {\sffamily w/};
            \node[scale=0.85] at (0.64, 0.26) {\sffamily w/};
            \node[scale=0.9] at (0.52, 0.06) {\sffamily measure similarity using NN};
        \end{scope}
    \end{tikzpicture}
    \caption{Overview of SSCD to validate the success of forgetting.}
    \label{fig:definition_of_sscd}
\end{figure}

\begin{figure*}[!t]
    \centering
      \subfloat[Qualitative comparisons for forgetting.\label{fig:quali_forgetting}]{%
        \begin{minipage}[c]{0.48\linewidth}
            \centering
            \small
            \setlength{\tabcolsep}{1.5pt}
            \begin{tabular}{ccccc}
                \toprule
                Target & NegGrad & EraseDiff & SISS & Ours \\
                \midrule
                \includegraphics[width=0.184\textwidth]{images/main/sscd_figure/00068.jpg} &
                \begin{overpic}[width=0.184\textwidth]{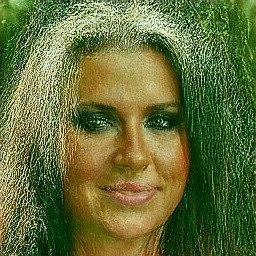}
                \end{overpic} &
                \begin{overpic}[width=0.184\textwidth]{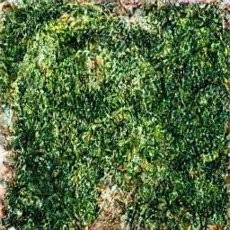}
                \end{overpic} &
                \begin{overpic}[width=0.184\textwidth]{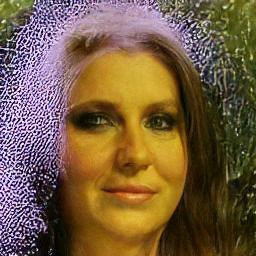}
                \end{overpic} &
                \begin{overpic}[width=0.184\textwidth]{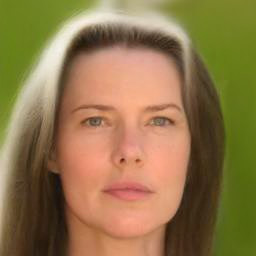}
                \end{overpic} \\
                \includegraphics[width=0.184\textwidth]{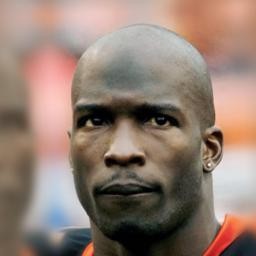} &
                \begin{overpic}[width=0.184\textwidth]{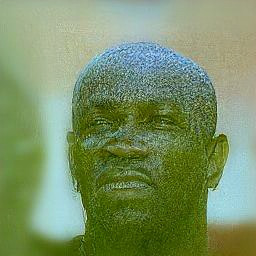}
                \end{overpic} &
                \begin{overpic}[width=0.184\textwidth]{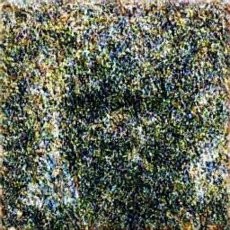}
                \end{overpic} &
                \begin{overpic}[width=0.184\textwidth]{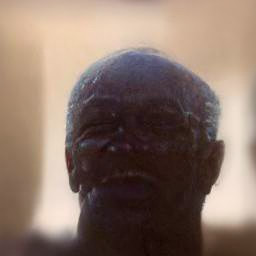}
                \end{overpic} &
                \begin{overpic}[width=0.184\textwidth]{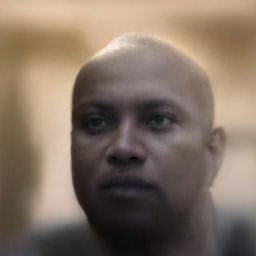}
                \end{overpic} \\
                \includegraphics[width=0.184\textwidth]{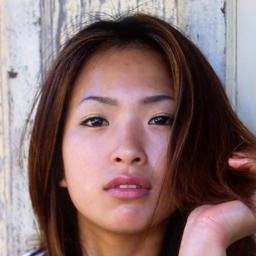} &
                \begin{overpic}[width=0.184\textwidth]{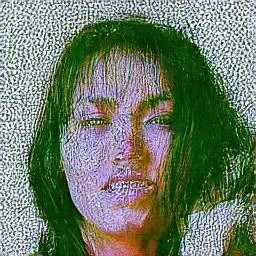}
                \end{overpic} &
                \begin{overpic}[width=0.184\textwidth]{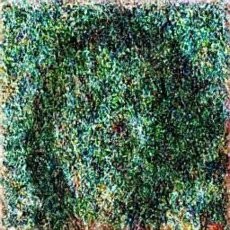}
                \end{overpic} &
                \begin{overpic}[width=0.184\textwidth]{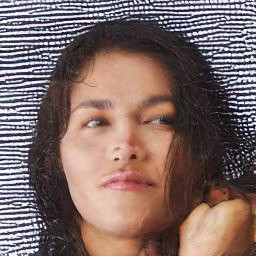}
                \end{overpic} &
                \begin{overpic}[width=0.184\textwidth]{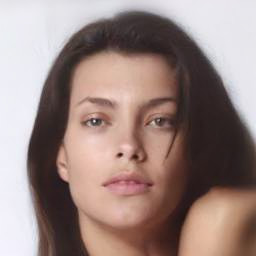}
                \end{overpic} \\
                \bottomrule
            \end{tabular}
        \end{minipage}
        }
      \subfloat[Qualitative comparisons for model integrity.\label{fig:quali_integrity}]{%
        \begin{minipage}[c]{0.48\linewidth}
            \centering
            \small
            \setlength{\tabcolsep}{1.5pt}
            \begin{tabular}{ccccc}
                \toprule
                Pretrained & NegGrad & EraseDiff & SISS & Ours \\
                \midrule
                \includegraphics[width=0.184\textwidth]{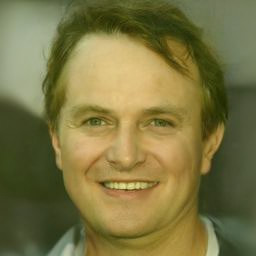} &
                \includegraphics[width=0.184\textwidth]{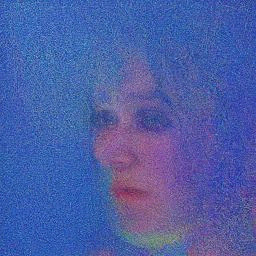} &
                \includegraphics[width=0.184\textwidth]{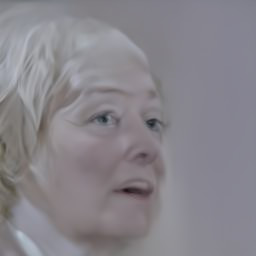} &
                \includegraphics[width=0.184\textwidth]{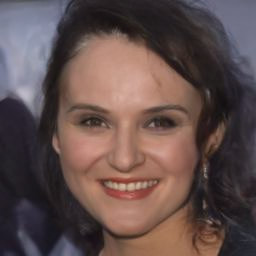} &
                \includegraphics[width=0.184\textwidth]{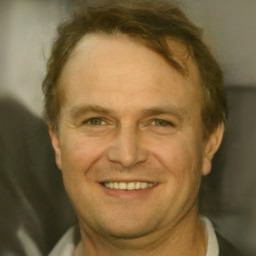} \\
                \includegraphics[width=0.184\textwidth]{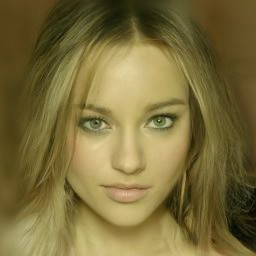} &
                \includegraphics[width=0.184\textwidth]{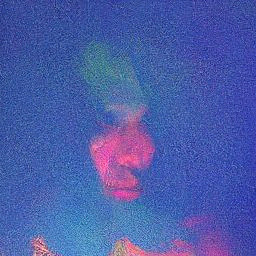} &
                \includegraphics[width=0.184\textwidth]{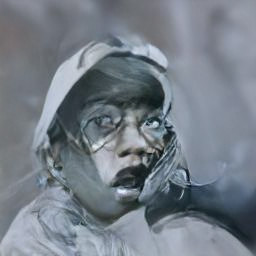} &
                \includegraphics[width=0.184\textwidth]{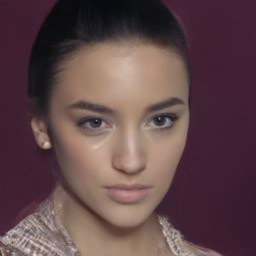} &
                \includegraphics[width=0.184\textwidth]{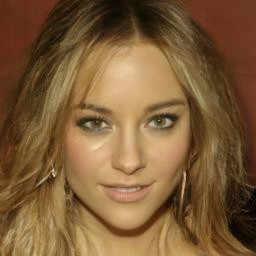} \\
                \includegraphics[width=0.184\textwidth]{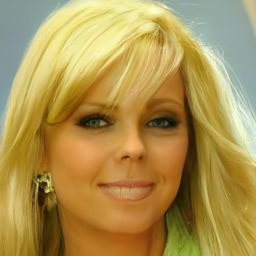} &
                \includegraphics[width=0.184\textwidth]{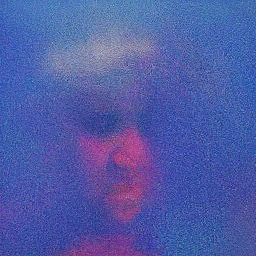} &
                \includegraphics[width=0.184\textwidth]{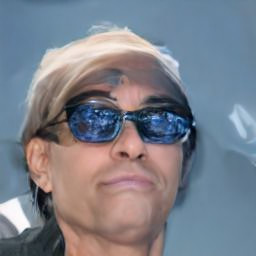} &
                \includegraphics[width=0.184\textwidth]{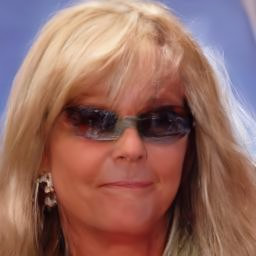} &
                \includegraphics[width=0.184\textwidth]{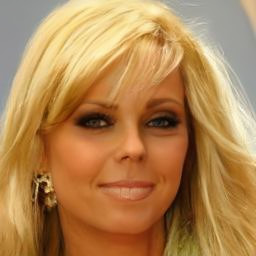} \\
                \bottomrule
            \end{tabular}
    \end{minipage}
    }
    \caption{Qualitative results on (a) forgetting target instance on (b) preserving model integrity on generation after unlearning.}
    \label{fig:sscd_combine}
\end{figure*}

\begin{table*}[t!]
\caption{Quantitative results on forgetting single instance and multiple instances. Ours$^\dagger$ refers to randomly sampling the surrogate data $x_0^s$ from the remember dataset $\mathcal{D}_{r}$ at each update.}
\label{tab:celeba}
\begin{center}
\resizebox{\textwidth}{!}{
\begin{tabular}{l
@{  }>{\centering\arraybackslash}p{1.7cm}@{ }
@{    }>{\centering\arraybackslash}p{1.4cm}@{}
@{}>{\centering\arraybackslash}p{1.4cm}@{}
@{}>{\centering\arraybackslash}p{1.1cm}@{  }
@{    }>{\centering\arraybackslash}p{1.3cm}@{  }
@{    }>{\centering\arraybackslash}p{1.7cm}@{    }
cccc}
\toprule
& \multicolumn{5}{c|}{Single Instance} & \multicolumn{5}{c}{Multiple Instances} \\
\cmidrule(r){2-6} \cmidrule(l){7-11}
            & SSCD$<$0.4  & LPIPS$\downarrow$   & SSIM$\uparrow$ & $\text{FID}_{pre}$$\downarrow$   & $\text{FID}_{real}$$\downarrow$ 
            & SSCD$<$0.4  & LPIPS$\downarrow$   & SSIM$\uparrow$ & $\text{FID}_{pre}$$\downarrow$   & $\text{FID}_{real}$$\downarrow$ \\
\midrule
NegGrad~\cite{golatkar2020eternal}     & \checkmark (0.18) & 1.00 & 0.34 & 223.5 & 226.1 
            & \checkmark (0.14) & 0.96 & 0.07 & 415.4 & 410.8 \\
EraseDiff~\cite{wu2024erasediff}   & \checkmark (0.03) & 0.74 & 0.61 & 80.7  & 87.8 
            & \checkmark (0.04) & 0.66 & 0.71 & 45.8  & 51.1 \\
SISS~\cite{silas2024data}        & \checkmark (0.21) & 0.43 & \cellcolor{tabsecond}0.84 & \cellcolor{tabsecond}15.7 & \cellcolor{tabsecond}21.3 
            & \checkmark (0.22) & 0.51 & \cellcolor{tabsecond}0.81 & 26.8 & 31.5 \\
Ours$^\dagger$
            & \checkmark (0.17) & \cellcolor{tabsecond}0.41 & \cellcolor{tabsecond}0.84 & \cellcolor{tabsecond}15.7 & 24.1 
            & \checkmark (0.16) & \cellcolor{tabfirst}0.36 & 0.80 & \cellcolor{tabfirst}13.9 & \cellcolor{tabsecond}27.2 \\
Ours        & \checkmark (0.19) & \cellcolor{tabfirst}0.35 & \cellcolor{tabfirst}0.87 & \cellcolor{tabfirst}9.8 & \cellcolor{tabfirst}18.0 
            & \checkmark (0.21) & \cellcolor{tabsecond}0.37 & \cellcolor{tabfirst}0.88 & \cellcolor{tabsecond}17.5 & \cellcolor{tabfirst}23.4 \\
\hline
\end{tabular}
}
\end{center}
\end{table*}

\section{Experiments}
\label{sec:experiments}
We show that our prompt-free unlearning method unlearns specific data while ensuring model integrity for both single and multiple images in unconditional DMs, and is applicable to unlearning undesirable images in conditional DMs.

\subsection{Instance Unlearning on Unconditional Diffusion Models}
\label{sec:details}

\noindent\textbf{Datasets. }
CelebA~\cite{liu2015faceattributes} is a large-scale facial attributes dataset containing over 200,000 celebrity images with 40 attribute annotations, widely used for facial recognition and attribute prediction tasks. CelebA-HQ~\cite{karras2018progressive} is a high-quality version of the CelebA dataset, containing 30,000 images of celebrity faces with improved resolution and visual fidelity. FFHQ~\cite{karras2019style} is a high-quality dataset of 70,000 human face images with diverse ages, ethnicities, and accessories, designed for generative model training and evaluation.

\noindent\textbf{Models. }
CelebA-HQ data unlearning experiments utilized a pretrained model checkpoint provided by ~\cite{ho2020denoising}, which is available at \url{https://huggingface.co/google/ddpm-celebahq-256}. Experiments on correcting misrepresentations applied Stable Diffusion 3~\cite{esser2024scaling}.

\noindent\textbf{Metrics. }
We utilize LPIPS~\cite{zhang2018unreasonable}, SSIM~\cite{wang2004image} and FID~\cite{heusel2017gans} to measure the model integrity. LPIPS~\cite{zhang2018unreasonable} measures the perceptual similarity between two images by comparing their deep feature representations extracted from a pretrained neural network. Lower LPIPS values indicate higher similarity, meaning fewer semantic changes between the compared images. In our context, LPIPS is used to evaluate the semantic consistency of a model's outputs before and after unlearning, given the same random seed. SSIM~\cite{wang2004image} quantifies the structural similarity between two images by comparing luminance, contrast, and structure. It ranges from 0 to 1, where higher values indicate greater similarity. In our experiments, SSIM is used to assess how much structural information is preserved in the model's outputs after unlearning. FID~\cite{heusel2017gans} measures the distance between two distributions of images in the feature space of a pretrained Inception network. It is widely used to evaluate the quality of generated images. A lower FID indicates that the distance between two distributions is closer. We calculate all metrics by averaging the results from 6 experiments, each evaluated using 10,000 outputs from the unlearned model and pretrained model. Self Supervised Copy Detection (SSCD)~\cite{pizzi2022self} is employed to verify whether the forgetting objective is achieved. As shown in Fig. \ref{fig:definition_of_sscd}, SSCD is a similarity metric between denoised images before and after the unlearning. Based on previous studies~\cite{somepalli2023understanding, wen2024detecting, chen2024extracting}, an SSCD below 0.4 suggests sufficient forgetting. However, as shown in Fig. \ref{fig:quali_forgetting}, severe artifacts lead to excessively low SSCD, implying an excessive loss of model integrity. Therefore, a low SSCD is not a sufficient indicator of successful unlearning: both SSCD$<$0.4 (1st column in Table \ref{tab:celeba}) and preserving model integrity should be achieved.

\noindent\textbf{Experiment setup. }
All experiments were conducted on a single A100 80GB GPU with a batch size of 8. DDPM was trained with learning rate $5 \times 10^{-6}$, 240 steps, and $\beta = 5 \times 10^{-5}$. Stable Diffusion 3 was trained with learning rate $1 \times 10^{-5}$, 100 steps, and $\beta = 2 \times 10^{-4}$. $\beta$ was calculated with respect to $t_{max}=1000$. For the multiple instance unlearning task, the following identities are unlearned sequentially: \textit{`Robin Li', `Kate del Castillo', `Elva Hsiao'}, and \textit{`Ed Harris'}. For the Stable Diffusion 3 experiments, we utilized LoRA~\cite{hu2022lora} to handle the large size of the model. Also, for the Stable Diffusion 3 experiments, since we cannot access the training dataset, we used 100 images generated from the same prompt as the remember dataset ($D_r$) instead. For the surrogate construction of flag images, \textit{Drawing} software is used for manual painting.

As for other baselines, The learning rate for all baselines was set to $5 \times 10^{-6}$ for DDPM experiments and $1 \times 10^{-5}$ for Stable Diffusion 3 experiments. In the DDPM experiments, for SISS~\cite{silas2024data}, we followed the implementation provided in the code. NegGrad~\cite{golatkar2020eternal} and EraseDiff~\cite{wu2024erasediff} were trained for 10 and 60 steps, respectively. In the Stable Diffusion 3 experiments, SISS, NegGrad, and EraseDiff were trained for 100, 60, and 100 steps, respectively.

\begin{figure*}[ht]
    \centering
    \small
    \setlength{\tabcolsep}{0.5pt}
    \begin{tabular}{@{}ccc|ccc|ccc|c@{}}
        \toprule
        & & & \multicolumn{3}{c|}{Prompt-based} & \multicolumn{3}{c|}{Prompt-free} &  \\
        & & Original & ESD~\cite{gandikota2023erasing} & SPM~\cite{lyu2024one} & DUO~\cite{park2024direct} &  \makecell{\small{Neggrad}\!\small{\cite{golatkar2020eternal}}} & \makecell{\small{EraseDiff}\!\small{\cite{wu2024erasediff}}} & SISS~\cite{silas2024data} & Ours \\
        \midrule
        \multirow{3}{*}{\parbox[c][2.6cm][c]{0.5cm}{\centering\rotatebox{90}{\textbf{To forget}}}} &
        \raisebox{2.6ex}{\rotatebox[origin=l]{90}{\small\emph{``Xerxes''}}} &
        \includegraphics[width=0.1\textwidth]{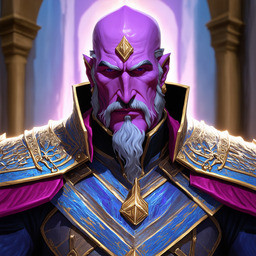} &
        \includegraphics[width=0.1\textwidth]{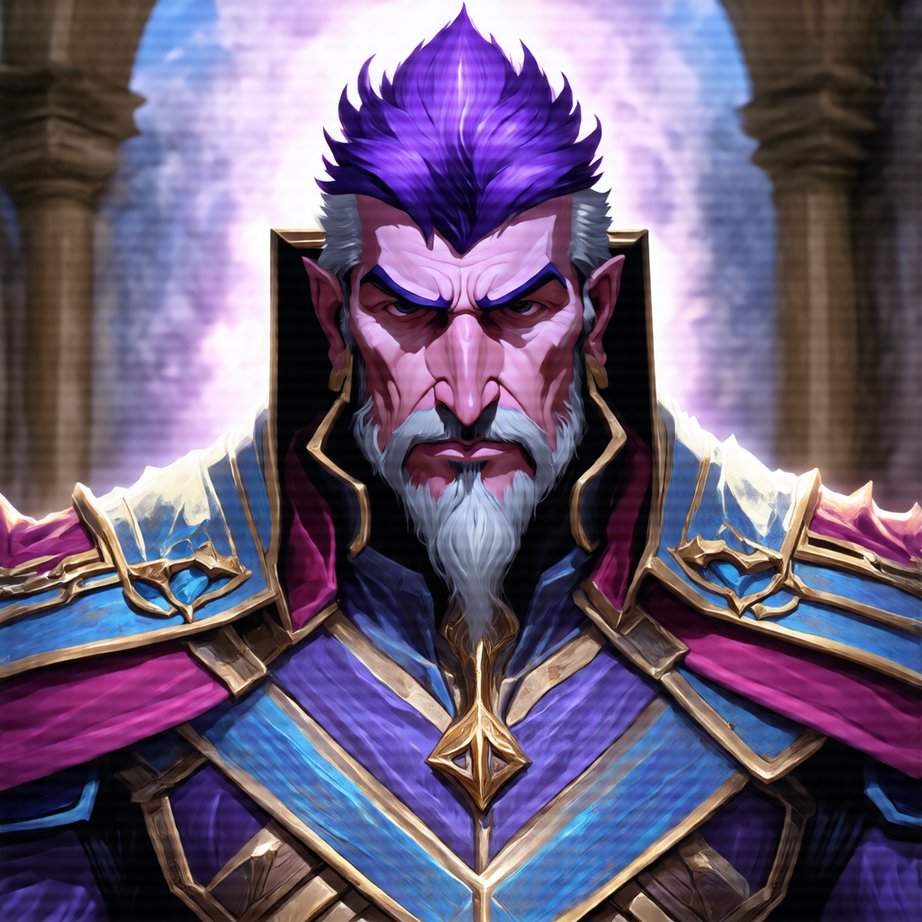} &
        \includegraphics[width=0.1\textwidth]{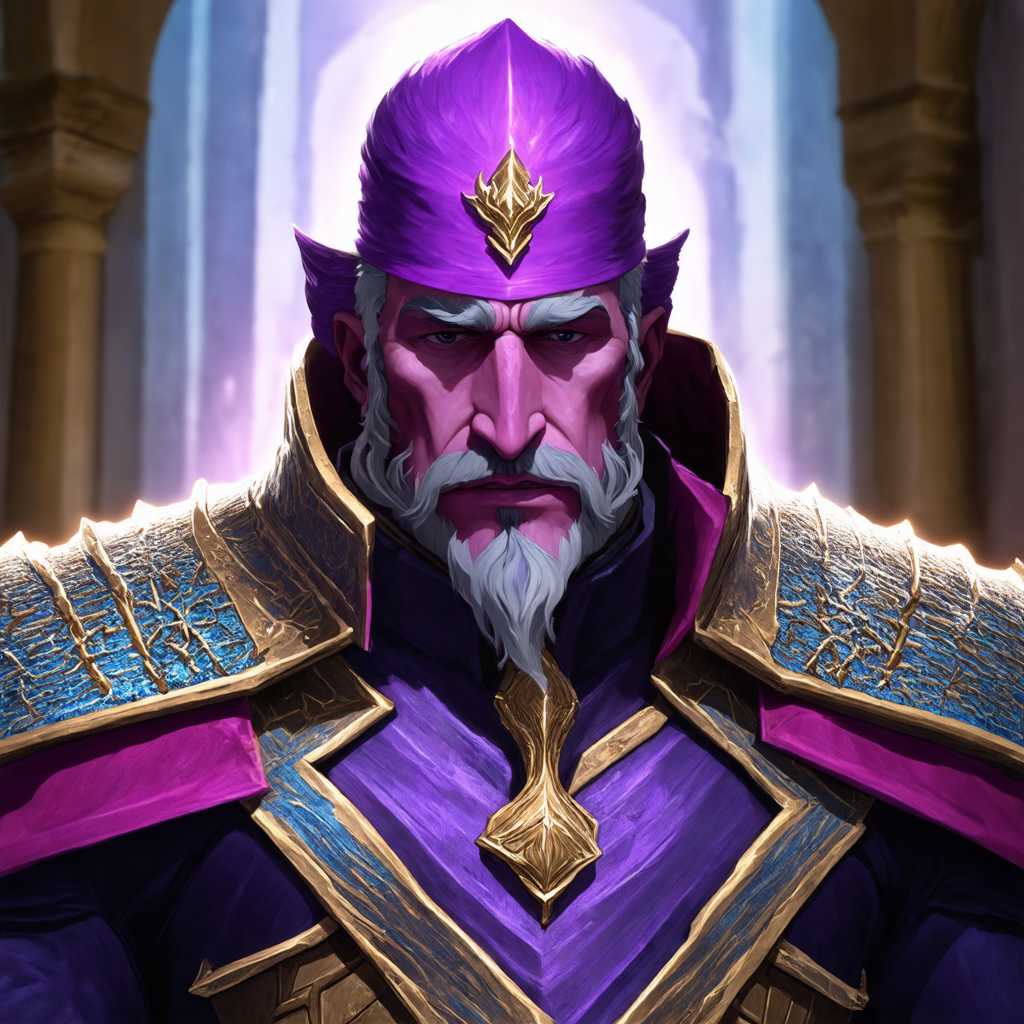} &
        \includegraphics[width=0.1\textwidth]{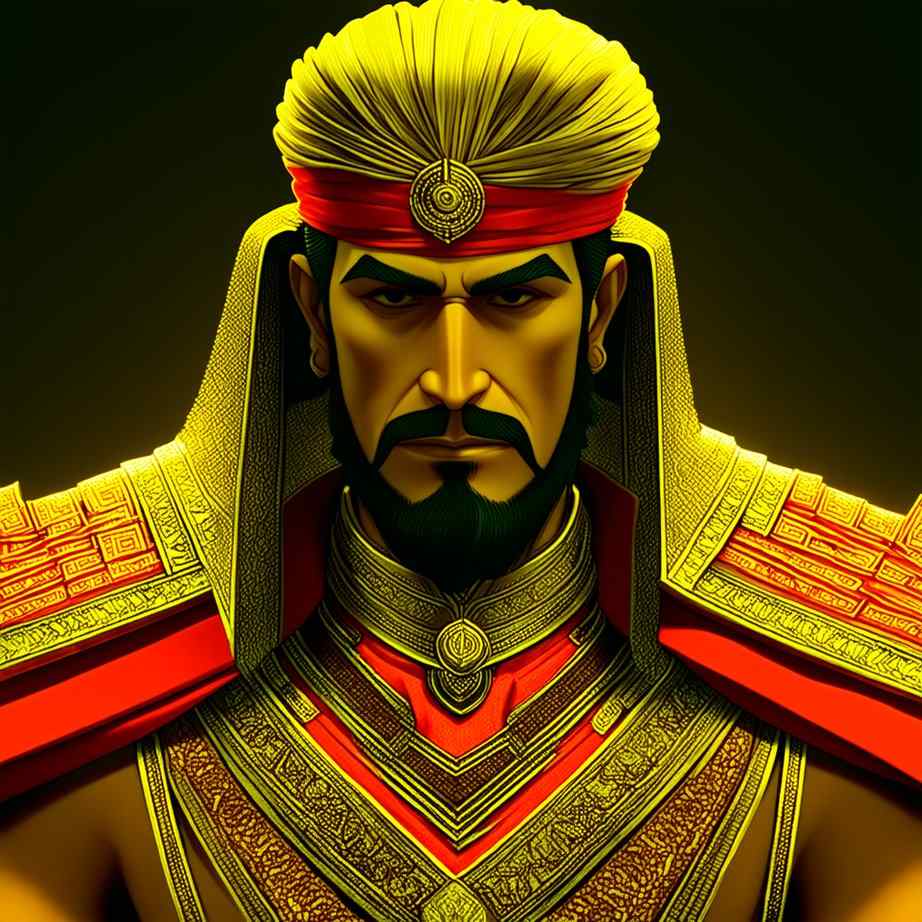} &
        \includegraphics[width=0.1\textwidth]{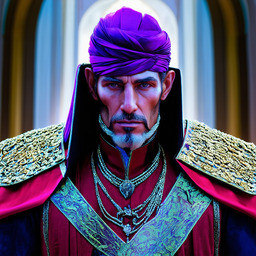} &
        \includegraphics[width=0.1\textwidth]{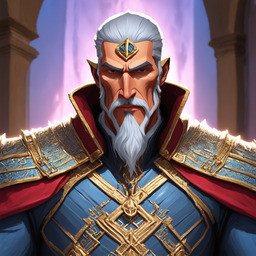} &
        \includegraphics[width=0.1\textwidth]{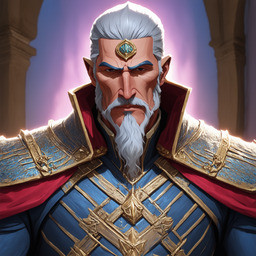} &
        \includegraphics[width=0.1\textwidth]{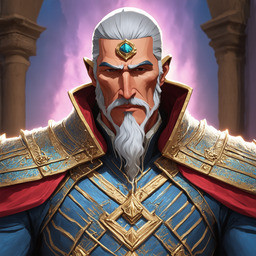} \\

        & \raisebox{0.6ex}{\rotatebox[origin=l]{90}{\small\emph{``Japan flag''}}} &
        \includegraphics[width=0.1\textwidth]{images/main/sd3_figure_down/Japan/target_original.jpg} &
        \includegraphics[width=0.1\textwidth]{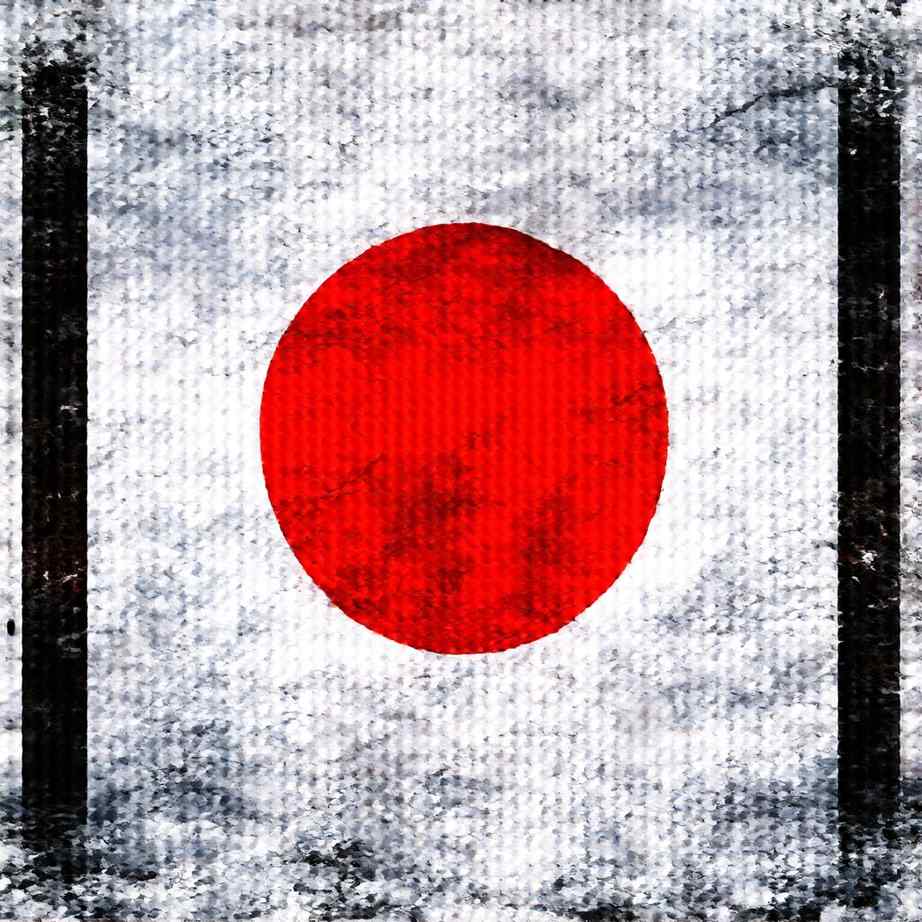} &
        \includegraphics[width=0.1\textwidth]{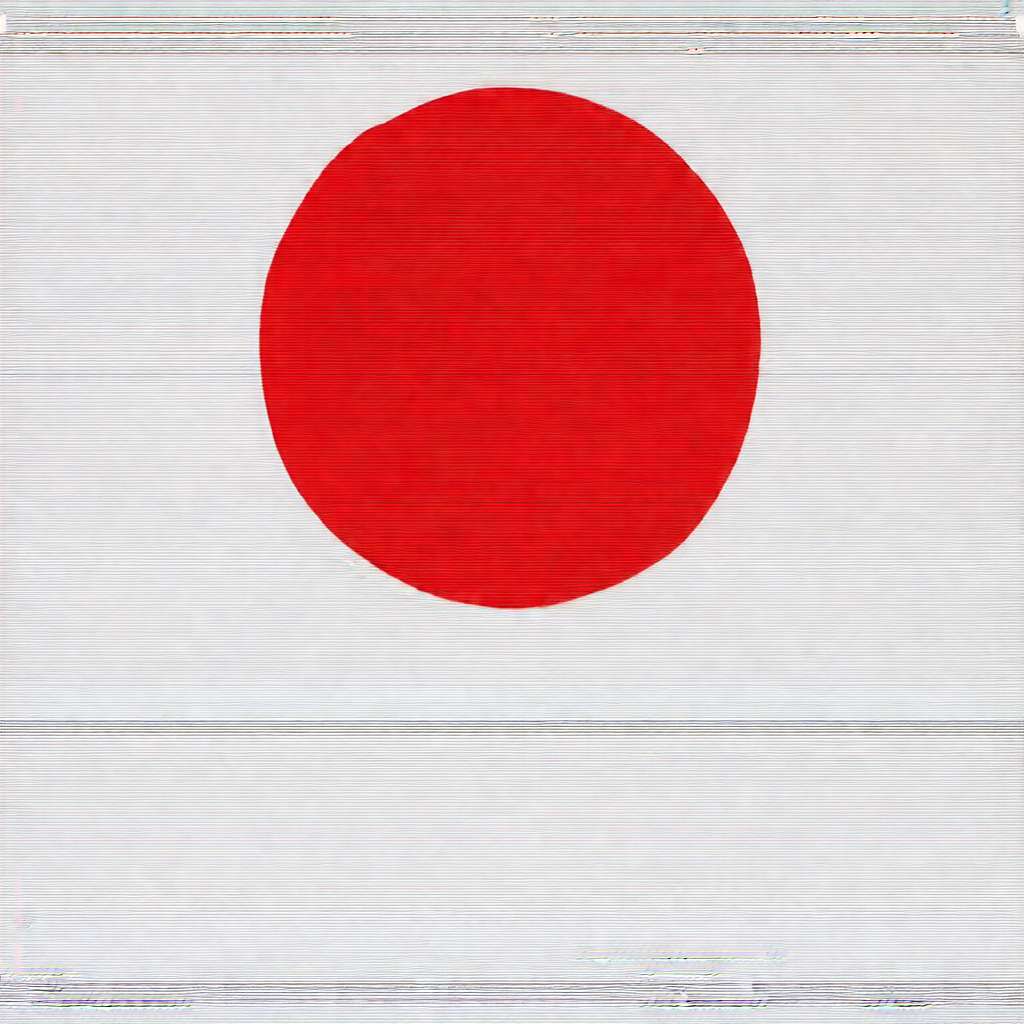} &
        \includegraphics[width=0.1\textwidth]{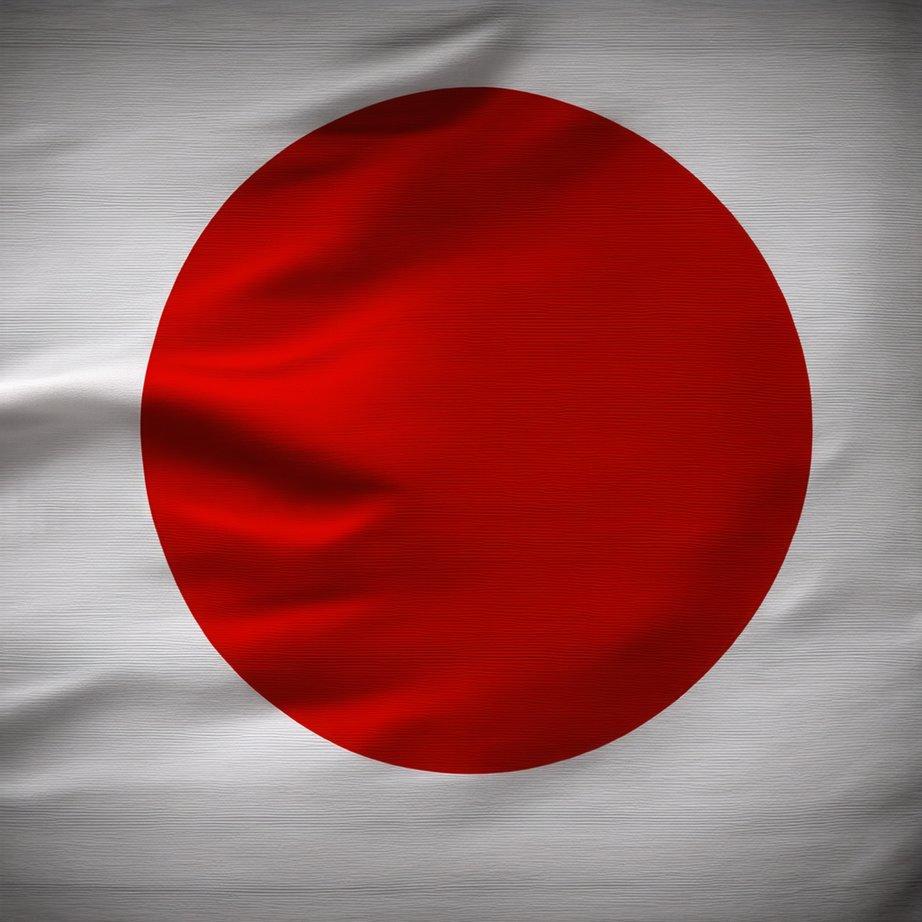} &
        \includegraphics[width=0.1\textwidth]{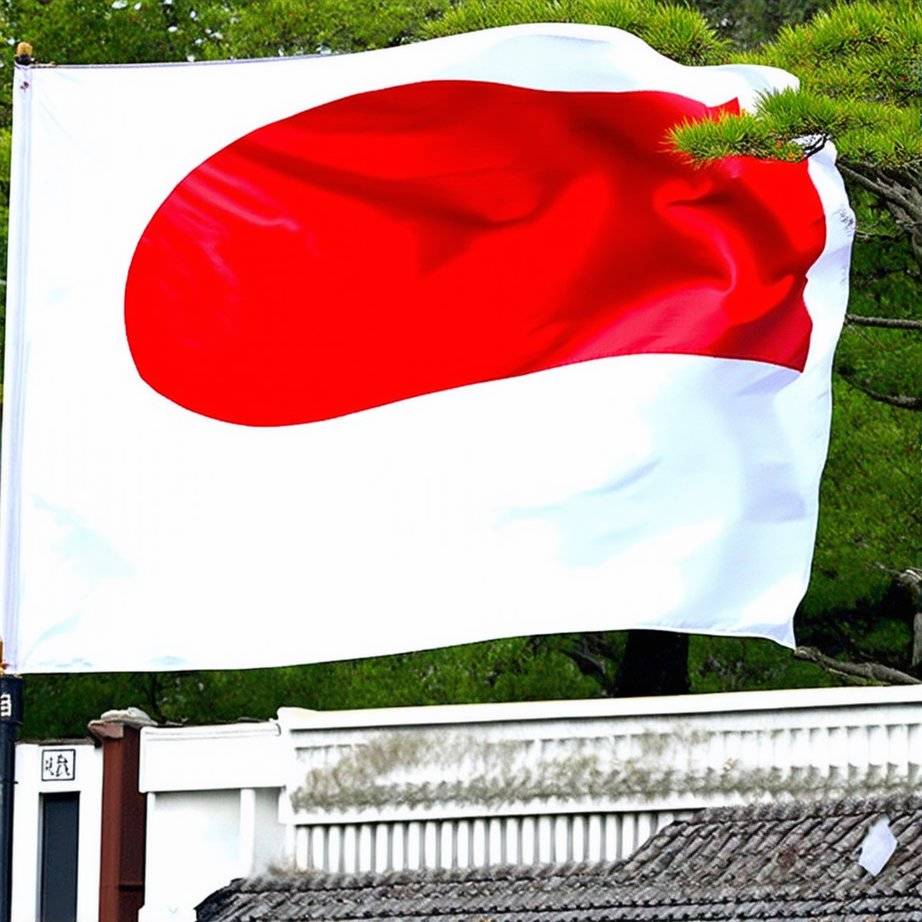} &
        \includegraphics[width=0.1\textwidth]{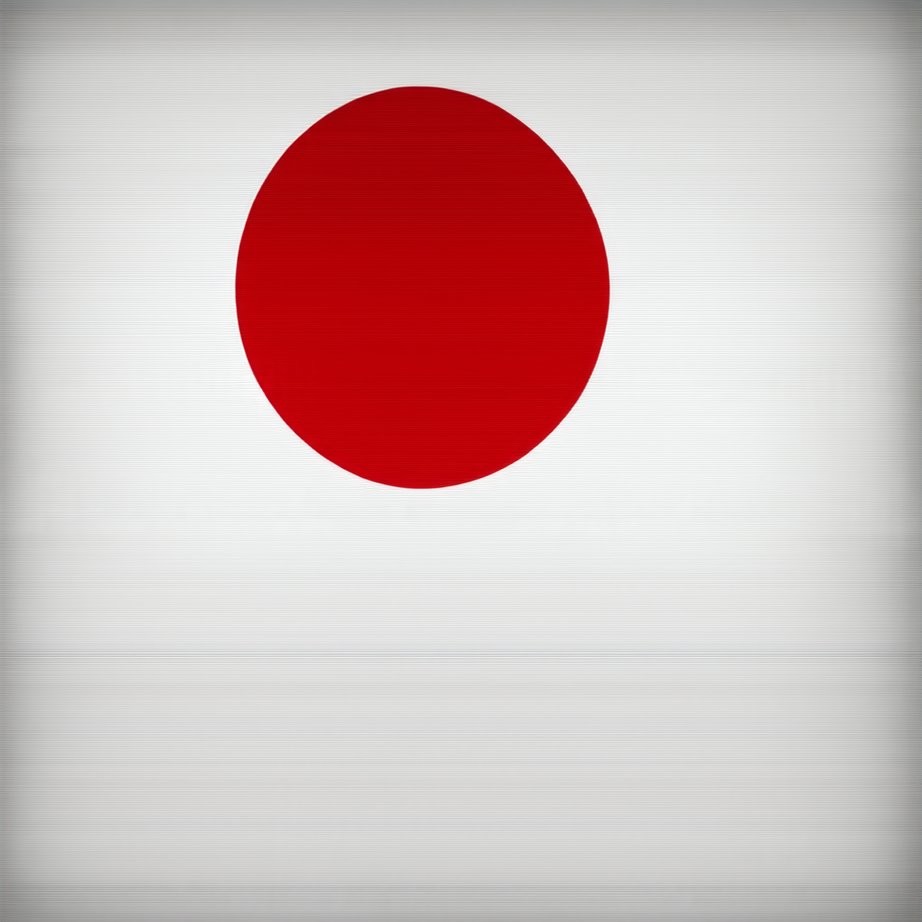} &
        \includegraphics[width=0.1\textwidth]{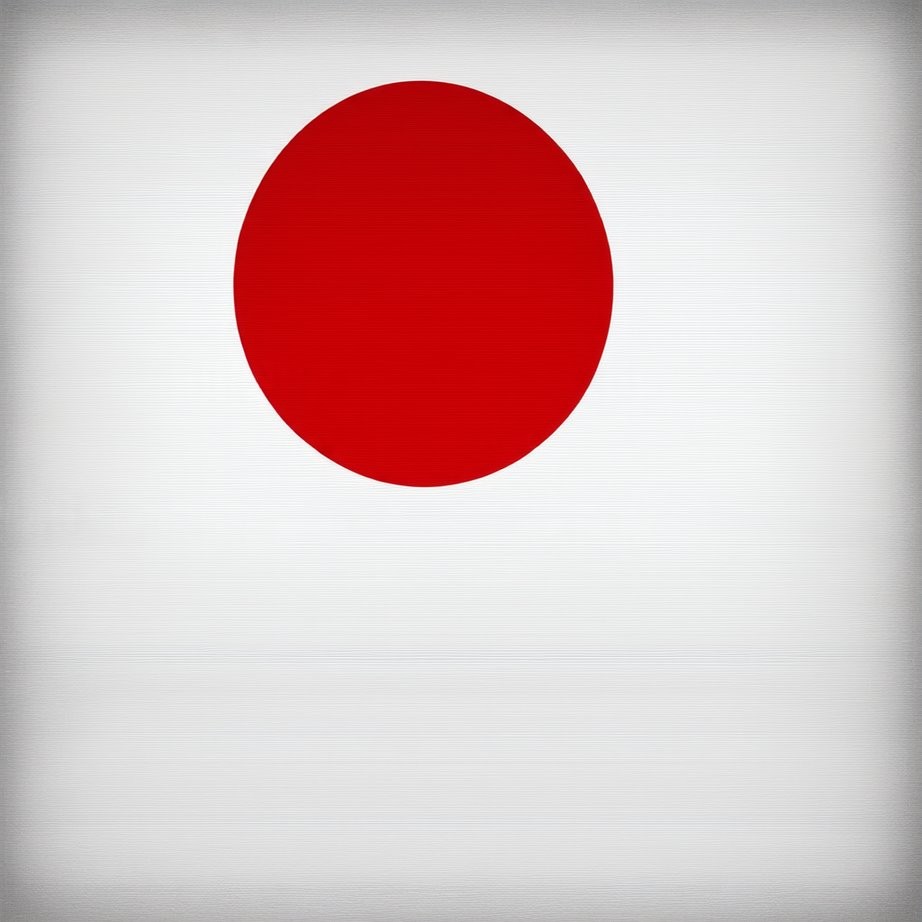} &
        \includegraphics[width=0.1\textwidth]{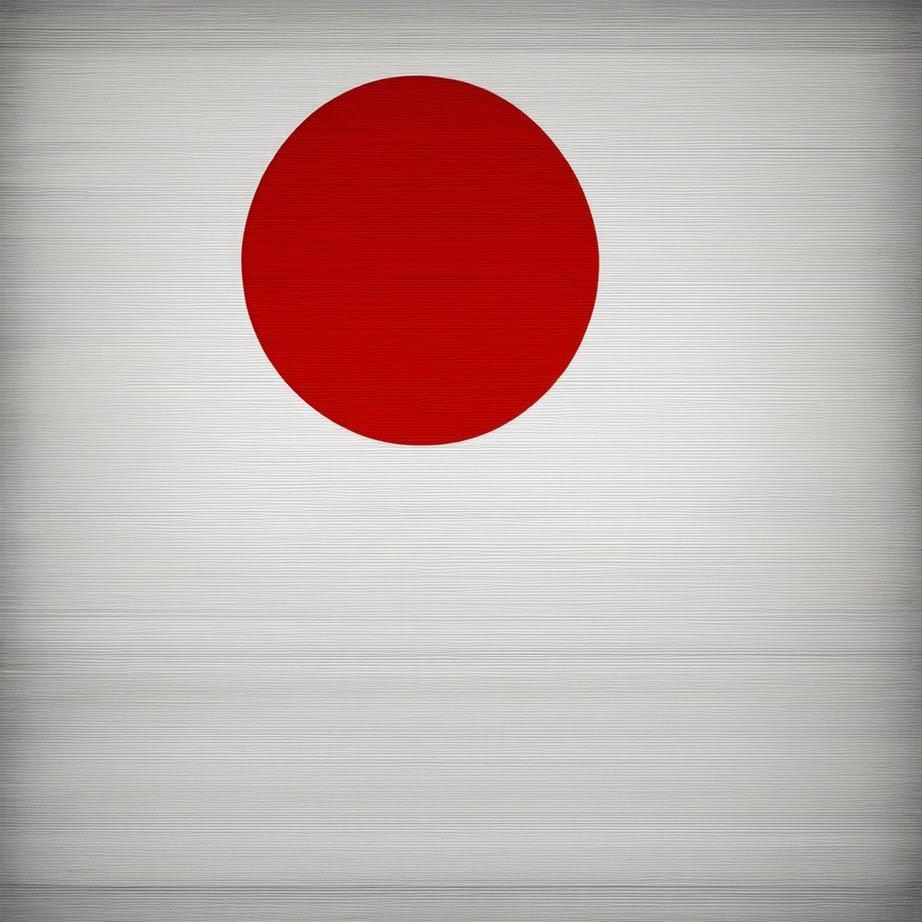} \\

        & \raisebox{0.3ex}{\rotatebox[origin=l]{90}{\small\emph{``Ireland flag''}}} &
        \includegraphics[width=0.1\textwidth]{images/main/sd3_figure_down/Ireland/target_pretrained.jpg} &
        \includegraphics[width=0.1\textwidth]{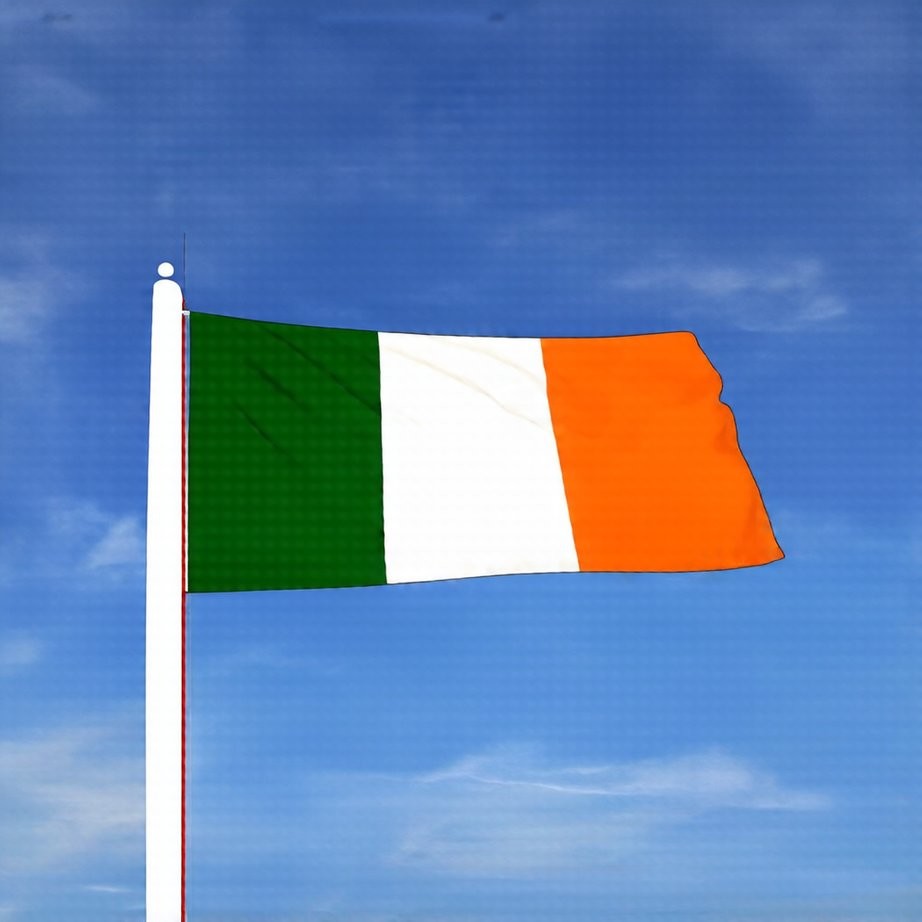} &
        \includegraphics[width=0.1\textwidth]{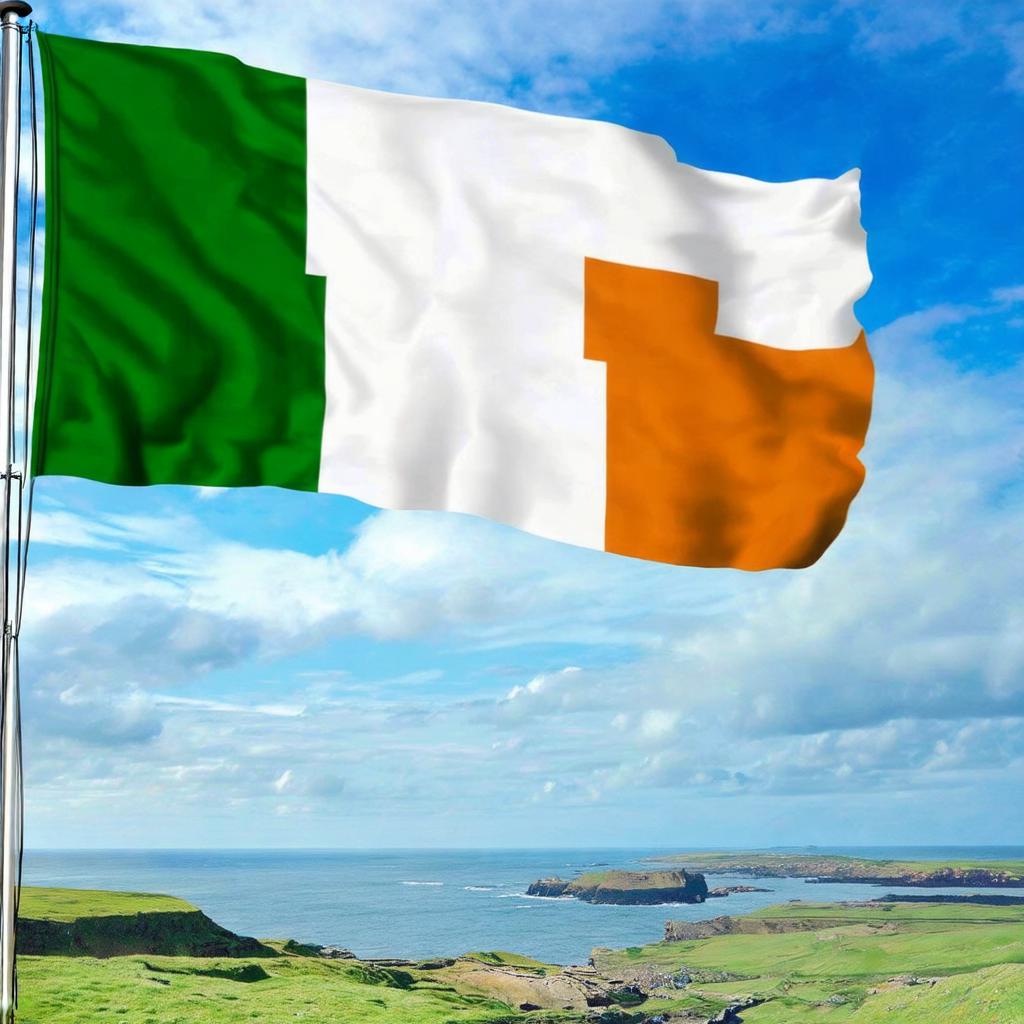} &
        \includegraphics[width=0.1\textwidth]{images/main/sd3_figure_down/Ireland/target_duo.jpg} &
        \includegraphics[width=0.1\textwidth]{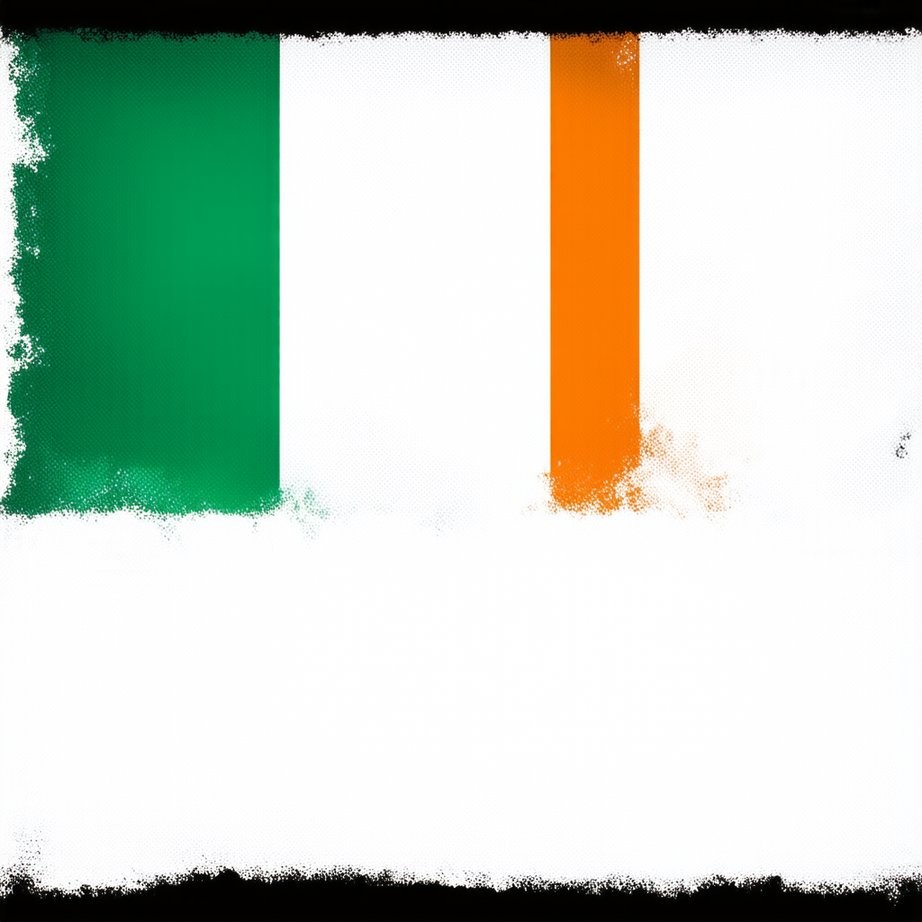} &
        \includegraphics[width=0.1\textwidth]{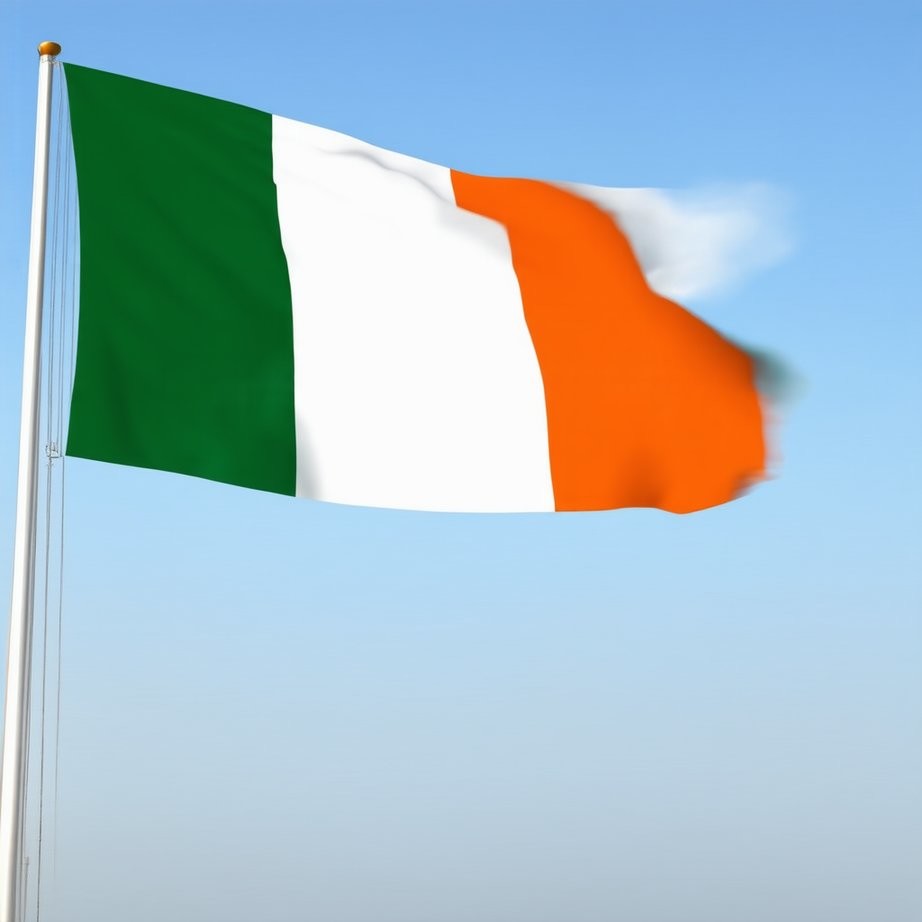} &
        \includegraphics[width=0.1\textwidth]{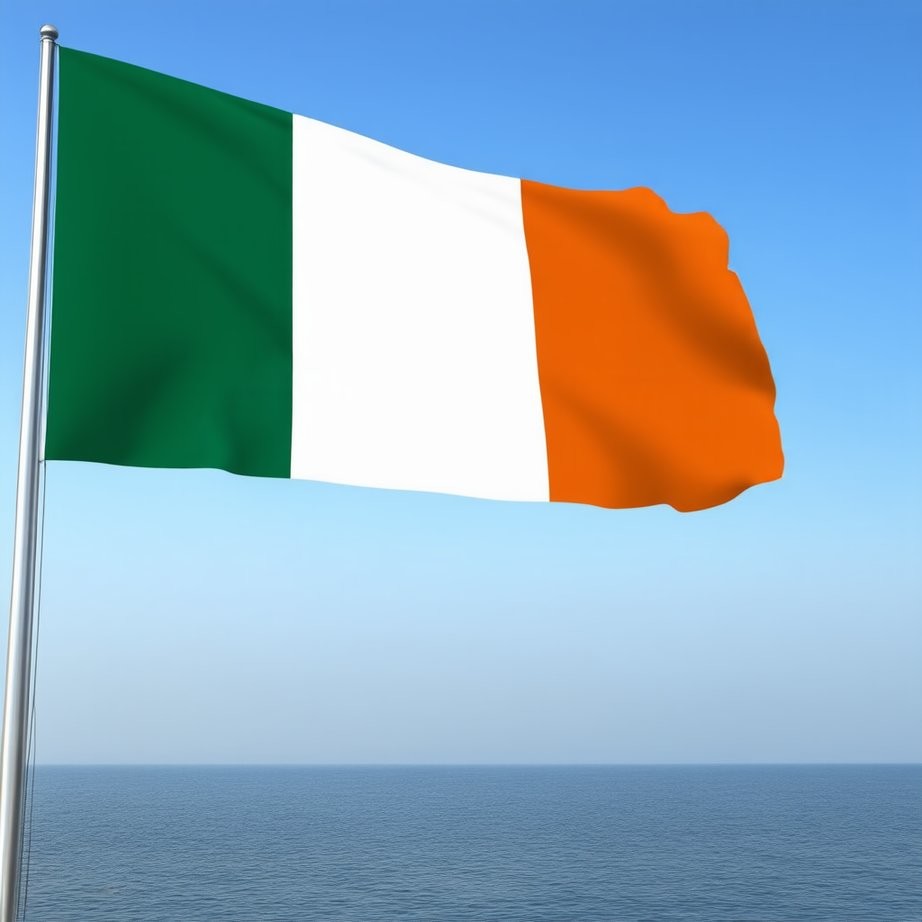} &
        \includegraphics[width=0.1\textwidth]{images/main/sd3_figure_down/Ireland/target_ours.jpg} \\
        \midrule

        \multirow{3}{*}{\parbox[c][2.6cm][c]{0.5cm}{\centering\rotatebox{90}{\textbf{To preserve}}}} &
        \raisebox{2.6ex}{\rotatebox[origin=l]{90}{\small\emph{``Xerxes''}}} &
        \includegraphics[width=0.1\textwidth]{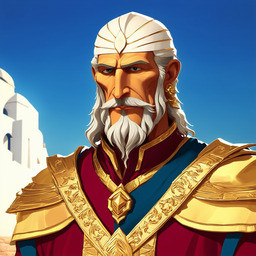} &
        \includegraphics[width=0.1\textwidth]{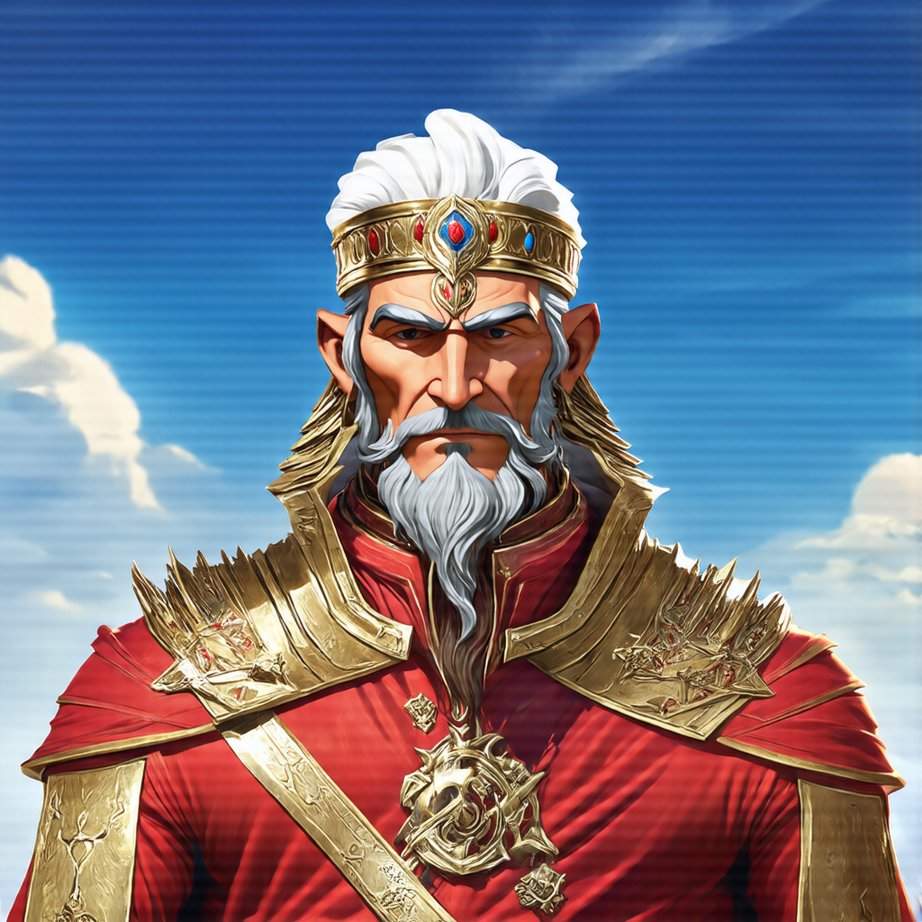} &
        \includegraphics[width=0.1\textwidth]{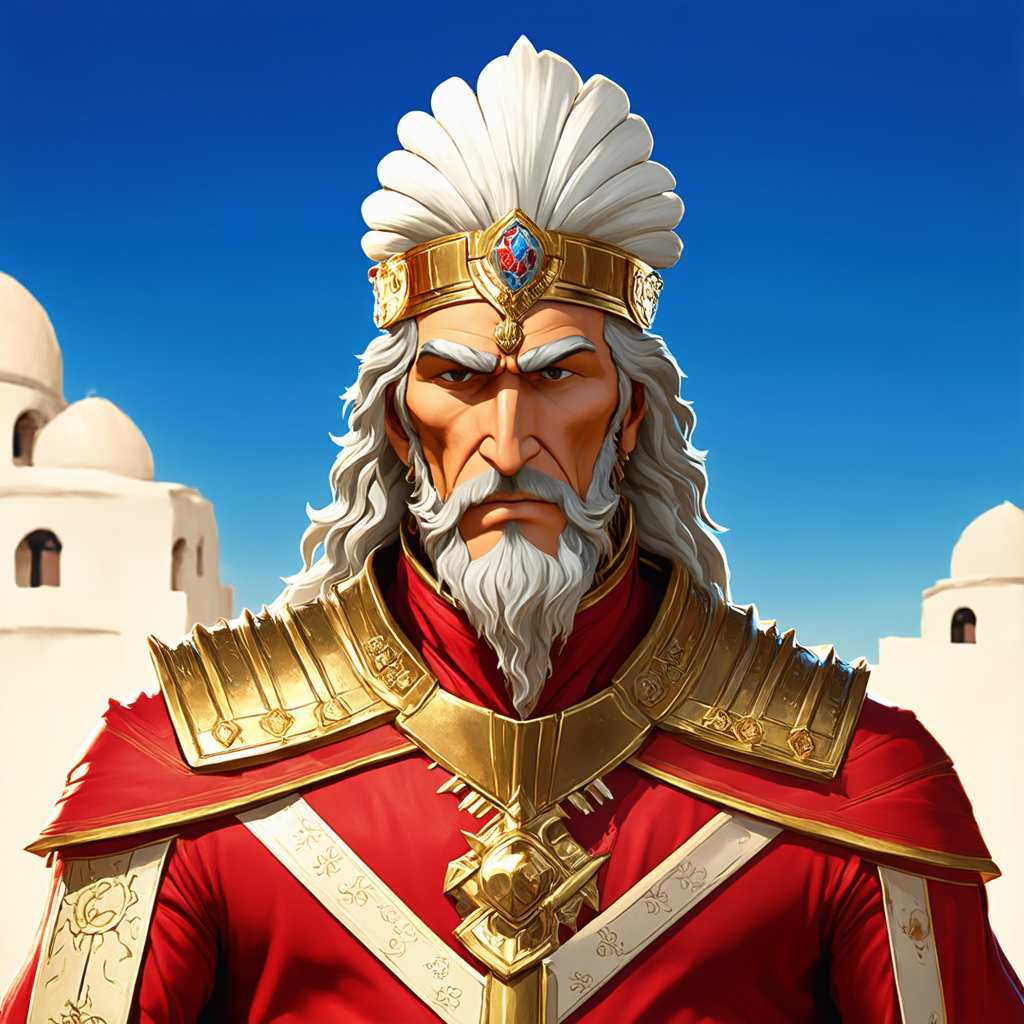} &
        \includegraphics[width=0.1\textwidth]{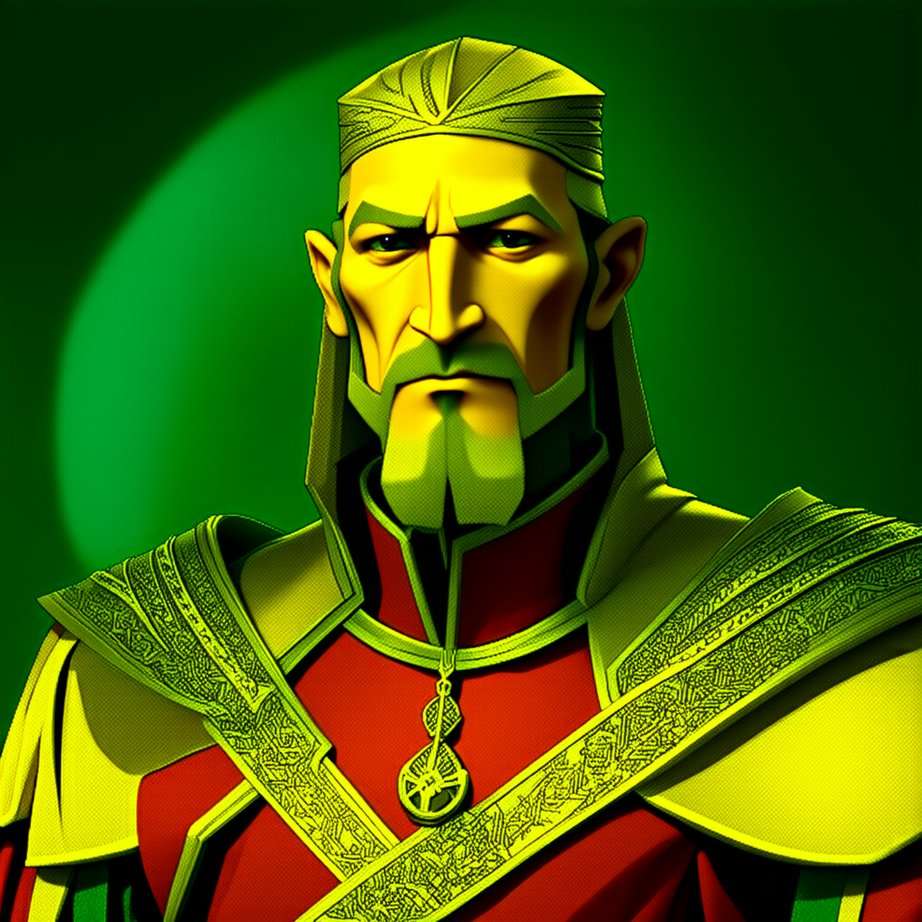} &
        \includegraphics[width=0.1\textwidth]{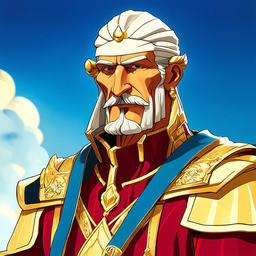} &
        \includegraphics[width=0.1\textwidth]{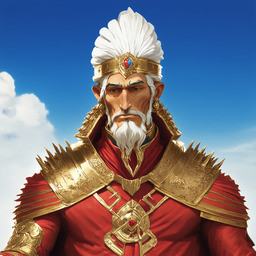} &
        \includegraphics[width=0.1\textwidth]{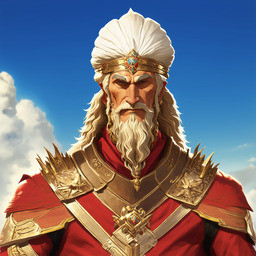} &
        \includegraphics[width=0.1\textwidth]{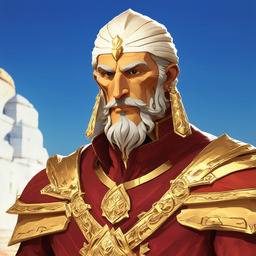} \\

        & \raisebox{0.6ex}{\rotatebox[origin=l]{90}{\small\emph{``Japan flag''}}} &
        \includegraphics[width=0.1\textwidth]{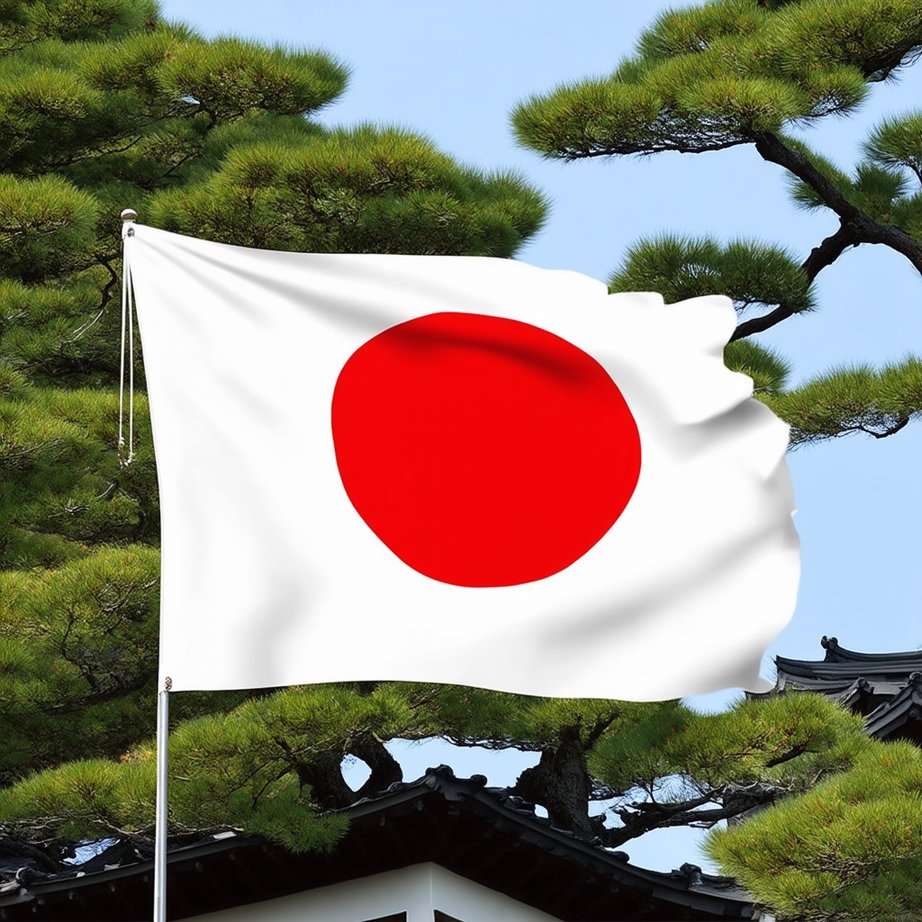} &
        \includegraphics[width=0.1\textwidth]{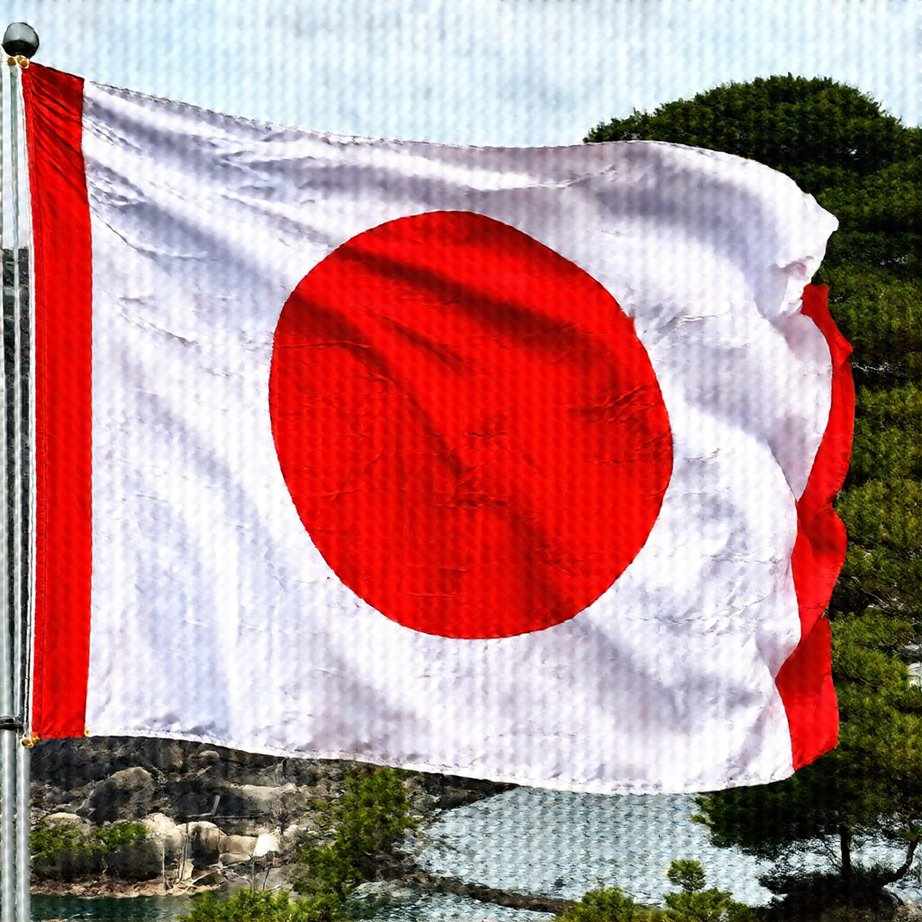} &
        \includegraphics[width=0.1\textwidth]{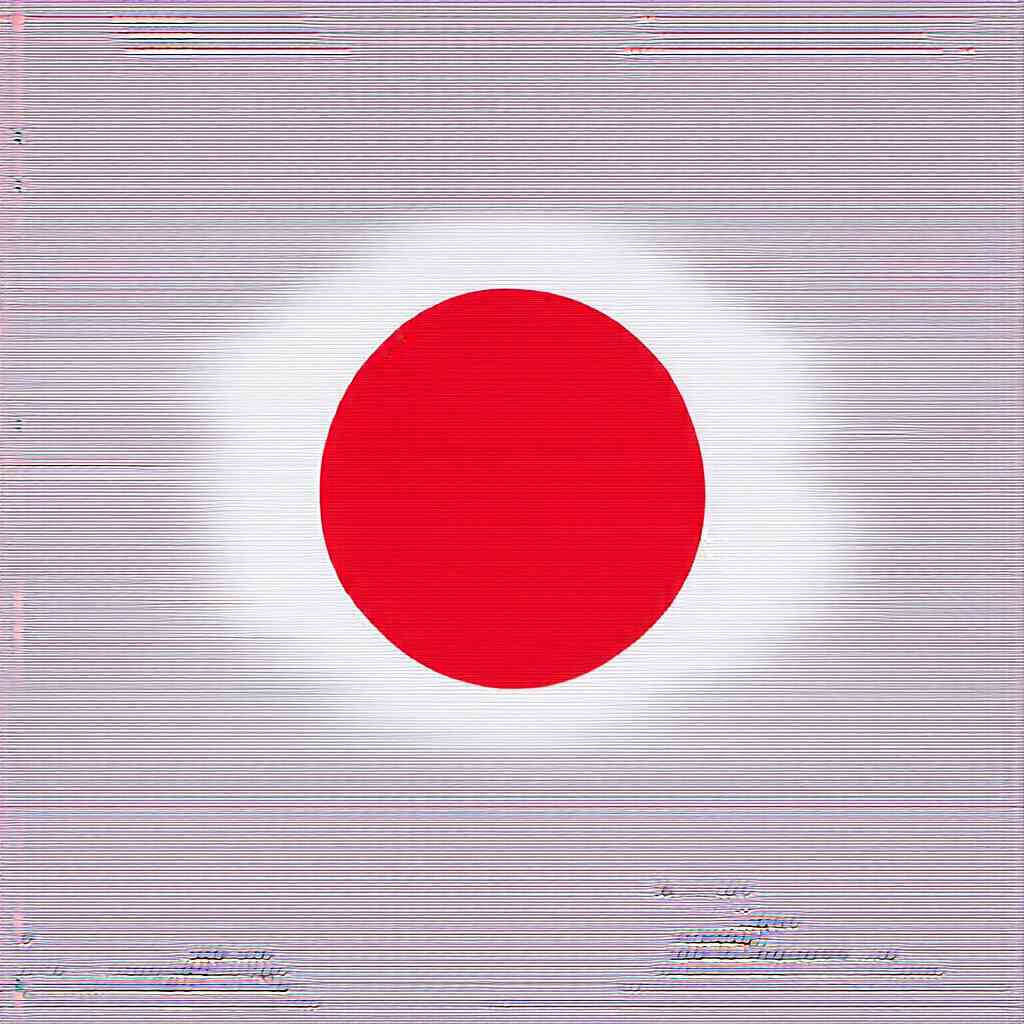} &
        \includegraphics[width=0.1\textwidth]{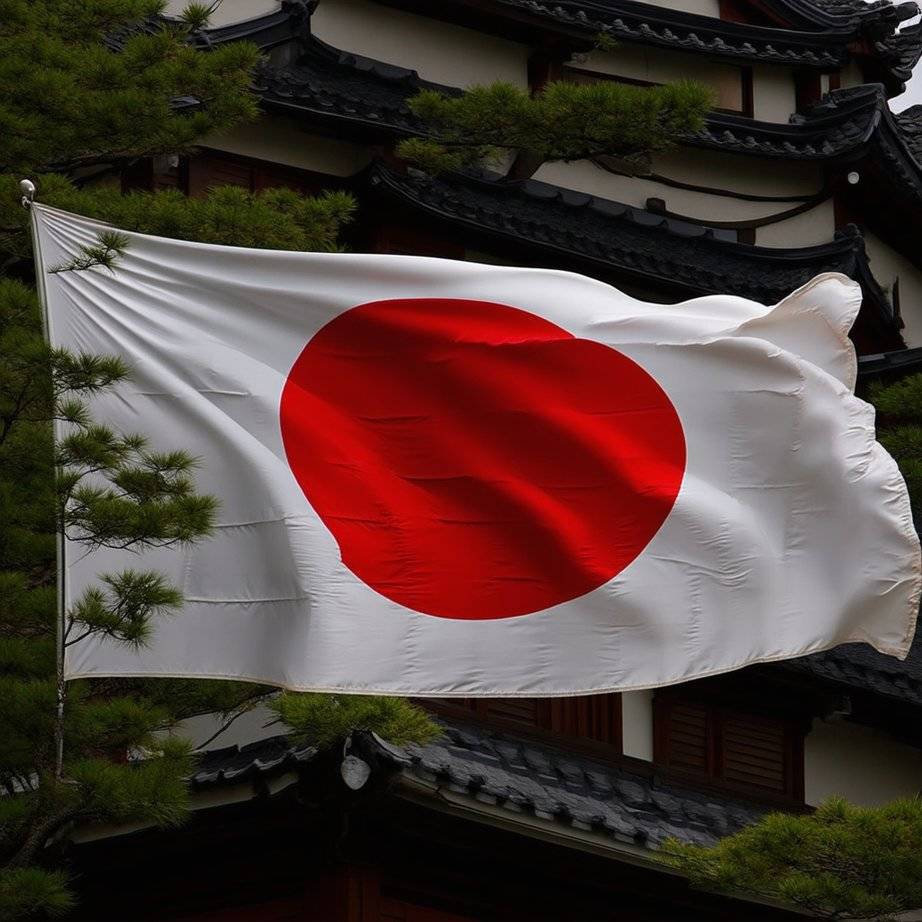} &
        \includegraphics[width=0.1\textwidth]{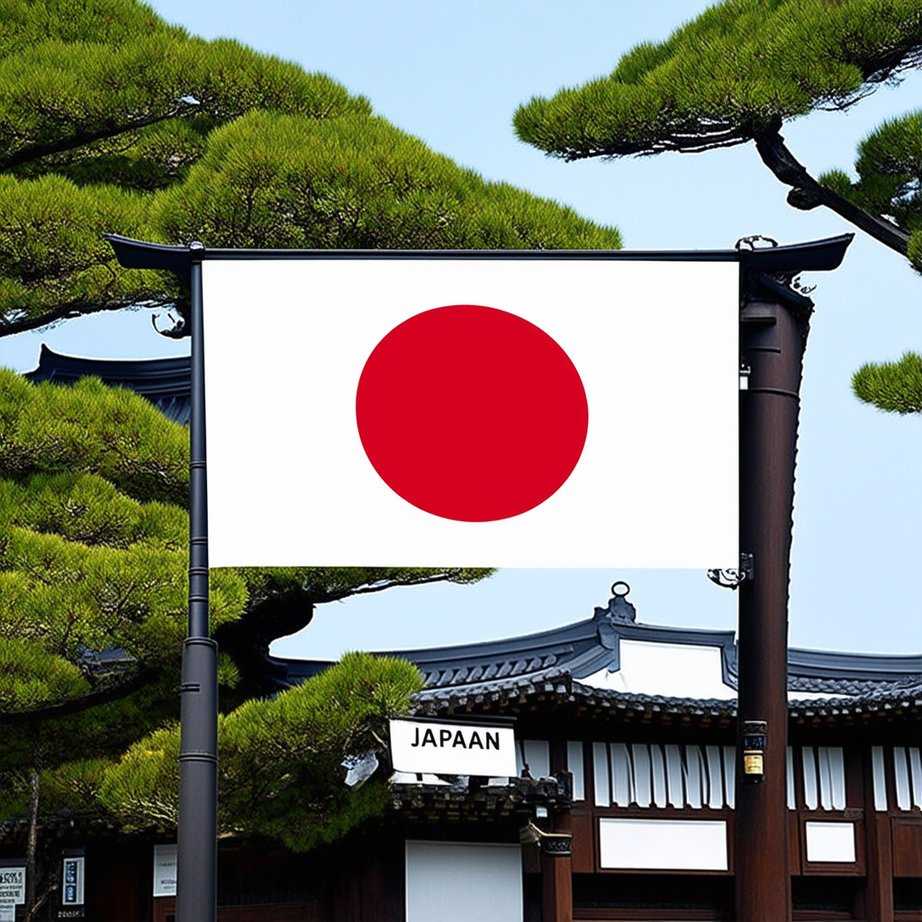} &
        \includegraphics[width=0.1\textwidth]{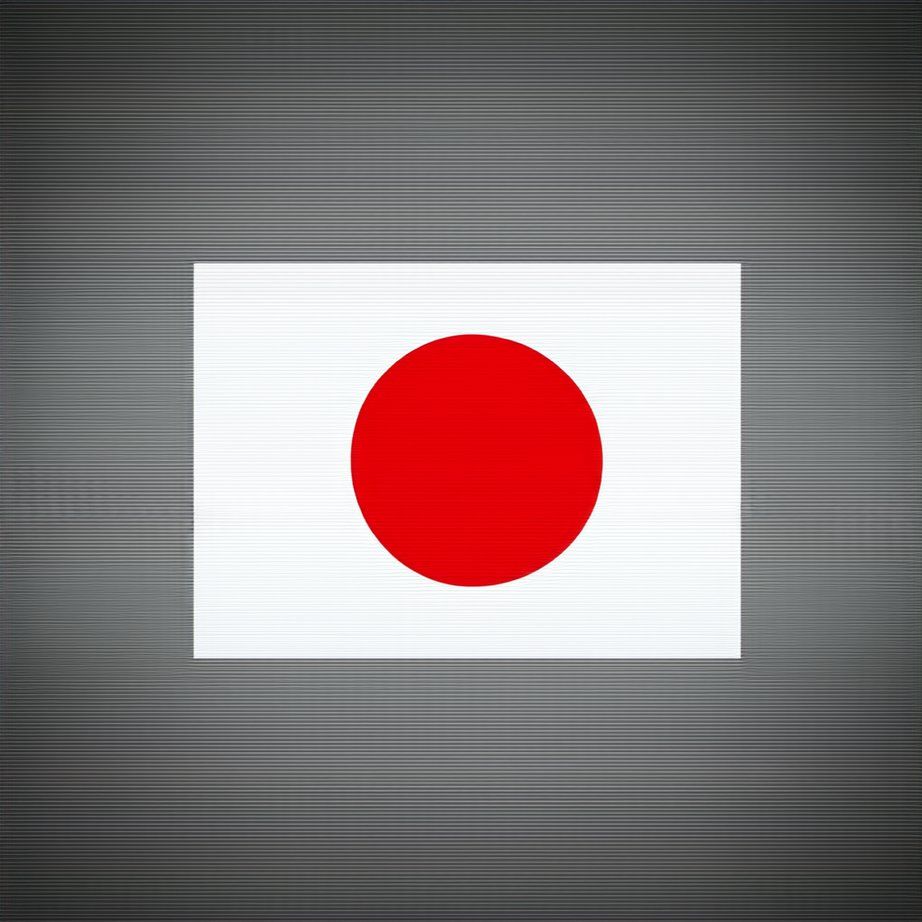} &
        \includegraphics[width=0.1\textwidth]{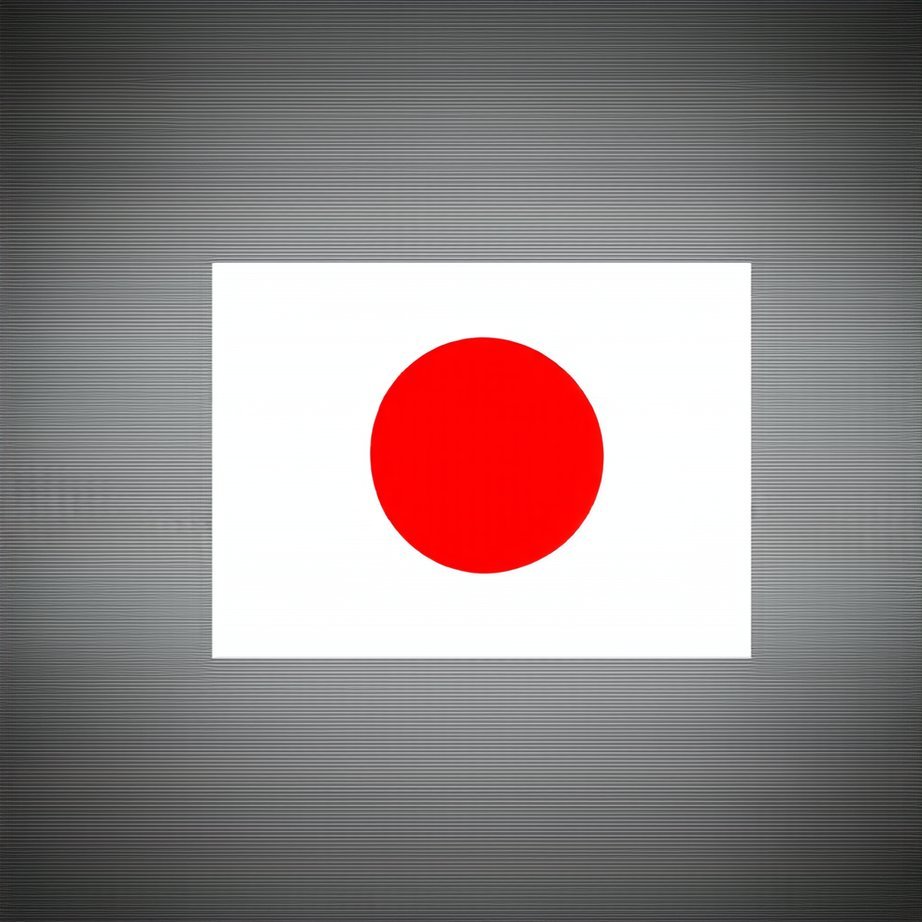} &
        \includegraphics[width=0.1\textwidth]{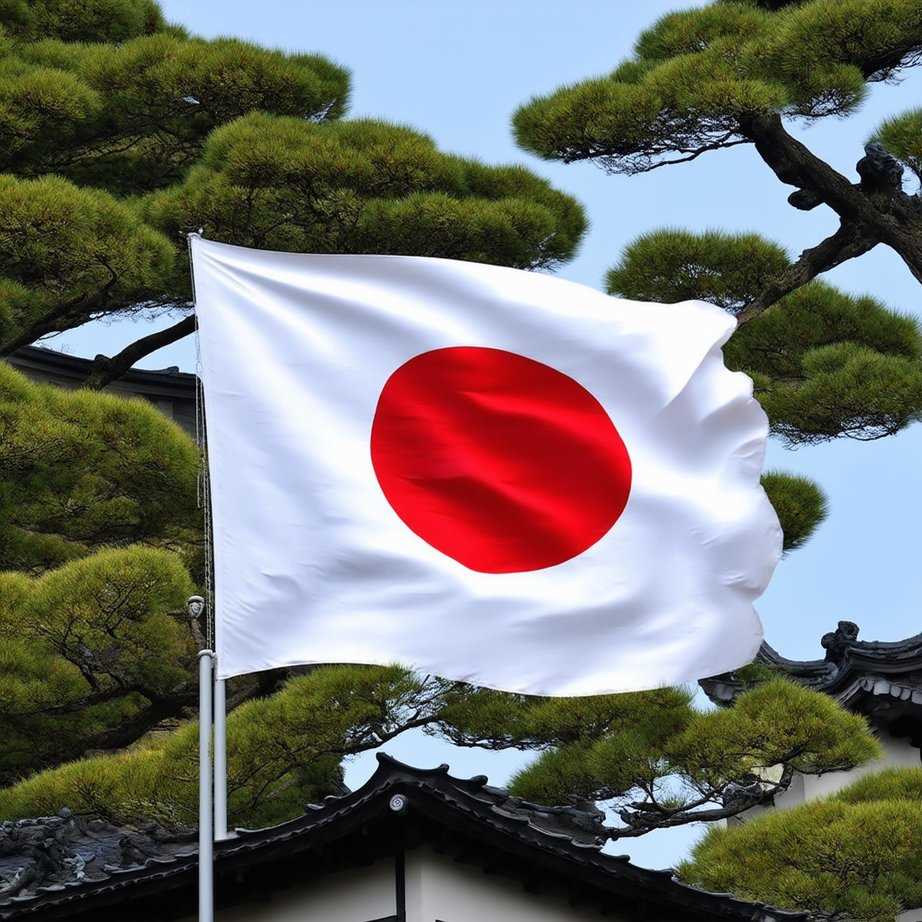} \\

        & \raisebox{0.3ex}{\rotatebox[origin=l]{90}{\small\emph{``Ireland flag''}}} &
        \includegraphics[width=0.1\textwidth]{images/main/sd3_figure_down/Ireland/445_pretrained.jpg} &
        \includegraphics[width=0.1\textwidth]{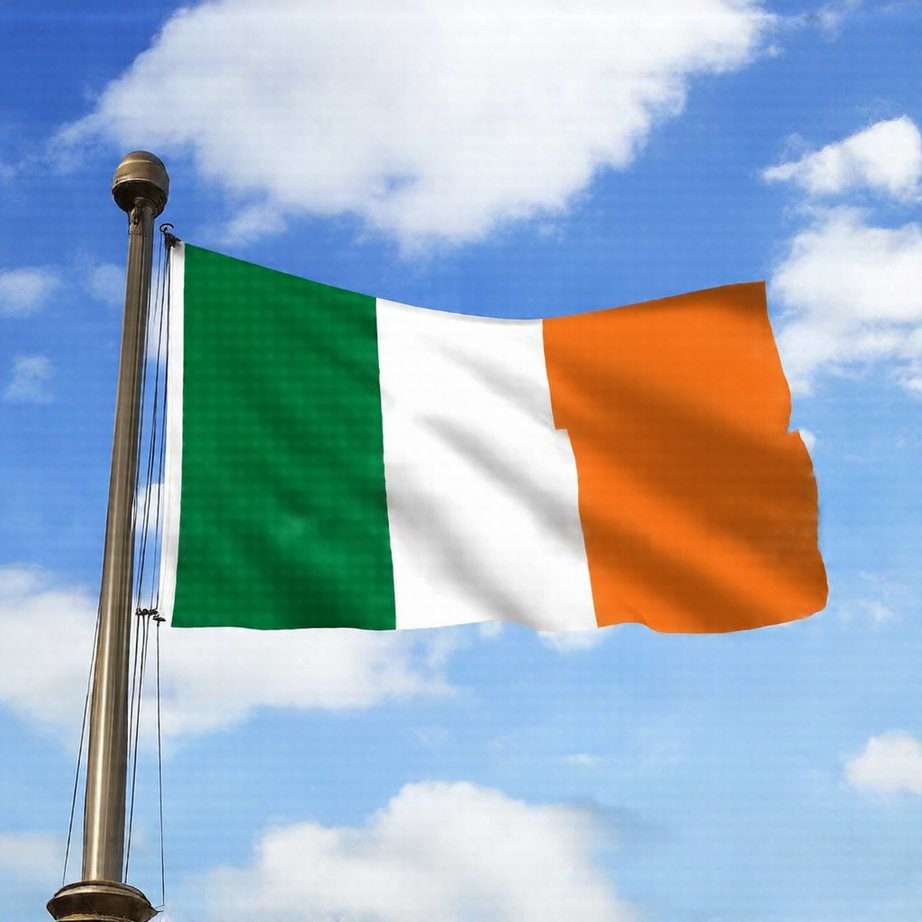} &
        \includegraphics[width=0.1\textwidth]{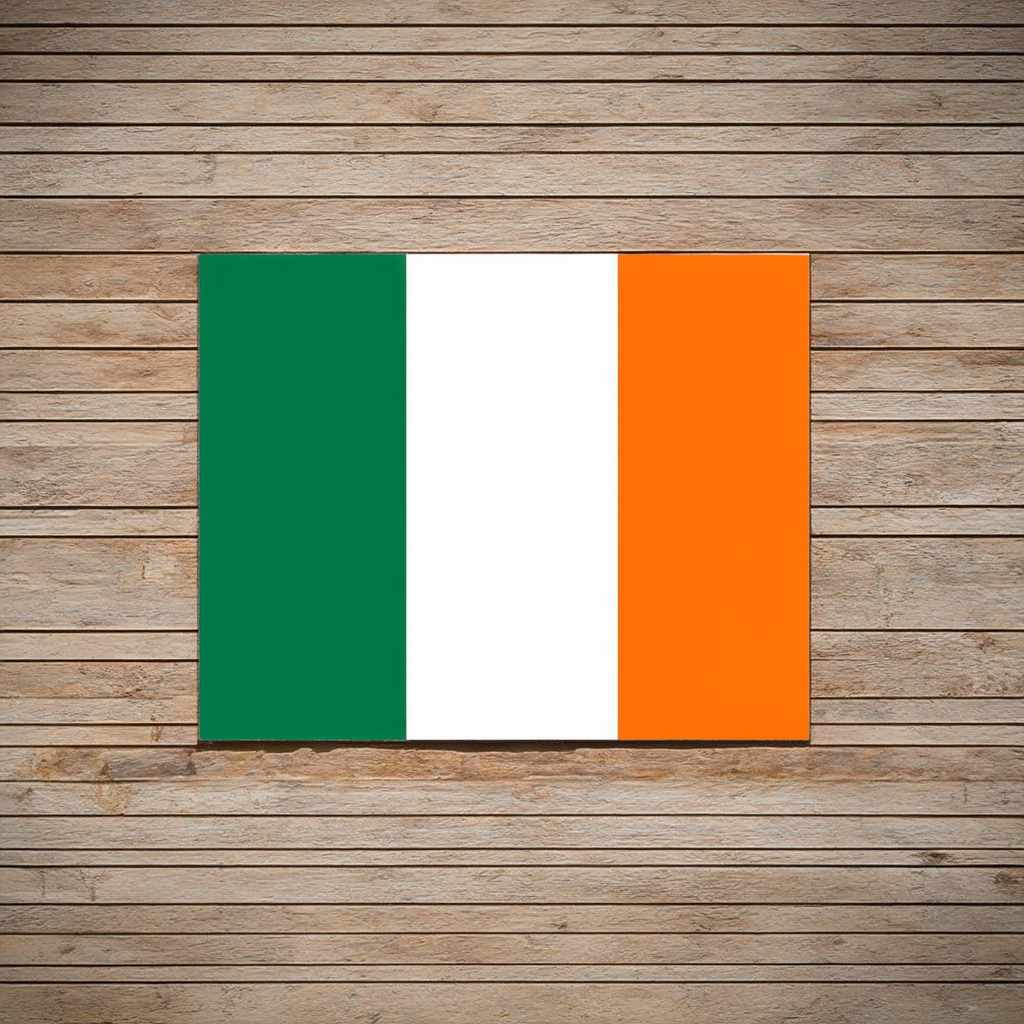} &
        \includegraphics[width=0.1\textwidth]{images/main/sd3_figure_down/Ireland/445_duo.jpg} &
        \includegraphics[width=0.1\textwidth]{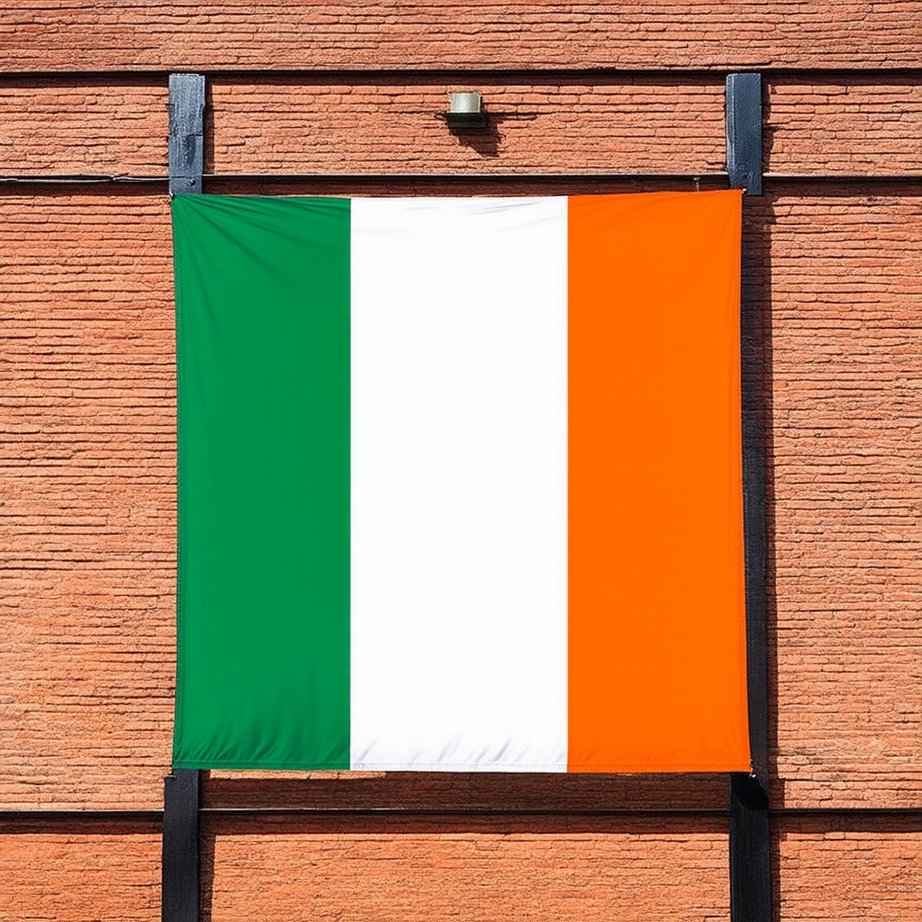} &
        \includegraphics[width=0.1\textwidth]{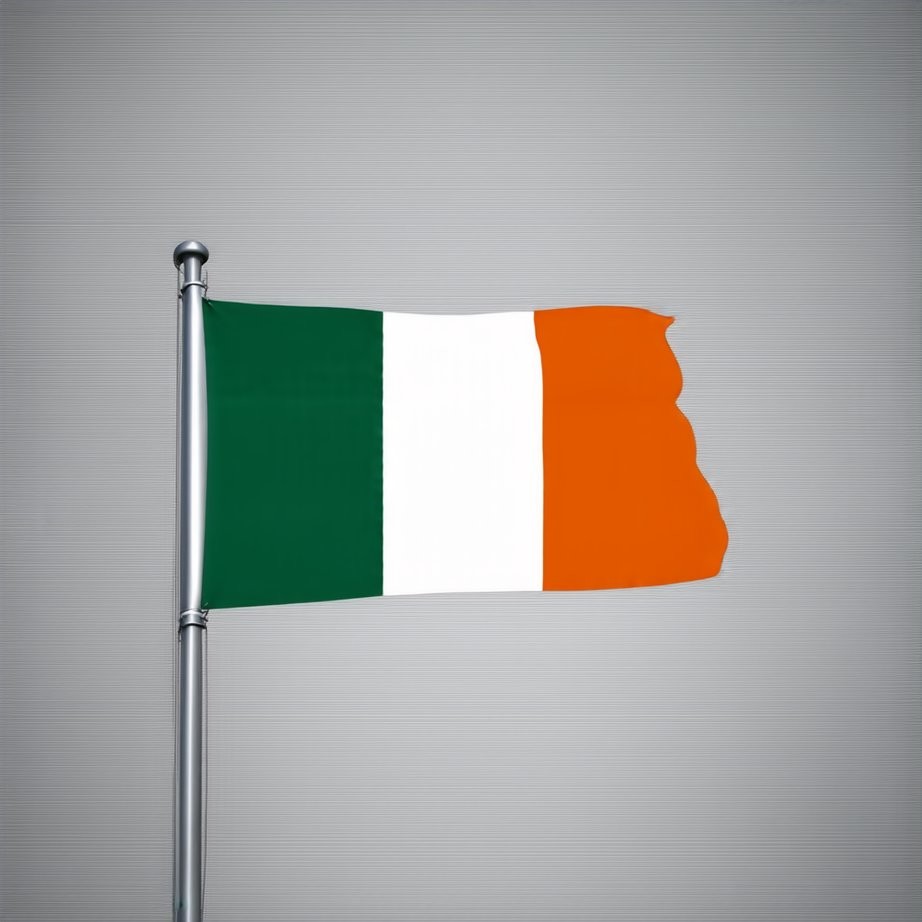} &
        \includegraphics[width=0.1\textwidth]{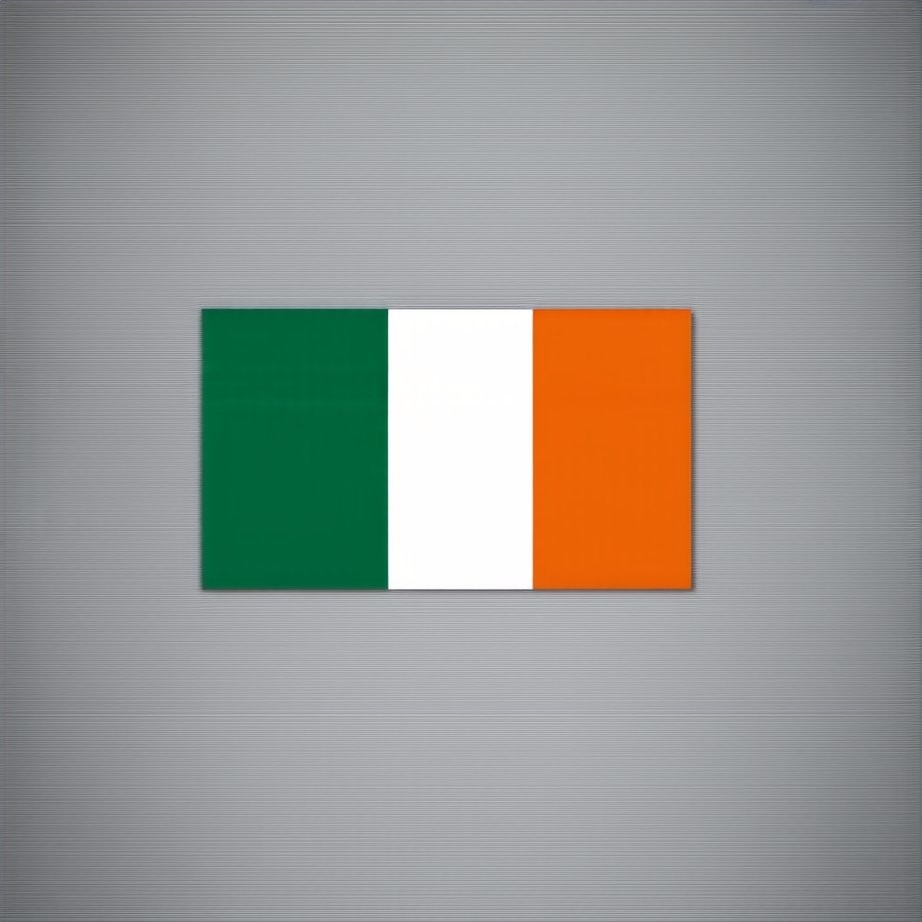} &
        \includegraphics[width=0.1\textwidth]{images/main/sd3_figure_down/Ireland/445_ours.jpg} \\
        \bottomrule
    \end{tabular}
    \caption{Comparison of instance unlearning methods on (top) forgetting target instance and (bottom) preserving related instances.}
    \label{fig:sd3}
\end{figure*}

\noindent\textbf{Single instance unlearning. }
\label{subsec:exp_ddpm_one}
We show that our approach effectively unlearns the target image while preserving model integrity in generated outputs. As shown in Fig. \ref{fig:quali_forgetting}, prior methods such as NegGrad~\cite{golatkar2020eternal} and EraseDiff~\cite{wu2024erasediff} produce artifacts during SSCD computation. In contrast, our method generates natural-looking images that still differ from the target, demonstrating superior model integrity. Furthermore, we evaluate the methods in terms of model integrity by comparing the outputs generated by the pretrained model with those generated by the unlearned model using the same initial noise. As illustrated in Fig. \ref{fig:quali_integrity}, our approach produces outputs nearly identical to the pretrained model's when using the same seed (unrelated to $\mathcal{D}_f$), indicating higher model integrity (Eq. \ref{eq:model_integrity}). SISS~\cite{silas2024data} did not significantly degrade quality but changed features such as hair or glasses.
As shown in Table~\ref{tab:celeba}, all methods achieve forgetting, though some baselines introduce distribution shifts or degrade output quality.
In contrast, our method balances forgetting and model performance, standing out in various quantitative evaluations with the highest overall integrity and image quality.

\noindent\textbf{Multiple instance unlearning. }
\label{subsec:exp_ddpm_multi}
Sequentially unlearning multiple celebrities while preserving model integrity is a challenging task. This issue has also received considerable attention in the field of concept erasing~\cite{lu2024mace, lyu2024one}. Therefore, we performed experiments to verify whether the model could maintain generative stability while successfully unlearning four celebrities sequentially. As shown in Table~\ref{tab:celeba}, our method outperforms others in terms of model integrity.

\subsection{Correcting Unpromptable Misrepresentations through Instance Unlearning}
\label{subsec:exp_sd3}
The issue of misrepresentations for specific instances in recent generative models has been widely recognized as an ethical concern, with previous studies highlighting its presence~\cite{aldahoul2024ai, cheong2024investigating}. We observed that SD3~\cite{esser2024scaling} exhibits undesirable outputs when generating images using specific prompts, as shown in the `Original' column of Fig. \ref{fig:sd3}. Errors involving national figures (\textit{e.g.}, \emph{``Xerxes''}) or flags (\textit{e.g.}, \emph{``Japan flag'', ``Ireland flag''}) are occasional, yet such errors are generally unpromptable. Fig. \ref{fig:sd3} demonstrates the results for applying our instance unlearning method to such a scenario. The top rows, labeled `To forget,' clearly illustrate that our method effectively forgets specific undesirable representations. Conversely, the bottom rows, labeled `To preserve,' confirm that our method maintains the same output before unlearning, \textit{i.e.,} model integrity.

\section{Discussion}
\label{sec:discussion}

\begin{figure}[t]
    \centering
    \subfloat[Target]{\includegraphics[width=0.11\textwidth]{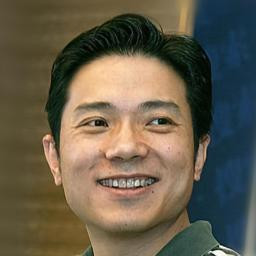}}
    \subfloat[0.5]{\includegraphics[width=0.11\textwidth]{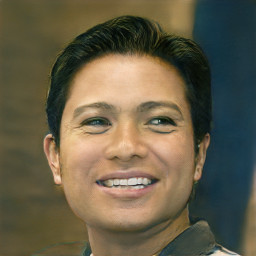}}
    \subfloat[1.0]{\includegraphics[width=0.11\textwidth]{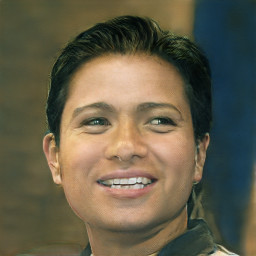}}
    \subfloat[2.0]{\includegraphics[width=0.11\textwidth]{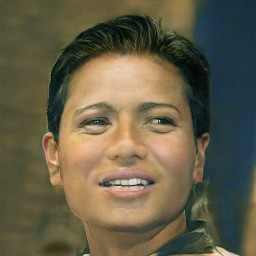}} \\
    \subfloat[4.0]{\includegraphics[width=0.11\textwidth]{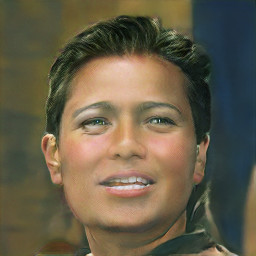}}
    \subfloat[8.0]{\includegraphics[width=0.11\textwidth]{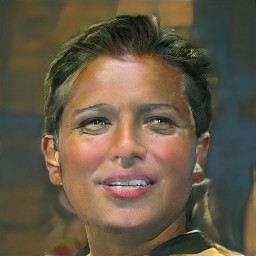}}
    \subfloat[16.0]{\includegraphics[width=0.11\textwidth]{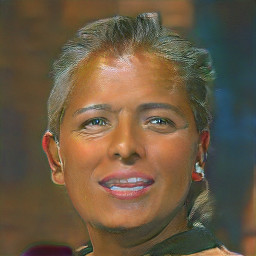}}
    \subfloat[32.0]{\includegraphics[width=0.11\textwidth]{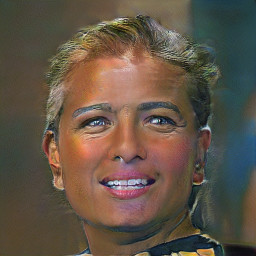}}
    \caption{Edited images (surrogates) with varying CLIP loss weights of TediGAN.}
    \label{fig:tedigan_clip}
\end{figure}

\subsection{Ablation Studies}
\noindent\textbf{Editing scale. }
We first study on how different levels of editing affect the unlearning outcomes. Stronger facial editing more effectively unlearns the targeting face, but simultaneously harm the model's integrity. To address this trade-off, we construct the surrogates by CLIP-guided editing, keeping them close to the original instance while enabling forgetting. The effect of editing with varying CLIP loss weights is visualized in Fig.~\ref{fig:tedigan_clip}.

Edited images are then used as surrogates to compare the unlearning performance. The forgetting-model integrity trade-off is illustrated in Fig.~\ref{fig:forgetting_tradeoff}, and the values are provided on Table.~\ref{tab:tedigan_clip}. Our method achieves a Pareto front: it retains higher model integrity while satisfying the forgetting threshold (vertical red dashed line).


\begin{figure*}[t]  
    \centering
    \begin{minipage}[t]{0.37\textwidth}
        \centering
        \begin{tikzpicture}
            \begin{axis}[
                xlabel={SSCD (forgetting)},
                ylabel={SSIM$\uparrow$ (model integrity)},
                xlabel style={font=\sffamily\small},
                ylabel style={font=\sffamily\small},
                xticklabel style={font=\sffamily\small},
                yticklabel style={font=\sffamily\small},
                ymin=0.8,
                ymax=0.95,
                width=7cm,
                height=5cm,
                grid=both,
                grid style={dashed,gray!30},
            ]
            
            \addplot+[
                mark=*,
                color=blue,
                error bars/.cd,
                y dir=both,
                y explicit
            ] coordinates {
                (0.433533333, 0.899516667) +- (0, 0.004517847)
                (0.402633333, 0.900616667) +- (0, 0.004530667)
                (0.329733333, 0.899633333) +- (0, 0.004602656)
                (0.30155, 0.896483333) +- (0, 0.004434517)
                (0.2777, 0.8907) +- (0, 0.003369273)
                (0.22815, 0.8847) +- (0, 0.003136771)
                (0.194383333, 0.872233333) +- (0, 0.004594465)
            };
            \addplot+[
                only marks,
                mark=square*,
                mark options={draw=black, fill=white, thick},
                color=black,
                error bars/.cd,
                y dir=both,
                y explicit,
                error bar style={black}
            ] coordinates {
                (0.210866667, 0.839066667) +- (0, 0.012451551)
            };
            \addplot+[
                only marks,
                mark=triangle*,
                mark options={draw=cyan, fill=white, thick},
                color=cyan,
                error bars/.cd,
                y dir=both,
                y explicit,
                error bar style={cyan}
            ] coordinates {
                (0.169333333, 0.835616667) +- (0, 0.007668004)
            };
            
            \addplot[red, thick, dashed] coordinates {(0.4, 0) (0.4, 1)};
            
            \node[font=\sffamily\small, align=center, anchor=center, xshift=2pt, yshift=6pt] at (axis cs:0.4, 0.85) {Forgotten?};
            
            \draw[->, red, thick] (axis cs:0.4, 0.85) -- (axis cs:0.38, 0.85);
            \node[font=\sffamily\small, align=right, anchor=east, yshift=-6pt] at (axis cs:0.4, 0.85) {Yes};
            
            \draw[->, red, thick] (axis cs:0.4, 0.85) -- (axis cs:0.42, 0.85);
            \node[font=\sffamily\small, align=left, anchor=west, yshift=-6pt] at (axis cs:0.4, 0.85) {No};
            
            \node[align=left, below right, font=\sffamily\small] at (axis cs:0.14,0.95) {We seek\\top \& left};
            \end{axis}
            \end{tikzpicture}
    \end{minipage}
    \hspace{0.2cm}
    \begin{minipage}[t]{0.37\textwidth}
        \centering
        \begin{tikzpicture}
            \begin{axis}[
                xlabel={SSCD (forgetting)},
                ylabel={LPIPS$\downarrow$ (model integrity)},
                xlabel style={font=\sffamily\small},
                ylabel style={font=\sffamily\small},
                xticklabel style={font=\sffamily\small},
                yticklabel style={font=\sffamily\small},
                ymin=0.2,
                ymax=0.55,
                width=7cm,
                height=5cm,
                grid=both,
                grid style={dashed,gray!30},
            ]
            
            \addplot+[
                mark=*,
                color=blue,
                error bars/.cd,
                y dir=both,
                y explicit
            ] coordinates {
                (0.433533333, 0.304783333) +- (0, 0.024082627)
                (0.402633333, 0.303666667) +- (0, 0.024135170)
                (0.329733333, 0.30665)     +- (0, 0.024007537)
                (0.30155, 0.315283333)     +- (0, 0.026369849)
                (0.2777, 0.324666667)      +- (0, 0.021733042)
                (0.22815, 0.337833333)     +- (0, 0.020917611)
                (0.194383333, 0.351633333) +- (0, 0.018836926)
            };
            \addplot+[
                only marks,
                mark=square*,
                mark options={draw=black, fill=white, thick},
                color=black,
                error bars/.cd,
                y dir=both,
                y explicit,
                error bar style={black}
            ] coordinates {
                (0.210866667, 0.43035) +- (0, 0.030681555)
            };
            \addplot+[
                only marks,
                mark=triangle*,
                mark options={draw=cyan, fill=white, thick},
                color=cyan,
                error bars/.cd,
                y dir=both,
                y explicit,
                error bar style={cyan}
            ] coordinates {
                (0.169333333, 0.4133) +- (0, 0.043065888)
            };
            
            \addplot[red, thick, dashed] coordinates {(0.4, 0) (0.4, 1)};
            
            \node[font=\sffamily\small, align=center, anchor=center, xshift=2pt, yshift=6pt] at (axis cs:0.4, 0.45) {Forgotten?};
            
            \draw[->, red, thick] (axis cs:0.4, 0.45) -- (axis cs:0.38, 0.45);
            \node[font=\sffamily\small, align=right, anchor=east, yshift=-6pt] at (axis cs:0.4, 0.45) {Yes};
            
            \draw[->, red, thick] (axis cs:0.4, 0.45) -- (axis cs:0.42, 0.45);
            \node[font=\sffamily\small, align=left, anchor=west, yshift=-6pt] at (axis cs:0.4, 0.45) {No};
            
            \node[align=left, below right, font=\sffamily\small] at (axis cs:0.14,0.29) {We seek\\bottom \& left};
            \end{axis}
        \end{tikzpicture}
    \end{minipage}
    \hspace{0.3cm}
    \begin{minipage}[t]{0.17\textwidth}
    \centering
    \begin{tikzpicture}
    \begin{axis}[
        hide axis,
        xmin=0, xmax=1,
        ymin=0, ymax=1,
        scale only axis,
        legend columns=1,
        legend cell align={left},
        legend style={
            at={(0,1.55)},  
            anchor=north east,
            draw=black,
            fill=white,
            rounded corners,
            font=\sffamily\small,
            row sep=5pt
        }
    ]
    \addlegendimage{only marks, blue, mark=*}
    \addlegendentry{Ours}
    \addlegendimage{only marks, mark=triangle*, draw=cyan, fill=white, thick}
    \addlegendentry{\small\begin{tabular}{@{}l@{}}Ours w/o \\ surrogate\\\end{tabular}} 
    \addlegendimage{only marks, mark=square*, draw=black, fill=white, thick}
    \addlegendentry{\small\begin{tabular}{@{}l@{}}SISS~\cite{silas2024data}\end{tabular} }
    \addlegendimage{red, thick, dashed}
    \addlegendentry{\small\begin{tabular}{@{}l@{}}Forgetting \\ threshold\\\end{tabular}} 
    \end{axis}
    \end{tikzpicture}
    \end{minipage}
    \caption{Trade-off between forgetting (SSCD) and model integrity (SSIM, LPIPS). We can select how the surrogate is close to the original instance by adjusting CLIP loss weight during editing.}
    \label{fig:forgetting_tradeoff}
\end{figure*}
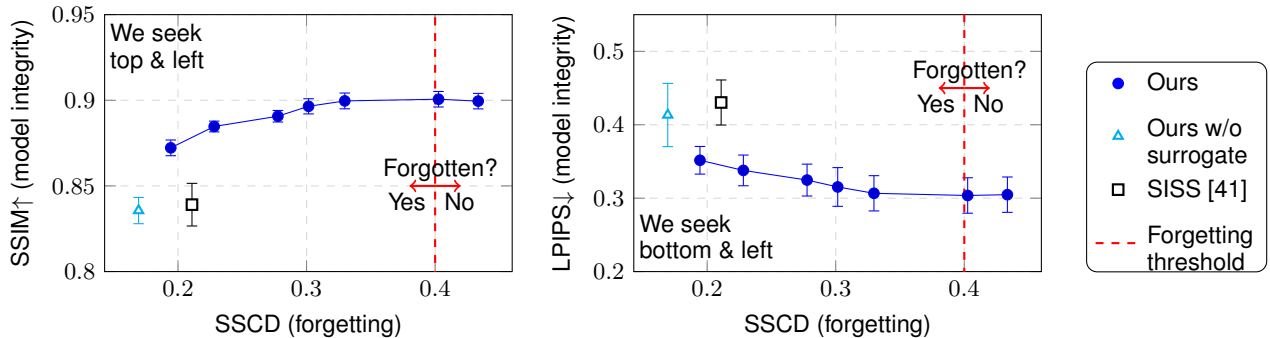

\begin{table}[t]
    \caption{Quantitative results with varying CLIP loss weights of TediGAN.}
    \label{tab:tedigan_clip}
    \centering
    \normalsize
    \setlength{\tabcolsep}{4.5pt}
    \begin{tabular}{@{}c|ccccc@{}}
        \toprule
        CLIP weight & SSCD & LPIPS$\downarrow$ & SSIM$\uparrow$ & FID$_{\text{pre}}$$\downarrow$ & FID$_{\text{real}}$$\downarrow$ \\
        \midrule
        0.5  & 0.434 & 0.305 & 0.900 & 8.08 & 16.25 \\
        1.0  & 0.403 & 0.304 & 0.901 & 8.04 & 16.33 \\
        2.0  & 0.330 & 0.307 & 0.900 & 8.10 & 16.62 \\
        4.0  & 0.302 & 0.315 & 0.896 & 8.34 & 17.09 \\
        8.0  & 0.278 & 0.325 & 0.891 & 8.80 & 17.62 \\
        16.0 & 0.228 & 0.338 & 0.885 & 8.94 & 17.48 \\
        32.0 & 0.194 & 0.352 & 0.872 & 9.83 & 18.03 \\
        \bottomrule
    \end{tabular}
\end{table}

\noindent\textbf{Timestep-aware weighting. }
Also, we conducted an ablation study testing various $\lambda$ for the loss design under two conditions: 1) constant $\lambda$, and 2) timestep-aware $\lambda$ ($= 1- \beta t$).
Experiment is conducted on a single instance and the metrics are evaluated using 10,000 generations.
As shown in Table ~\ref{tab:ablation_timestep}, with $\lambda = 1- \beta t$ with $\beta = 5 \times 10^{-5}$, the model achieved the best overall performance.

\begin{table}[t]
\caption{Ablation on timestep scheduling of $\lambda$.}
\label{tab:ablation_timestep}
\begin{center}
\setlength{\tabcolsep}{3pt}  
\begin{normalsize}
\begin{tabular}{@{}l|ccccc@{}}
\toprule
Condition & SSCD  & LPIPS$\downarrow$   & SSIM$\uparrow$ & $\text{FID}_{pre}$$\downarrow$   & $\text{FID}_{real}$$\downarrow$ \\
\midrule
$\lambda=\text{0.95}$ & 0.279 & 0.174          & 0.897 & 7.82 & 24.06 \\
$\lambda=\text{0.99}$ & 0.289          & 0.173 & 0.894 & 7.18 & 20.31 \\
\midrule
$\lambda = \text{1} \!-\! \text{2}\!\cdot\!\text{10}^{-\text{5}}t $           & 0.356          & 0.213  & 0.920 & 6.06 & 15.43 \\
$\lambda = \text{1} \!-\! \text{5}\!\cdot\!\text{10}^{-\text{5}}t$ & 0.355          & 0.213  & 0.920 & 6.06 & 15.43 \\
$\lambda = \text{1} \!-\! \text{10}^{-\text{4}}t $           & 0.311          & 0.235  & 0.910          & 6.59 & 15.58 \\
\bottomrule
\end{tabular}
\end{normalsize}
\end{center}
\end{table}

\begin{table}[t]
\caption{Ablation on gradient projection.}
\label{tab:ablation_gradient}
\begin{center}
\setlength{\tabcolsep}{3pt}
\begin{normalsize}
\begin{tabular}{l|ccccc}
\toprule
Condition & SSCD & LPIPS$\downarrow$   & SSIM$\uparrow$ & $\text{FID}_{pre}$$\downarrow$   & $\text{FID}_{real}$$\downarrow$ \\
\midrule
no projection                       & 0.309 & 0.275          & 0.899          & 7.68          & 14.72 \\
project $g_f$ & 0.309 & 0.274 & 0.900          & 7.65 & 14.70 \\
project $g_f$, $g_r$           & 0.310          & 0.276          & 0.901 & 8.15          & 16.47 \\
\bottomrule
\end{tabular}
\end{normalsize}
\end{center}
\end{table}

\noindent\textbf{Gradient surgery. }
We evaluated the effectiveness of using gradient surgery by testing on the variants to validate the effectiveness of our method. As shown in Table~\ref{tab:ablation_gradient}, the PCGrad~\cite{yu2020gradient} method, which projects both the remembering gradient and the forgetting gradient, was not suitable for our unlearning method as it led to a decrease in model quality. Projecting only the forgetting gradient demonstrated the best overall performance.

\begin{table}[t]
\caption{Ablation on surrogate construction methods.}
\label{tab:ablation_surrogate}
\centering
\begin{normalsize}
\begin{tabular}{lccc}
\toprule
Surrogate method & SSCD$<$0.4 & LPIPS$\downarrow$ & SSIM$\uparrow$ \\
\midrule
Edit (TediGAN) & \checkmark (0.35) & \textbf{0.21} & \textbf{0.92} \\
Flip & \checkmark (0.40) & 0.31 & 0.85 \\
Add noise ($\sigma=50$) & \ding{55} (0.54) & 0.23 & 0.91 \\
Add noise ($\sigma=150$) & \checkmark (0.32) & 0.28 & 0.89 \\
\bottomrule
\end{tabular}
\end{normalsize}
\end{table}

\subsection{Validation of Using an Edited Image as a Surrogate}
\label{subsec:discuss_weight_diff}
Surrogate construction plays a critical role in our unlearning framework. We compare several strategies for generating surrogate data, including flipping the original image and adding Gaussian noise at varying levels, as shown in Table~\ref{tab:ablation_surrogate}. These simple manipulations lead to degraded surrogate fidelity, resulting in distribution shifts or weaker preservation, despite some success in SSCD. In contrast, using carefully edited images (e.g., via TediGAN) provides high-fidelity surrogates that enable effective forgetting while maintaining model integrity. Fig.~\ref{fig:surrogate_comparison} illustrates that edited surrogates forget the target identity while better preserving unrelated concepts compared to other surrogates. These findings emphasize the importance of surrogate quality: low-fidelity surrogates introduce unnecessary perturbations, while edited surrogates support targeted unlearning with less side effects.

\begin{figure}[t!]
    \centering
    \small
    \setlength{\tabcolsep}{2pt}
    \begin{tabular}{ccccc}
        \toprule
        To forget & Surrogate & Forgotten? & To preserve & Preserved? \\
        \midrule
        \includegraphics[width=0.18\linewidth]{images/main/edit_example/00000.jpg} &
        \begin{overpic}[width=0.18\linewidth]{images/main/edit_example/00000_edit.jpg}
            \put(3,83){\contour{black}{\color{white}\scriptsize \textbf{Edited}}}
        \end{overpic} &
        \includegraphics[width=0.18\linewidth]{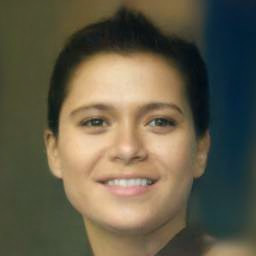} &
        \includegraphics[width=0.18\linewidth]{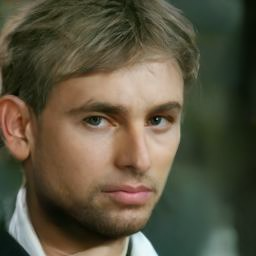} &
        \begin{overpic}[width=0.18\linewidth]{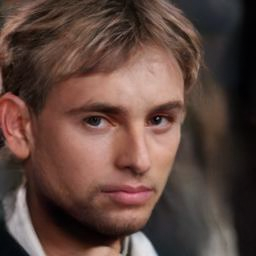}
            \put(3,78){\contour{black}{\textcolor{green!50!black}{\textbf{\Large\checkmark}}}}
        \end{overpic} \\
        
        \includegraphics[width=0.18\linewidth]{images/main/edit_example/00000.jpg} &
        \begin{overpic}[width=0.18\linewidth]{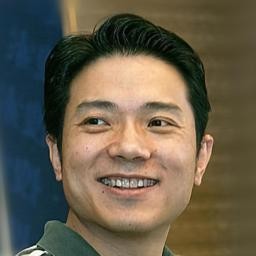}
            \put(3,83){\contour{black}{\color{white}\scriptsize \textbf{Flipped}}}
        \end{overpic} &
        \includegraphics[width=0.18\linewidth]{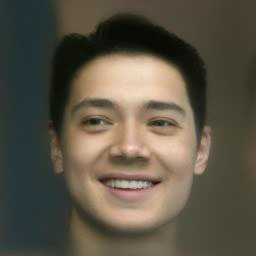} &
        \includegraphics[width=0.18\linewidth]{images/main/ablation_surrogate/0_pretrained.jpg} &
        \begin{overpic}[width=0.18\linewidth]{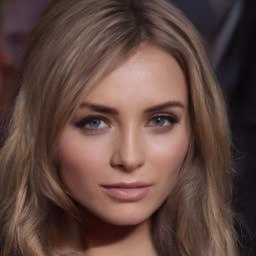}
            \put(4,81){\contour{black}{\textcolor{red}{\textbf{\large\ding{55}}}}}
        \end{overpic} \\
        
        \includegraphics[width=0.18\linewidth]{images/main/edit_example/00000.jpg} &
        \begin{overpic}[width=0.18\linewidth]{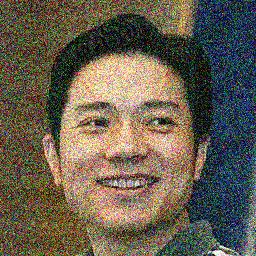}
            \put(3,83){\contour{black}{\color{white}\scriptsize \textbf{$\sigma=50$}}}
        \end{overpic} &
        \includegraphics[width=0.18\linewidth]{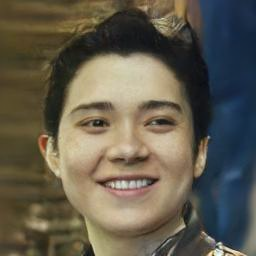} &
        \includegraphics[width=0.18\linewidth]{images/main/ablation_surrogate/0_pretrained.jpg} &
        \begin{overpic}[width=0.18\linewidth]{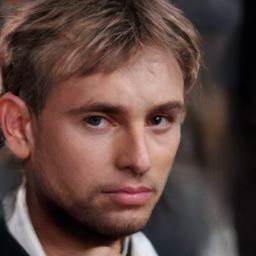}
            \put(3,78){\contour{black}{\textcolor{green!50!black}{\textbf{\Large\checkmark}}}}
        \end{overpic} \\
        
        \includegraphics[width=0.18\linewidth]{images/main/edit_example/00000.jpg} &
        \begin{overpic}[width=0.18\linewidth]{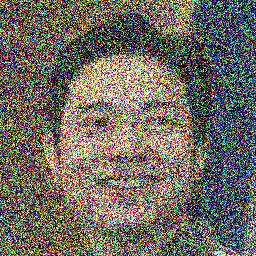}
            \put(3,83){\contour{black}{\color{white}\scriptsize \textbf{$\sigma=150$}}}
        \end{overpic} &
        \includegraphics[width=0.18\linewidth]{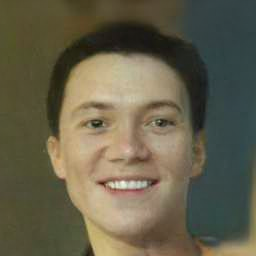} &
        \includegraphics[width=0.18\linewidth]{images/main/ablation_surrogate/0_pretrained.jpg} &
        \begin{overpic}[width=0.18\linewidth]{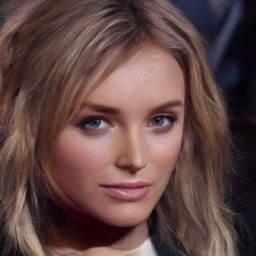}
            \put(4,81){\contour{black}{\textcolor{red}{\textbf{\large\ding{55}}}}}
        \end{overpic} \\
        \bottomrule
    \end{tabular}
    \caption{Qualitative comparison of surrogates. $\sigma=50$ and $\sigma=150$ denote adding Gaussian noise with standard deviations of 50 and 150 to the target image.}
    \label{fig:surrogate_comparison}
\end{figure}

\subsection{Forget out-of-domain data}
The OOD (Out-of-Domain) scenario is necessary to evaluate the domain generalization performance of our framework. By using the FFHQ~\cite{karras2019style} dataset, which differs from the training dataset (CelebA-HQ), we test whether our method can generalize beyond the training domain and effectively unlearn a specific instance. As shown in Table \ref{tab:FFHQ_table}, the OOD case also demonstrated similar model integrity to the InD (In-Domain) case.

\begin{table}[t!]
\caption{Single instance unlearning for InD and OoD data.}
\label{tab:FFHQ_table}
\centering
\setlength{\tabcolsep}{2.5pt}  
\begin{normalsize}
\begin{tabular}{lccccc}
\toprule
            & SSCD  & LPIPS$\downarrow$   & SSIM$\uparrow$ & $\text{FID}_{pre}\downarrow$   & $\text{FID}_{real}\downarrow$ \\
\midrule    
Ours (InD)           & 0.34  & 0.30  & 0.90  & 8.7 & 16.6 \\
Ours (OoD)           & 0.32  & 0.31  & 0.90  & 9.0 & 17.1 \\
\bottomrule
\end{tabular}
\end{normalsize}
\end{table}


\subsection{Limitations}
\label{subsec:limitations}
We focused on a single open-sourced DM for each of the unconditional and conditional cases. At the same time, broader insights could be gained by applying our method to other models, including those in Fig.~\ref{fig:commercial_example}, which are unfortunately closed-source. Also, our experiments focus on open-sourced models, which tend to exhibit fewer problematic instances due to improved data curation in recent versions (\textit{e.g.}, SD3). Nonetheless, we consider this a strength, as unlearning in such advanced models reflects more realistic and subtle challenges.

\section{Conclusion}
\label{sec:conclusion}
We address the underexplored challenge of instance unlearning in diffusion models, targeted at unlearning specific outputs that are unpromptable. We propose a surrogate-based unlearning method leveraging diverse techniques that works in a prompt-free manner. Successfully performing unlearning while preserving the model integrity, our work contributes to ensuring the privacy and ethical compliance of generative models while maintaining generation performance.

\bibliographystyle{IEEEtran}
\bibliography{ref_IEEE_TIFS}

@article{zhong2024diffusion,
  title={Diffusion Tuning: Transferring Diffusion Models via Chain of Forgetting},
  author={Zhong, Jincheng and Guo, Xingzhuo and Dong, Jiaxiang and Long, Mingsheng},
  journal={NeurIPS},
  year={2024}
}

@inproceedings{xia2021tedigan,
  title={Tedigan: Text-guided diverse face image generation and manipulation},
  author={Xia, Weihao and Yang, Yujiu and Xue, Jing-Hao and Wu, Baoyuan},
  booktitle={CVPR},
  year={2021}
}

@article{wang2004image,
  title={Image quality assessment: from error visibility to structural similarity},
  author={Wang, Zhou and Bovik, Alan C and Sheikh, Hamid R and Simoncelli, Eero P},
  journal={IEEE TIP},
  year={2004},
  publisher={IEEE}
}

@article{heusel2017gans,
  title={Gans trained by a two time-scale update rule converge to a local nash equilibrium},
  author={Heusel, Martin and Ramsauer, Hubert and Unterthiner, Thomas and Nessler, Bernhard and Hochreiter, Sepp},
  journal={NeurIPS},
  year={2017}
}

@inproceedings{pizzi2022self,
  title={A self-supervised descriptor for image copy detection},
  author={Pizzi, Ed and Roy, Sreya Dutta and Ravindra, Sugosh Nagavara and Goyal, Priya and Douze, Matthijs},
  booktitle={CVPR},
  year={2022}
}

@article{ho2020denoising,
  title={Denoising diffusion probabilistic models},
  author={Ho, Jonathan and Jain, Ajay and Abbeel, Pieter},
  journal={NeurIPS},
  year={2020}
}

@article{song2020denoising,
  title={Denoising diffusion implicit models},
  author={Song, Jiaming and Meng, Chenlin and Ermon, Stefano},
  journal={ICLR},
  year={2020}
}

@inproceedings{seo2024generative,
  title={Generative Unlearning for Any Identity},
  author={Seo, Juwon and Lee, Sung-Hoon and Lee, Tae-Young and Moon, Seungjun and Park, Gyeong-Moon},
  booktitle={CVPR},
  year={2024}
}

@inproceedings{malnick2024taming,
  title={Taming Normalizing Flows},
  author={Malnick, Shimon and Avidan, Shai and Fried, Ohad},
  booktitle={WACV},
  year={2024}
}

@inproceedings{fan2024salun,
  title={SalUn: Empowering Machine Unlearning via Gradient-Based Weight Saliency in Both Image Classification and Generation},
  author={Fan, Chongyu and Liu, Jiancheng and Zhang, Yihua and Wei, Dennis and Wong, Eric and Liu, Sijia},
  booktitle={ICLR},
  year={2024}
}

@inproceedings{gandikota2023erasing,
  title={Erasing concepts from diffusion models},
  author={Gandikota, Rohit and Materzynska, Joanna and Fiotto-Kaufman, Jaden and Bau, David},
  booktitle={ICCV},
  year={2023}
}

@inproceedings{somepalli2023diffusion,
  title={Diffusion art or digital forgery? investigating data replication in diffusion models},
  author={Somepalli, Gowthami and Singla, Vasu and Goldblum, Micah and Geiping, Jonas and Goldstein, Tom},
  booktitle={CVPR},
  year={2023}
}

@inproceedings{wen2024detecting,
  title={Detecting, explaining, and mitigating memorization in diffusion models},
  author={Wen, Yuxin and Liu, Yuchen and Chen, Chen and Lyu, Lingjuan},
  booktitle={ICLR},
  year={2024}
}

@inproceedings{zhang2018unreasonable,
  title={The unreasonable effectiveness of deep features as a perceptual metric},
  author={Zhang, Richard and Isola, Phillip and Efros, Alexei A and Shechtman, Eli and Wang, Oliver},
  booktitle={CVPR},
  year={2018}
}

@inproceedings{golatkar2020eternal,
  title={Eternal sunshine of the spotless net: Selective forgetting in deep networks},
  author={Golatkar, Aditya and Achille, Alessandro and Soatto, Stefano},
  booktitle={CVPR},
  year={2020}
}

@inproceedings{huang2024learning,
  title={Learning to unlearn for robust machine unlearning},
  author={Huang, Mark He and Foo, Lin Geng and Liu, Jun},
  booktitle={ECCV},
  year={2024}
}

@article{yu2020gradient,
  title={Gradient surgery for multi-task learning},
  author={Yu, Tianhe and Kumar, Saurabh and Gupta, Abhishek and Levine, Sergey and Hausman, Karol and Finn, Chelsea},
  journal={NeurIPS},
  year={2020}
}

@article{somepalli2023understanding,
  title={Understanding and mitigating copying in diffusion models},
  author={Somepalli, Gowthami and Singla, Vasu and Goldblum, Micah and Geiping, Jonas and Goldstein, Tom},
  journal={NeurIPS},
  year={2023}
}

@article{chen2024extracting,
  title={Extracting Training Data from Unconditional Diffusion Models},
  author={Chen, Yunhao and Ma, Xingjun and Zou, Difan and Jiang, Yu-Gang},
  journal={arXiv preprint arXiv:2406.12752},
  year={2024}
}

@article{karras2019style,
  title={A Style-Based Generator Architecture for Generative Adversarial Networks},
  author={Karras, Tero},
  journal={arXiv preprint arXiv:1812.04948},
  year={2019}
}

@inproceedings{sohl2015deep,
  title={Deep Unsupervised Learning using Nonequilibrium Thermodynamics},
  author={Sohl-Dickstein, Jascha and Weiss, Eric A and Maheswaranathan, Niru and Ganguli, Surya},
  booktitle={Proceedings of the 32nd International Conference on Machine Learning (ICML)},
  pages={2256--2265},
  year={2015}
}

@article{heng2023selective,
  title={Selective amnesia: A continual learning approach to forgetting in deep generative models},
  author={Heng, Alvin and Soh, Harold},
  journal={NeurIPS},
  year={2023}
}

@article{silas2024data,
 title={Data unlearning in diffusion models},
 author={Alberti, Silas and Hasanaliyev, Kenan and Shah, Manav and Ermon, Stefano},
 journal={ICLR},
 year={2025}
}

@article{
meng2022sdedit,
title={{SDE}dit: Guided Image Synthesis and Editing with Stochastic Differential Equations},
author={Chenlin Meng and Yutong He and Yang Song and Jiaming Song and Jiajun Wu and Jun-Yan Zhu and Stefano Ermon},
journal={ICLR},
year={2022}
}

@article{
karras2018progressive,
title={Progressive Growing of {GAN}s for Improved Quality, Stability, and Variation},
author={Tero Karras and Timo Aila and Samuli Laine and Jaakko Lehtinen},
journal={ICLR},
year={2018}
}

@article{wu2024erasediff,
  title={Erasediff: Erasing data influence in diffusion models},
  author={Wu, Jing and Le, Trung and Hayat, Munawar and Harandi, Mehrtash},
  journal={arXiv preprint arXiv:2401.05779},
  year={2024}
}

@article{lu2024mace,
  title={Mace: Mass concept erasure in diffusion models},
  author={Lu, Shilin and Wang, Zilan and Li, Leyang and Liu, Yanzhu and Kong, Adams Wai-Kin},
  journal={CVPR},
  year={2024}
}

@article{gong2024reliable,
  title={Reliable and Efficient Concept Erasure of Text-to-Image Diffusion Models},
  author={Gong, Chao and Chen, Kai and Wei, Zhipeng and Chen, Jingjing and Jiang, Yu-Gang},
  journal={ECCV},
  year={2024}
}

@article{huang2023receler,
  title={Receler: Reliable concept erasing of text-to-image diffusion models via lightweight erasers},
  author={Huang, Chi-Pin and Chang, Kai-Po and Tsai, Chung-Ting and Lai, Yung-Hsuan and Wang, Yu-Chiang Frank},
  journal={ECCV},
  year={2024}
}

@article{aldahoul2024ai,
  title={AI-generated faces free from racial and gender stereotypes},
  author={AlDahoul, Nouar and Rahwan, Talal and Zaki, Yasir},
  journal={arXiv preprint arXiv:2402.01002},
  year={2024}
}

@article{hu2022lora,
  title={Lo{RA}: Low-rank adaptation of large language models},
  author={Hu, Edward J and Shen, Yelong and Wallis, Phillip and Allen-Zhu, Zeyuan and Li, Yuanzhi and Wang, Shean and Wang, Lu and Chen, Weizhu},
  journal={ICLR},
  year={2022}
}

@article{lyu2024one,
  title={One-dimensional Adapter to Rule Them All: Concepts Diffusion Models and Erasing Applications},
  author={Lyu, Mengyao and Yang, Yuhong and Hong, Haiwen and Chen, Hui and Jin, Xuan and He, Yuan and Xue, Hui and Han, Jungong and Ding, Guiguang},
  journal={CVPR},
  year={2024}
}

@article{liu2015faceattributes,
  title = {Deep Learning Face Attributes in the Wild},
  author = {Liu, Ziwei and Luo, Ping and Wang, Xiaogang and Tang, Xiaoou},
  journal = {ICCV},
  year = {2015} 
}

@article{cheong2024investigating,
  title={Investigating gender and racial biases in DALL-E Mini Images},
  author={Cheong, Marc and Abedin, Ehsan and Ferreira, Marinus and Reimann, Ritsaart and Chalson, Shalom and Robinson, Pamela and Byrne, Joanne and Ruppanner, Leah and Alfano, Mark and Klein, Colin},
  journal={ACM Journal on Responsible Computing},
  year={2024},
}

@article{golub1979generalized,
  title={Generalized cross-validation as a method for choosing a good ridge parameter},
  author={Golub, Gene H and Heath, Michael and Wahba, Grace},
  journal={Technometrics},
  volume={21},
  number={2},
  pages={215--223},
  year={1979},
  publisher={Taylor \& Francis}
}

@article{
park2024direct,
title={Direct Unlearning Optimization for Robust and Safe Text-to-Image Models},
author={Yong-Hyun Park and Sangdoo Yun and Jin-Hwa Kim and Junho Kim and Geonhui Jang and Yonghyun Jeong and Junghyo Jo and Gayoung Lee},
journal={NeurIPS},
year={2024},
}

@article{zhang2024forget,
  title={Forget-me-not: Learning to forget in text-to-image diffusion models},
  author={Zhang, Gong and Wang, Kai and Xu, Xingqian and Wang, Zhangyang and Shi, Humphrey},
  journal={CVPR},
  year={2024}
}

@article{kumari2023ablating,
  title={Ablating concepts in text-to-image diffusion models},
  author={Kumari, Nupur and Zhang, Bingliang and Wang, Sheng-Yu and Shechtman, Eli and Zhang, Richard and Zhu, Jun-Yan},
  journal={ICCV},
  year={2023}
}

@article{gandikota2024unified,
  title={Unified concept editing in diffusion models},
  author={Gandikota, Rohit and Orgad, Hadas and Belinkov, Yonatan and Materzy{\'n}ska, Joanna and Bau, David},
  journal={WACV},
  year={2024}
}

@article{zhang2024defensive,
  title={Defensive Unlearning with Adversarial Training for Robust Concept Erasure in Diffusion Models},
  author={Zhang, Yimeng and Chen, Xin and Jia, Jinghan and Zhang, Yihua and Fan, Chongyu and Liu, Jiancheng and Hong, Mingyi and Ding, Ke and Liu, Sijia},
  journal={NeurIPS},
  year={2024}
}

@article{lee2025concept,
  title={Concept Pinpoint Eraser for Text-to-image Diffusion Models via Residual Attention Gate},
  author={Lee, Byung Hyun and Lim, Sungjin and Lee, Seunggyu and Kang, Dong Un and Chun, Se Young},
  journal={ICLR},
  year={2025}
}

@article{tsai2024ring,
  title={Ring-A-Bell! How Reliable are Concept Removal Methods For Diffusion Models?},
  author={Tsai, Yu-Lin and Hsu, Chia-yi and Xie, Chulin and Lin, Chih-hsun and Chen, Jia You and Li, Bo and Chen, Pin-Yu and Yu, Chia-Mu and Huang, Chun-ying},
  journal={ICLR},
  year={2024}
}

@article{phamcircumventing,
  title={Circumventing Concept Erasure Methods For Text-To-Image Generative Models},
  author={Pham, Minh and Marshall, Kelly O and Cohen, Niv and Mittal, Govind and Hegde, Chinmay},
  journal={ICLR},
  year={2024}
}

@article{zhang2023generate,
  title={To generate or not? safety-driven unlearned diffusion models are still easy to generate unsafe images... for now},
  author={Zhang, Yimeng and Jia, Jinghan and Chen, Xin and Chen, Aochuan and Zhang, Yihua and Liu, Jiancheng and Ding, Ke and Liu, Sijia},
  journal={ECCV},
  year={2024}
}

@article{kirkpatrick2017overcoming,
  title={Overcoming catastrophic forgetting in neural networks},
  author={Kirkpatrick, James and Pascanu, Razvan and Rabinowitz, Neil and Veness, Joel and Desjardins, Guillaume and Rusu, Andrei A and Milan, Kieran and Quan, John and Ramalho, Tiago and Grabska-Barwinska, Agnieszka and others},
  journal={Proceedings of the national academy of sciences},
  year={2017},
}

@article{rolnick2019experience,
  title={Experience replay for continual learning},
  author={Rolnick, David and Ahuja, Arun and Schwarz, Jonathan and Lillicrap, Timothy and Wayne, Gregory},
  journal={NeurIPS},
  year={2019}
}

@article{chengrowth,
  title={Growth inhibitors for suppressing inappropriate image concepts in diffusion models},
  author={Chen, Die and Li, Zhiwen and Fan, Mingyuan and Chen, Cen and Zhou, Wenmeng and Wang, Yanhao and Li, Yaliang},
  journal={ICLR},
  year={2025}
}

@article{meng2025concept,
  title={Concept Corrector: Erase concepts on the fly for text-to-image diffusion models},
  author={Meng, Zheling and Peng, Bo and Jin, Xiaochuan and Lyu, Yueming and Wang, Wei and Dong, Jing},
  journal={arXiv preprint arXiv:2502.16368},
  year={2025}
}

@article{buierasing,
  title={Erasing Undesirable Concepts in Diffusion Models with Adversarial Preservation},
  author={Bui, Anh Tuan and Vuong, Long Tung and Doan, Khanh and Le, Trung and Montague, Paul and Abraham, Tamas and Phung, Dinh},
  journal={NeurIPS},
  year={2024}
}

@article{yoonsafree,
  title={SAFREE: Training-Free and Adaptive Guard for Safe Text-to-Image And Video Generation},
  author={Yoon, Jaehong and Yu, Shoubin and Patil, Vaidehi and Yao, Huaxiu and Bansal, Mohit},
  journal={ICLR},
  year={2025}
}

@article{thakral2025fine,
  title={Fine-Grained Erasure in Text-to-Image Diffusion-based Foundation Models},
  author={Thakral, Kartik and Glaser, Tamar and Hassner, Tal and Vatsa, Mayank and Singh, Richa},
  journal={arXiv preprint arXiv:2503.19783},
  year={2025}
}

@article{fantastic,
  title={Fantastic Targets for Concept Erasure in Diffusion Models and Where To Find Them},
  author={Bui, Anh and Vu, Trang and Vuong, Long and Le, Trung and Montague, Paul and Abraham, Tamas and Kim, Junae and Phung, Dinh},
  journal={ICLR},
  year={2025}
}

@article{nichol2022glide,
  title={{GLIDE}: Towards Photorealistic Image Generation and Editing with Text-Guided Diffusion Models},
  author={Nichol, Alexander Quinn and Dhariwal, Prafulla and Ramesh, Aditya and Shyam, Pranav and Mishkin, Pamela and Mcgrew, Bob and Sutskever, Ilya and Chen, Mark},
  journal={ICML},
  year={2022},
}

@article{chang2023muse,
  title={Muse: Text-To-Image Generation via Masked Generative Transformers},
  author={Chang, Huiwen and Zhang, Han and Barber, Jarred and Maschinot, Aaron and Lezama, Jose and Jiang, Lu and Yang, Ming-Hsuan and Murphy, Kevin Patrick and Freeman, William T and Rubinstein, Michael and others},
  journal={ICML},
  year={2023},
}

@article{zhang2023adding,
  title={Adding conditional control to text-to-image diffusion models},
  author={Zhang, Lvmin and Rao, Anyi and Agrawala, Maneesh},
  journal={ICCV},
  year={2023}
}

@article{peebles2023scalable,
  title={Scalable diffusion models with transformers},
  author={Peebles, William and Xie, Saining},
  journal={ICCV},
  year={2023}
}

@article{hongcogvideo,
  title={CogVideo: Large-scale Pretraining for Text-to-Video Generation via Transformers},
  author={Hong, Wenyi and Ding, Ming and Zheng, Wendi and Liu, Xinghan and Tang, Jie},
  journal={ICLR},
  year={2023}
}

@article{blattmann2023stable,
  title={Stable video diffusion: Scaling latent video diffusion models to large datasets},
  author={Blattmann, Andreas and Dockhorn, Tim and Kulal, Sumith and Mendelevitch, Daniel and Kilian, Maciej and Lorenz, Dominik and Levi, Yam and English, Zion and Voleti, Vikram and Letts, Adam and others},
  journal={arXiv preprint arXiv:2311.15127},
  year={2023}
}

@article{guoanimatediff,
  title={{AnimateDiff}: Animate Your Personalized Text-to-Image Diffusion Models without Specific Tuning},
  author={Guo, Yuwei and Yang, Ceyuan and Rao, Anyi and Liang, Zhengyang and Wang, Yaohui and Qiao, Yu and Agrawala, Maneesh and Lin, Dahua and Dai, Bo},
  journal={ICLR},
  year={2024}
}

@article{chen2024videocrafter2,
  title={Videocrafter2: Overcoming data limitations for high-quality video diffusion models},
  author={Chen, Haoxin and Zhang, Yong and Cun, Xiaodong and Xia, Menghan and Wang, Xintao and Weng, Chao and Shan, Ying},
  journal={CVPR},
  year={2024}
}

@article{wan2025wan,
  title={Wan: Open and advanced large-scale video generative models},
  author={Wan, Team and Wang, Ang and Ai, Baole and Wen, Bin and Mao, Chaojie and Xie, Chen-Wei and Chen, Di and Yu, Feiwu and Zhao, Haiming and Yang, Jianxiao and others},
  journal={arXiv preprint arXiv:2503.20314},
  year={2025}
}

@misc{rombach2022stable2.0,
  title={Stable diffusion 2.0 release},
  author={Rombach, Robin},
  howpublished = {\url{https://stability.ai/news/stable-diffusion-v2-release}},
  year={2022}
}

@article{esser2024scaling,
  title={Scaling rectified flow transformers for high-resolution image synthesis},
  author={Esser, Patrick and Kulal, Sumith and Blattmann, Andreas and Entezari, Rahim and M{\"u}ller, Jonas and Saini, Harry and Levi, Yam and Lorenz, Dominik and Sauer, Axel and Boesel, Frederic and others},
  journal={ICML},
  year={2024}
}

@article{rando2022red,
  title={Red-Teaming the Stable Diffusion Safety Filter},
  author={Rando, Javier and Paleka, Daniel and Lindner, David and Heim, Lennart and Tramer, Florian},
  journal={NeurIPS ML Safety Workshop},
  year={2022}
}

@misc{laborde2020nsfw,
  title={Nsfw detection machine learning model},
  author={Laborde, Gant},
  howpublished = {\url{https://github.com/GantMan/nsfw_model}},
  year={2020}
}

@misc{dalle2preview2022,
  title={{DALL-E} 2 preview - risks and limitations},
  author={OpenAI},
  year={2022}
}

@article{hunter2023ai,
  title={{AI} porn is easy to make now. For women, that's a nightmare.},
  author={Hunter, Tatum},
  journal={The Washington Post},
  year={2023},
}

@article{schramowski2023safe,
  title={Safe latent diffusion: Mitigating inappropriate degeneration in diffusion models},
  author={Schramowski, Patrick and Brack, Manuel and Deiseroth, Bj{\"o}rn and Kersting, Kristian},
  journal={CVPR},
  year={2023}
}

@article{guangyuanrecon,
 title={Recon: Reducing Conflicting Gradients From the Root For Multi-Task Learning},
 author={Guangyuan, SHI and Li, Qimai and Zhang, Wenlong and Chen, Jiaxin and Wu, Xiao-Ming},
 journal={ICLR},
 year={2023}
}

@article{jin2025unlearning,
  title={Unlearning as multi-task optimization: A normalized gradient difference approach with an adaptive learning rate},
  author={Jin, Xiaomeng and Bu, Zhiqi and Amazon, AGI and Vinzamuri, Bhanukiran and Ramakrishna, Anil and Chang, Kai-Wei and Cevher, Volkan and Hong, Mingyi},
journal={NAACL},
  year={2025}
}

@misc{EU2016GDPR,
  title        = {{Regulation (EU) 2016/679 of the European Parliament and of the Council of 27 April 2016 on the Protection of Natural Persons with Regard to the Processing of Personal Data and on the Free Movement of Such Data (General Data Protection Regulation)}},
  howpublished = "{Official Journal of the European Union, L 119, pp. 1–88}",
  year         = {2016},
  month        = {May},
  note         = {Available at: \url{https://eur-lex.europa.eu/eli/reg/2016/679/oj}},
  institution  = {European Parliament and the Council of the European Union}
}

@article{patel2025learning,
  title={Learning to unlearn while retaining: Combating gradient conflicts in machine unlearning},
  author={Patel, Gaurav and Qiu, Qiang},
  journal={arXiv preprint arXiv:2503.06339},
  year={2025}
}

@article{shamsian2025go,
  title={Go Beyond Your Means: Unlearning with Per-Sample Gradient Orthogonalization},
  author={Shamsian, Aviv and Shaar, Eitan and Navon, Aviv and Chechik, Gal and Fetaya, Ethan},
  journal={arXiv preprint arXiv:2503.02312},
  year={2025}
}

@article{zhang2024forgetting,
  title={Forgetting and remembering are both you need: Balanced graph structure unlearning},
  author={Zhang, Chenhan and Wang, Weiqi and Tian, Zhiyi and Yu, Shui},
  journal={IEEE Transactions on Information Forensics and Security},
  year={2024},
  publisher={IEEE}
}

@article{liu2025privacy,
  title={Privacy-preserving federated unlearning with certified client removal},
  author={Liu, Ziyao and Ye, Huanyi and Jiang, Yu and Shen, Jiyuan and Guo, Jiale and Tjuawinata, Ivan and Lam, Kwok-Yan},
  journal={IEEE Transactions on Information Forensics and Security},
  year={2025},
  publisher={IEEE}
}

@article{wang2023machine,
  title={Machine unlearning via representation forgetting with parameter self-sharing},
  author={Wang, Weiqi and Zhang, Chenhan and Tian, Zhiyi and Yu, Shui},
  journal={IEEE Transactions on Information Forensics and Security},
  year={2023},
  publisher={IEEE}
}

@article{chundawat2023zero,
  title={Zero-shot machine unlearning},
  author={Chundawat, Vikram S and Tarun, Ayush K and Mandal, Murari and Kankanhalli, Mohan},
  journal={IEEE Transactions on Information Forensics and Security},
  volume={18},
  pages={2345--2354},
  year={2023},
  publisher={IEEE}
}

@article{jeong2025upsample,
 author = {Jeong, Wongi and Lee, Kyungryeol and Seo, Hoigi and Chun, Se Young},
 journal = {arXiv preprint arXiv:2507.08422},
 title = {Upsample what matters: Region-adaptive latent sampling for accelerated diffusion transformers},
 year = {2025}
}

@article{ma2024deepcache,
 author = {Ma, Xinyin and Fang, Gongfan and Wang, Xinchao},
 journal = {CVPR},
 title = {{Deepcache}: Accelerating diffusion models for free},
 year = {2024}
}

\newpage

\section*{Biography Section}
 

\vspace{-15pt}
\begin{IEEEbiography}[{\includegraphics[width=1in,height=1.25in,clip,keepaspectratio]{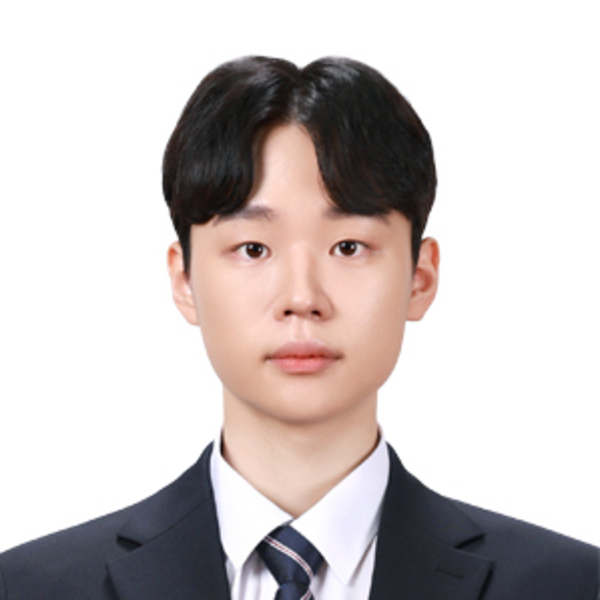}}]{Kyungryeol Lee} received his B.S. degrees in electrical and computer engineering (ECE) from Seoul National University (SNU), Seoul, South Korea, in 2024. He is currently pursuing an integrated M.S./Ph.D. degree in electrical and computer engineering (ECE) at Seoul National University. His research interests include deep generative models and efficient AI.
\end{IEEEbiography}
\vspace{-15pt}
\begin{IEEEbiography}[{\includegraphics[width=1in,height=1.25in,clip,keepaspectratio]{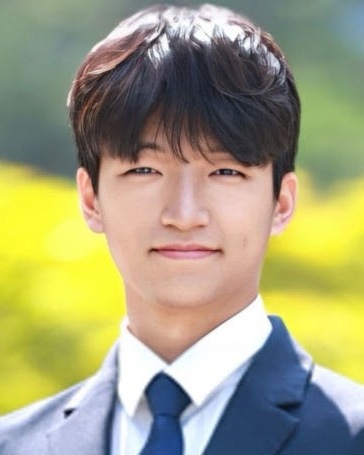}}]{Kyeonghyun Lee} received his B.S. degrees in physics and in electrical and computer engineering (ECE) from Seoul National University (SNU), Seoul, South Korea, in 2024. He is currently pursuing an integrated M.S./Ph.D. degree in electrical and computer engineering (ECE) at Seoul National University. His research interests include image generative models and imaging systems.
\end{IEEEbiography}
\vspace{-15pt}
\begin{IEEEbiography}[{\includegraphics[width=1in,height=1.25in,clip,keepaspectratio]{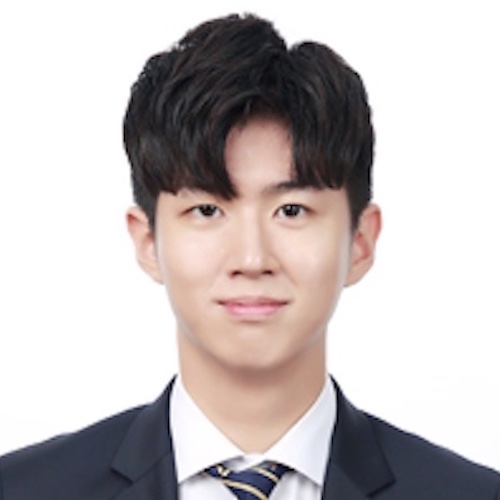}}]{Seongmin Hong}
 received the B.S. and Ph.D. degrees in electrical and computer engineering from Seoul National University (SNU), Seoul, South Korea, in 2020 and 2025, respectively. He received the Distinguished Ph.D. Dissertation Award in Communications and Signal Processing.
He is currently a Postdoctoral Researcher with the Institute of New Media and Communications, Seoul National University. His research interests primarily focus on deep generative models (especially diffusion models and their inversion), optimization algorithms, and computational imaging systems (such as MRI and automotive radar).
\end{IEEEbiography}
\vspace{-15pt}
\begin{IEEEbiography} [{\includegraphics[width=1in,height=1.25in,clip,keepaspectratio]{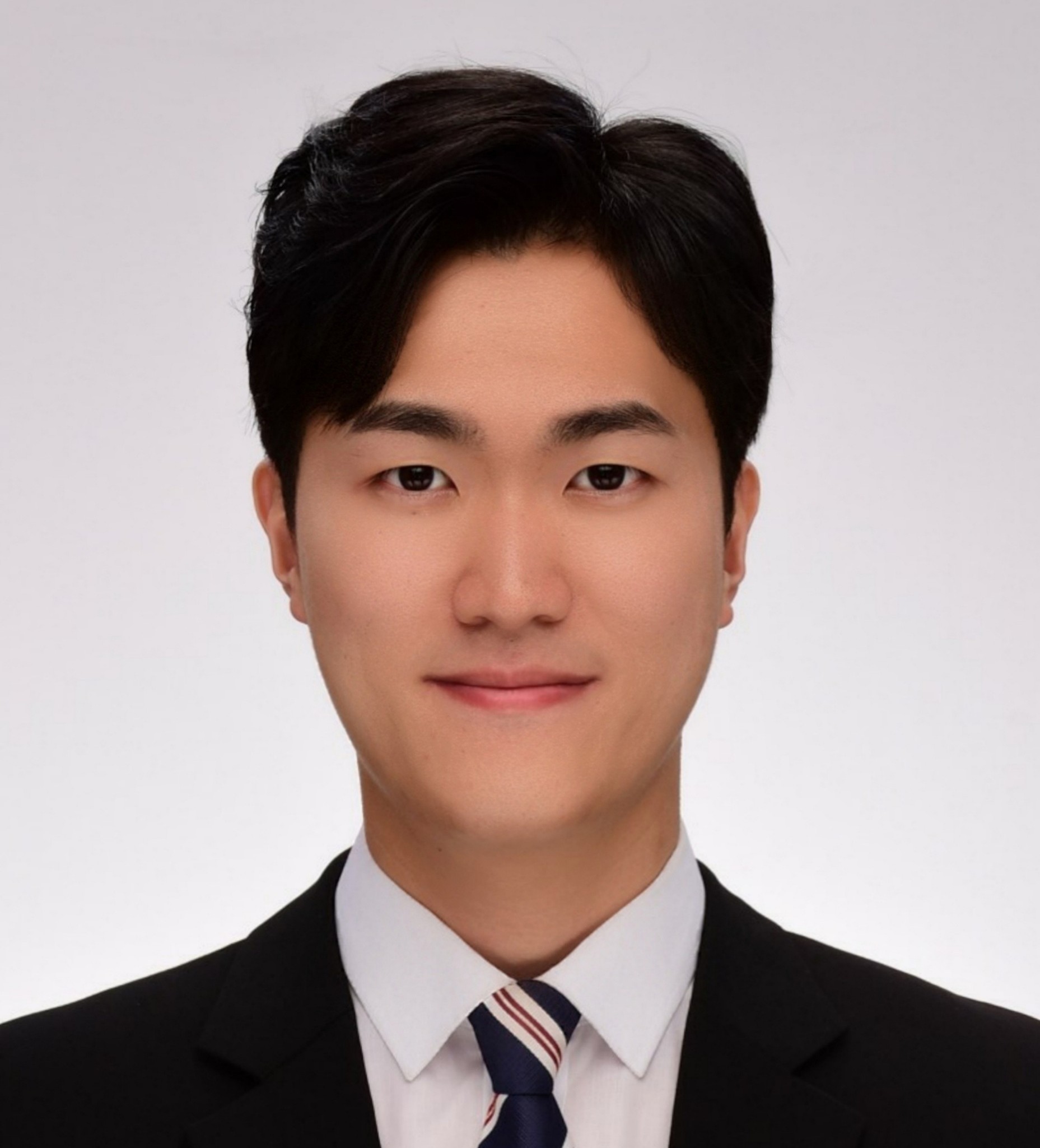}}]{Byung Hyun Lee} 
received his B.S. degree in ECE from Ulsan National Institute of Science and Technology (UNIST), Ulsan, South Korea, in 2018.
He is currently pursuing an integrated M.S./Ph.D. in ECE at Seoul National University, Seoul, South Korea.
His research interests include continual learning and unlearning in machine learning and its application to various computer vision tasks.
\end{IEEEbiography}
\vspace{-15pt}
\begin{IEEEbiography}[{\includegraphics[width=1in,height=1.25in,clip,keepaspectratio]{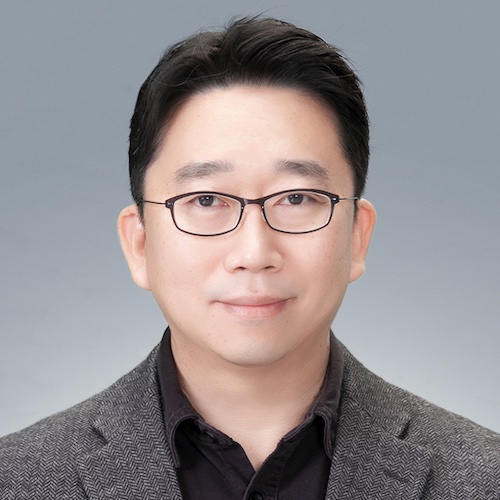}}]{Se Young Chun} received his B.S.E. degree in electrical engineering (EE) from Seoul National University in 1999 and his Ph.D. degree in EE: systems from the University of Michigan - Ann Arbor in 2009. He was a research fellow at Harvard Medical School and also a research fellow at the University of Michigan - Ann Arbor. He has been with UNIST from 2013 to 2021 as an Assistant / Associate Professor in EE / AI. Since 2021, he has joined the Department of Electrical and Computer Engineering, Seoul National University, Seoul, South Korea, as an Associate Professor and now he is currently a Full Professor with tenure. He is a senior area editor of IEEE Transactions on Computational Imaging, a member of IEEE Bio Imaging and Signal Processing Technical Committee and an IEEE Biometrics Council Representative of IEEE Vehicular Technology Society. He was the recipient of the 2015 Bruce Hasegawa Young Investigator Medical Imaging Science Award from the IEEE Nuclear and Plasma Sciences Society. His research interests include computational imaging algorithms, generative diffusion models, multimodal foundation models for the applications in imaging, and computer vision for robotics. 
\end{IEEEbiography}

\vspace{11pt}


\vfill

\end{document}